\theoremstyle{plain}
\newtheorem{theorem}{Theorem}[section]
\newtheorem{proposition}[theorem]{Proposition}
\newtheorem{lemma}[theorem]{Lemma}
\newtheorem{corollary}[theorem]{Corollary}
\theoremstyle{definition}
\newtheorem{definition}[theorem]{Definition}
\newtheorem{assumption}[theorem]{Assumption}
\theoremstyle{remark}
\newcommand{\OLD}[1]{}
\newcommand{\histset}{\Omega}
\newcommand{\actionset}{\mathcal{A}}
\newcommand{\informationgenerator}{\sigma_t}
\newcommand{\infstatespace}{\mathcal{Z}}
\newcommand{\infstate}{Z}
\DeclareMathOperator*{\argmax}{argmax}
\newcommand{\rewardfunc}{\mathcal{R}}
\begin{document}

\twocolumn[
\icmltitle{ContraBAR: Contrastive Bayes-Adaptive Deep RL}
    \begin{icmlauthorlist}
\icmlauthor{Era Choshen}{technion}
\icmlauthor{Aviv Tamar}{technion}
\end{icmlauthorlist}

\icmlaffiliation{technion}{Technion, Haifa, Israel}
\icmlcorrespondingauthor{Era Choshen}{erachoshen@campus.technion.ac.il}
\vskip 0.3in
]
\printAffiliationsAndNotice{}

\begin{abstract}
In meta reinforcement learning (meta RL), an agent seeks a Bayes-optimal policy -- the optimal policy when facing an unknown task that is sampled from some known task distribution. Previous approaches tackled this problem by inferring a \textit{belief} over task parameters, using variational inference methods. Motivated by recent successes of contrastive learning approaches in RL, such as contrastive predictive coding (CPC), we investigate whether contrastive methods can be used for learning Bayes-optimal behavior. We begin by proving that representations learned by CPC are indeed sufficient for Bayes optimality. 
Based on this observation, we propose a simple meta RL algorithm that uses CPC in lieu of variational belief inference. 
Our method, \textit{ContraBAR}, achieves comparable performance to state-of-the-art in domains with state-based observation and circumvents the computational toll of future observation reconstruction, enabling learning in domains with image-based observations. It can also be combined with image augmentations for domain randomization and used seamlessly in both online and offline meta RL settings.
\end{abstract}

\section{Introduction}

In meta reinforcement learning (meta RL), an agent learns from a set of training tasks how to quickly solve a new task, sampled from a similar distribution as the training set~\cite{finn2017model,duan2016rl}. A formal setting for meta RL is based on the Bayesian RL formulation, where a task corresponds to a particular Markov decision process (MDP), and there exists some \textit{prior} distribution over MDPs~\cite{humplik2019meta,zintgraf2020varibad,rakelly2019efficient}. Under this setting, the \textit{optimal} meta RL policy is well defined, and is often referred to as a \textit{Bayes-optimal} policy~\cite{ghavamzadeh2016bayesian}. 

In contrast to the single MDP setting, where an optimal policy can be Markovian -- taking as input the current state and outputting the next action, the Bayes-optimal policy must take as input \textit{the whole history} of past states, actions, and rewards, or some sufficient statistic of it~\cite{bertsekas1995dynamic}. A popular sufficient statistic is the \textit{belief} -- the posterior probability of the MDP parameters given the observed history. For small MDPs, the belief may be inferred by directly applying Bayes rule, and approximate dynamic programming can be used to calculate an approximately Bayes-optimal policy~\cite{ghavamzadeh2016bayesian}. However, this approach quickly becomes intractable for large or continuous MDPs.
Recently, several studies proposed to scale up belief inference using deep learning, where the key idea is to leverage a variational autoencoder (VAE, \citealt{kingma2013auto}) formulation of the problem, in which the posterior is approximated using a recurrent neural network~\cite{zintgraf2020varibad,humplik2019meta}. While this approach has demonstrated impressive results on continuous control benchmarks~\cite{zintgraf2020varibad,humplik2019meta,dorfman2021offline}, it also has some limitations. Training a VAE is based on a \textit{reconstruction} loss, in this case, predicting the future observations given the current history, which can be difficult to optimize for visually rich observations such as images.
Furthermore, variational algorithms such as VariBAD~\citep{zintgraf2020varibad} reconstruct entire trajectories, restricting application to image-based domains due to memory limitations.


As an alternative to VAEs, contrastive learning has shown remarkable success in learning representations for various domains, including image recognition and speech processing~\cite{chen2020simple,fu2021scala,han2021supervised}. Rather than using a reconstruction loss, these approaches learn features that discriminate between similar observations and dissimilar ones, using a contrastive loss such as the InfoNCE in contrastive predictive coding (CPC, \citealt{oord2018representation}). Indeed, several recent studies showed that contrastive learning can learn useful representations for image based RL~\cite{laskin_srinivas2020curl,laskin2022cic}, outperforming representations learned using VAEs. Furthermore, \citet{guo2018neural} showed empirically that in partially observed MDPs, representations learned using CPC \citep{oord2018representation} are correlated with the belief. In this work, we further investigate contrastive learning for meta RL, henceforth termed \textit{CL meta RL}, and aim to establish it as a principled and advantageous alternative to the variational approach. 

Our first contribution is a proof that, given certain assumptions on data collection and the optimization process of CPC, representations learned using a variant of CPC are indeed a sufficient statistic for control, and therefore suffice  as input for a Bayes-optimal policy. Our second contribution is a bound on the suboptimality of a policy that uses an \textit{approximate} sufficient statistic, learned by CPC, in an iterative policy improvement scheme where policies between iterations are constrained to be similar. This result relaxes the assumptions on the optimization and data collection in the first proof.  Building on this result, we propose a simple meta RL algorithm that uses a CPC based representation to learn a sufficient statistic.
Our third contribution is an empirical evaluation of our method that exposes several advantages of the contrastive learning approach. In particular, we show that:
\begin{enumerate*}[label=(\arabic*)]
    \item For state-based observations, CL meta RL is on par with the state-of-the-art VariBAD~\citep{zintgraf2020varibad}
    \item For image-based observations, CL meta RL significantly outperforms the variational approach, and is competitive with RNN based methods~\cite{duan2016rl}
    \item In contrast to the variational approach, CL Meta RL is compatible with image augmentations and domain randomization. 
    \item Our method works well in the online and offline meta RL setting.
\end{enumerate*}
Overall, our results establish CL meta RL as a versatile and competitive approach to meta RL.

\section{Background and Problem Formulation}\label{section:background}
In this section we present our problem formulation and relevant background material.

\subsection{Meta RL and POMDPs}
We define a Markov Decision Process (MDP) \citep{bertsekas1995dynamic} as a tuple $\mathcal{M}=(\mathcal{S},\mathcal{A},\mathcal{P}, \rewardfunc)$, where $\mathcal{S}$ is the state space, $\mathcal{A}$ is the action space, $\mathcal{P}$ is the transition kernel and $\rewardfunc$ is the reward function. In meta RL, we assume a distribution over tasks, where each task is an MDP $\mathcal{M}_i=(\mathcal{S},\mathcal{A},\mathcal{P}_i,\rewardfunc_i)$, where the state and action spaces are shared across tasks, and $\mathcal{P}_i,\rewardfunc_i$ are task specific and drawn from a task distribution, which we denote $\mathcal{D}(\mathcal{P},\rewardfunc)$. At a given time $t$, we denote by {$(s_0, a_0,r_0, s_1,a_1,r_1,\dots,s_t)=h_t\in \histset_t$ the current history, where $\histset_t$ is the space of all state-action-reward histories until time $t$}. Our aim in meta RL is to find a policy $\pi = \left\{ \pi_0, \pi_1,\dots\right\}$, where $\pi_t:\histset_t \to \mathcal{A}$, which maximizes the following objective:
\[\mathbb{E}_{\pi}\left[\sum_{t=0}^{\infty} \gamma^t r_t\right],\]
where the expectation $\mathbb{E}_\pi$ is taken over the transitions $s_{t+1} \sim \mathcal{P}(\cdot|s_t,a_t)$, the reward $r_t = \rewardfunc(s_t, a_t)$, the actions $a_t \sim \pi(\cdot|h_t)$  and the uncertainty over the MDP parameters $\mathcal{P},\rewardfunc \sim \mathcal{D}(\mathcal{P},\rewardfunc)$. We assume a bounded  reward $r_t \in [-R_{max}, R_{max}], R_{max} > 0$ with probability one.

Meta RL is a special case of the more general Partially Observed Markov-Decision Process (POMDP), which is an extension of MDPs to partially observed states.
In the POMDP for meta RL, the unobserved variables are $\mathcal{P}, \rewardfunc$, and they do not change over time. We define $\histset_t$ for POMDPs as above, except that states are replaced by observations according to the distribution $o_{t+1} \sim U(o_{t+1}|s_{t+1},a_t)$. As shown in \citet{bertsekas1995dynamic}, the optimal policy for a POMDP can be calculated using backwards dynamic programming for every possible $h_t \in \histset_t$. However, as explained in \citet{bertsekas1995dynamic} this method is computationally intractable in most cases as $\histset_t$ grows exponentially with $t$.

\subsection{Information States and BAMDPs}

Instead of the intractable space of histories, \textit{sufficient statistics} can succinctly summarize all the necessary information for optimal control. One popular sufficient statistic is the posterior state distribution or \textit{belief} $P(s_t|h_t)$. Conditions for a function to be a sufficient statistic, also termed \textit{information state}, were presented by \citet{subramanian2020approximate} and are reiterated here for completeness:

\begin{definition}[Information State Generator] 
Let $\left\{\infstatespace_t\right\}_{t=1}^{T}$ be a pre-specified collection of Banach spaces. A collection $\left\{\informationgenerator: \histset_t \rightarrow \infstatespace_t \right \}_{t=1}^{T}$ of history compression functions is called an information generator if the process $\left \{ \infstate_t \right \}_{t=1}^T$ satisfies the following properties,  where $h_t \in \histset_t$, and $\informationgenerator(h_t)=\infstate_t \in \mathcal{Z}_t$:

\textbf{P1} For any time $t$ and for any $h_t \in \histset_t, a_t \in \actionset$ we have:
\[\mathbb{E}\left[r_t | h_t, a_t\right] = \mathbb{E}\left[r_t |\infstate_t=\informationgenerator(h_t), a_t \right].\]
\textbf{P2} For any time $t$, and for any $h_t \in \histset_t, a_t \in \actionset$, and any Borel subset $B$ of $Z_{t+1}$ we have:
\[P\left( B \in \infstate_{t+1} | h_t,a_t \right) = P \left( B \in \infstate_{t+1} | \infstate_t=\informationgenerator(h_t),a_t \right).\]
\end{definition}

Intuitively, information states compress the history without losing predictive power about the next reward, or the next information state.

To solve a POMDP, one can define a Bayes-Adaptive MDP (BAMDP)-- an MDP over the augmented state space of $\mathcal{S}\times \mathcal{B}$, where $\mathcal{B}=\{\mathcal{Z}_t \}_{t=1}^{T}$ is the space of the information state. This idea was introduced by \citet{duff2002optimal} for the belief. Here, we use the term BAMDP more generally, referring to any information state. The optimal policies for BAMDPs are termed Bayes-optimal and optimally trade-off between exploration and exploitation, which is essential for maximizing online return during learning. Unfortunately, in most cases computing the Bayes-optimal policy is intractable  because the augmented space is continuous and high-dimensional. \citet{zintgraf2020varibad} proposed to approximate the Bayes-optimal policy by using deep neural networks to learn an information state (belief), and conditioning an RL agent on the learned augmented space; here we follow this approach.


\section{Related Work}
Our focus in this work is learning a Bayes-optimal policy for meta RL. We recapitulate the current approaches to meta RL with a focus on approaches that potentially yield Bayes-optimal policies.

The methods in \cite{finn2017model, grant2018recasting, nichol2018gotta, rothfuss2018promp, clavera2018model} learn neural network policies that can quickly be fine-tuned to new tasks at test time via gradient updates. These methods do not optimize for Bayes-optimal behavior, and typically exhibit significantly suboptimal test-time adaptation.

A different approach is to learn an agent that directly infers the task at test time, and conditions the policy based on the inferred task. Typically, past interactions of the agent with the environment are aggregated to a latent representation of the task.
\citet{rakelly2019efficient, fakoor2019meta} follow a posterior-sampling approach, which is not Bayes-optimal~\cite{zintgraf2020varibad}; in this work we focus on methods that can achieve Bayes-optimality.
\citet{duan2016rl, wang2016learning} propose memory-based approaches, which \citet{ortega2019meta} proves to approximate Bayes-optimal agents. \citet{zintgraf2020varibad, zintgraf2021exploration, dorfman2021offline} also approximate Bayes-optimal agents with a history-based representation, using a variational approach. 
\citet{humplik2019meta} learn an approximately Bayes-optimal agent, where privileged information -- a task descriptor -- is used to learn a sufficient statistic.
We explore an alternative approach that lies at the intersection of meta RL and contrastive learning. Different from memory-based methods such as RL$^2$~\citep{duan2016rl}, and similarly to VariBAD, we learn a history based embedding \textit{separately} from the policy.
However, unlike variational methods, we learn the task representation using contrastive learning.

Contrastive learning has been used to learn representations for input to a meta RL policy. FOCAL~\citep{li2020focal} uses distance metric learning to learn a deterministic encoder of transition tuples to perform offline RL. They operate under the relatively restrictive assumption that each transition tuple $(s, a, s', r)$ is uniquely identified by a task. The authors followed up with FOCAL++,  in which batches of transition tuples (not necessarily from the same trajectory) are encoded to a representation that is optimized with MoCo \citep{he2020momentum}, a variant of CPC, alongside an intra-task attention mechanism meant to robustify task inference \citep{li2021provably}. The MBML method in \citep{li2020multitask} proposes an offline meta RL method that uses the triplet loss to learn embeddings of batches of transition tuples from the same task, with the same  probabilistic and permutation-invariant architecture of \citet{rakelly2019efficient}. \citet{wang2021improving} propose embedding  windows of transition tuples as probabilistic latent variables, where the windows are cropped from different trajectories. The embeddings are learned  with MoCO \citep{he2020momentum} by contrasting them in probabilistic metric space, where positive pairs are transition windows that come from the same batch. The algorithm is presented as a general method to learn representations for context-based meta RL algorithms, but in practice all results are shown with PEARL \citep{rakelly2019efficient}. In a similar line of work, \citet{fu2021towards} encode batches of transitions as a product of Gaussian factors and contrast the embeddings with MoCO \citep{he2020momentum}, with positive pairs being embedded transition batches from the same \textit{task}, as opposed to the same trajectory as in \citet{wang2021improving}. As in \citet{wang2021improving}, results are shown with a posterior sampling meta RL algorithm. While we also investigate contrastive learning for meta RL, we make an important distinction: all of the works above embed \textit{transition tuples and not histories}, and therefore cannot represent information states, and cannot obtain Bayes-optimal behavior. In contrast, in our work, we draw inspiration from \citet{guo2018neural}, who used a glass-box approach to empirically show that contrastive learning \textit{can} be used to learn the belief in a POMDP. We cast this idea in the Bayesian-RL formalism, and show both theoretically and empirically, that contrastive learning can be used to learn Bayes-optimal meta RL policies.

\section{Method}

\begin{figure*}[!h]
   \centering
  \vskip 0.2in
  \begin{center}
  \centerline{\includegraphics[width=\textwidth]{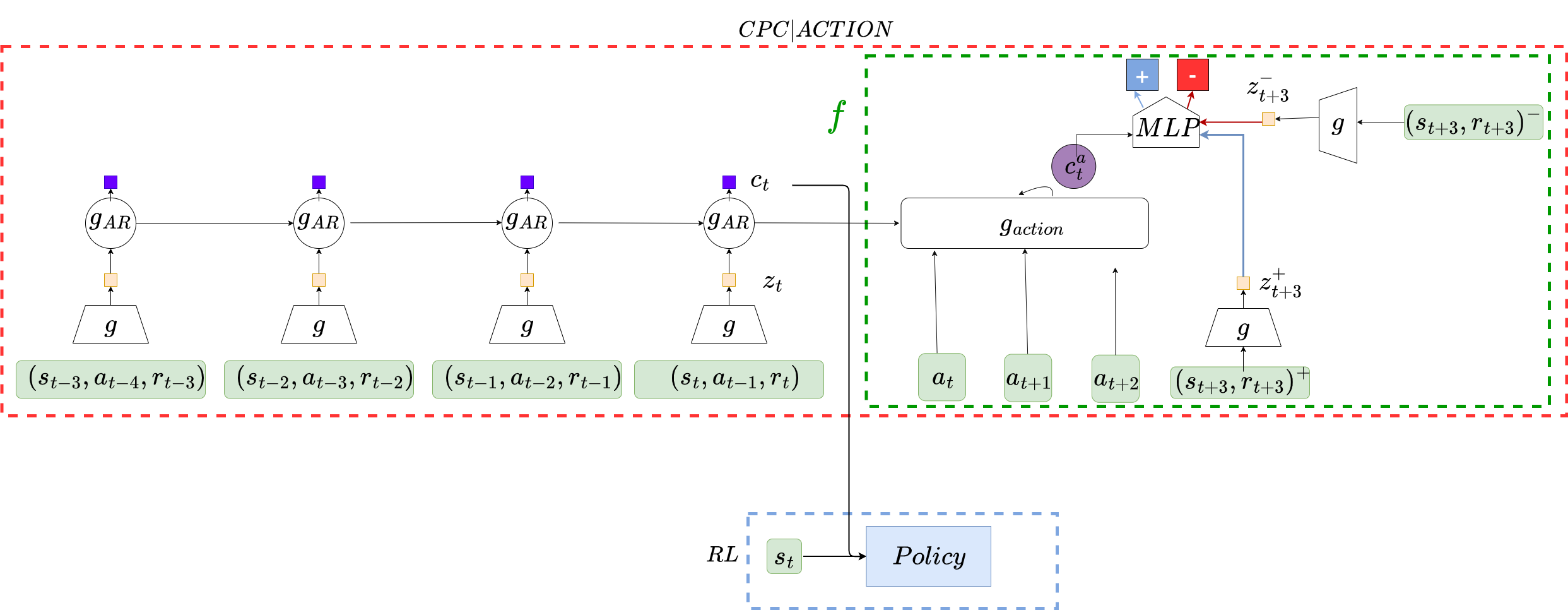}}
  \vspace{-1em}
  \caption{ContraBAR architecture. Red box: similarly to CPC, $g_{AR}$ is autoregressively applied to past observation embeddings ($z_t,z_{t-1},\dots$) to yield the current representation $c_t$. Green box: unlike CPC, the MLP $f$ receives as input, in addition to a positive/negative future embedding, the output of an action GRU that process $c_t$ and the future action sequence. Blue box: the representation $c_t$ is the information state, which is concatenated with the state $s_t$ and input to the RL policy.}\label{fig:contrabar_arch}
  \end{center}
\vskip -0.2in

\end{figure*}

In this section we show how to use contrastive learning to learn an information state representation of the history, and use it as input to an RL agent. We give a brief description of CPC~\citep{oord2018representation} followed by our meta RL algorithm. We then prove that our method does indeed learn an information state.
\subsection{Contrastive Predictive Coding}\label{subsection:cpc}

 CPC \citep{oord2018representation} is a contrastive learning method that uses noise contrastive estimation 
\citep{gutmann2010noise} to discriminate between positive future observations $o_{t+k}^{+}$, where $t$ is the current time step, and negative observations $o_{t+k}^{-}$. First, an encoder $g$ generates an embedding for each observation in a sequence of observations from a trajectory $\tau$ until time $t$, $\left \{z_i=g(o_i)\right\}_{i=1}^{t}$. Second, an autoregressive model $g_{AR}$ summarizes $z_{\leq t}$, the past $t$ observations in latent space, and outputs a latent $c_t$. The model is trained to discriminate between future observations  $o_{t+k}^{+}$ and $K$ negative observations $\left \{o_{t+k}^{-,i}\right\}_{i=1}^{K}$ given $c_t$. Given a set $X=\{o_{t+k}^{+},o_{t+k}^{-,1},\dots,o_{t+k}^{-,K}\}$ containing one positive future observation sampled according to $P(o_{t+k}^{+}|c_t)$ and $  K$ negative observations sampled from a proposal distribution $P(o_{t+k}^{-})$, the InfoNCE loss is:

\[\mathcal{L}_{InfoNCE}=\]
\begin{small}
\[-\mathbb{E}_{X}
\left[\log\frac{\exp \left(f\left(c_t,o_{t+k}^{+}\right)\right)}{\exp \left(f\left(c_t,o_{t+k}^{+}\right)\right) + \sum_{i=1}^{K} \exp \left(f\left(c_t,o_{t+k}^{-,i}\right)\right)}\right],\] 
\end{small}
where $f$ is a learnable function that outputs a similarity score. The model components $f, g, g_{AR}$ are learned by optimizing the loss $\mathcal{L}_{InfoNCE}$.

\subsection{ContraBAR Algorithm}\label{subsection:algorithm}
We will now introduce our CPC based meta RL algorithm, which is depicted in \cref{fig:contrabar_arch}, and explain how CPC is used to learn a latent representation of the history.

We begin by noting that we use the term observation throughout the text, in line with \citet{oord2018representation}, however in our case the meaning is state, reward and action $o_{t} = \{s_{t},r_{t-1},a_{t-1}\}$ when talking about ``observation history", and state and reward $o_{t+k}=\{s_{t+k},r_{t+k-1}\}$ when talking about ``future observations". We would like to learn an embedding of the observation history, $c_t$, that will contain relevant information for decision making. The CPC formulation seems like a natural algorithm to do this -- for a given trajectory $\tau$ of length $T$ collected from some unknown MDP $\mathcal{M}$, we use an embedding of its observation history until time $t < T$,  $c_t$, to learn to discriminate between future observations from the trajectory $\tau$ and random observations from other trajectories $\tau_{j} \neq \tau$. This means that $c_t$ encodes relevant information for predicting the future system states, and consequently information regarding the MDP $\mathcal{M}$ from which $\tau$ was collected. 

The CPC formulation described above is based on predicting future states in an uncontrolled system without rewards. We now modify it to learn a sufficient statistic for meta RL. We assume that data is collected at each training iteration $m$ by some data collection policy $\pi_m$ and added to a replay buffer $\mathcal{D}=\{\tau_i\}_{i=1}^{N}$ containing trajectories from previous data collection policies $\{\pi_1,\dots,\pi_{m-1}\}$; we note that length of the trajectories may vary. {At each learning iteration, a batch of $M$ trajectories is sampled, and for each trajectory and time $t$ the negative observations are sampled from the remaining $M-1$ trajectories in the batch. As in CPC, we define $c_t$ to be a function of the observation history until time $t$, but we add to $f$ as input the future $k-1$ actions, as in a controlled system the future observation $o_{t+k}=(s_{t+k},r_{t+k-1})$ depends on the controls. Our $f$ can therefore now be written as
$f(c_t, o_{t+k},a_{t:t+k-1})$. We implement this modification as in \citet{guo2018neural} by means of an additional autoregressive component, a GRU $g_{action}$ that receives actions as input and takes $c_t$ as its initial hidden state.

Given the adjustments described above, each batch $B$ used as input to our algorithm contains the following:
\begin{enumerate*}[label=(\arabic*)]
    \item{ The observation history until time $t$ in some trajectory $\tau$}
    \item Future observations from time $t+k$ 
    \item Observations $o_{t+k}^{-}$ from the remaining $M-1$ trajectories sampled from $\mathcal{D}$
\end{enumerate*}.

We rewrite the InfoNCE loss for the meta RL setting explicitly; For ease of notation we mark $f(c_t, o_{t+k}^{+},a_{t:t+k-1})$ as $f^{+}$ and $f(c_t, o_{t+k}^{-,i},a_{t:t+k-1})$ as $f^{-,i}$: 
\begin{equation}\label{eq:metainfonce}
\mathcal{L}_{M}=
\mathbb{E}_{B}\left[\log\frac{\exp (f^{+})}{\exp(f^{+}) + \sum_{i=1}^{K} \exp \left(f^{-,i}\right)}\right],
\end{equation}
 
 where the expectation is over the batches of positive and negative observations sampled from $\mathcal{D}$, as described above.

\subsection{Learning Information States with CPC}\label{subsection:theory}
We now show that integrating contrastive learning with meta RL is a fundamentally sound idea. We shall prove that our algorithm presented in \cref{subsection:algorithm} learns a representation of the history that is an information state, 
by showing that the latent encoding satisfies the properties of an information state \textbf{P1, P2} as defined by \citet{subramanian2020approximate} and reiterated above in \cref{section:background}. 

We first define the notion of a ``possible'' history, which we use in Assumption \ref{ass:discrete_action}.
\begin{definition}[Possible history]
\label{def:possible_hist}
Let $P_M$ denote a probability distribution over MDPs and let $P_{m,\pi}(h_t)$ be the probability of observing history $h_t$ under policy $\pi$ in MDP $m$. We say that $h_t$ is a possible history if there exists a policy $\pi$ and an MDP $m$ such that $P_M(m) > 0$ and $P_{m,\pi}(h_t) > 0$.
\end{definition}
Next, we make the following assumption, which states that the policy collecting the data covers the state, reward and action space.
\begin{assumption}\label{ass:discrete_action}
    Let the length of the longest possible history be $T$. Let $h_t$, where  $t \leq T$, be a history and let $P_{\mathcal{D}}(h_t)$ denote the probability of observing a history in the data $\mathcal{D}$. If $h_t$ is a possible history, then $P_{\mathcal{D}}(h_t)>0$.
\end{assumption}
Assumption \ref{ass:discrete_action} is necessary to claim that the learned CPC representation is a sufficient statistic \textit{for every} possible history. In Section \ref{sec:approx_info_state} we discuss a relaxation of this assumption, using \textit{approximate} information states.
\begin{theorem}\label{thm_main}
Let Assumption \ref{ass:discrete_action} hold. Let $g^{*},g^{*}_{AR},f^{*}$ jointly minimize $ \mathcal{L}_{M}(g,g_{AR},f) $. Then the context latent representation
$c_t=g^{*}_{AR}(z_{\leq t})$ satisfies conditions \textbf{P1, P2} and is therefore an information state.
\end{theorem}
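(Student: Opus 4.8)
The plan is to first invoke the classical density-ratio characterization of the InfoNCE optimum, and then convert it into the two information-state conditions \textbf{P1} and \textbf{P2}. Fix any encoder pair $(g, g_{AR})$, which induces a representation $c_t = g_{AR}(z_{\le t})$. For this fixed representation, the only remaining freedom in $\mathcal{L}_M$ is the critic $f$, and the softmax over one positive and $K$ negatives is a categorical classification whose Bayes-optimal solution scores each candidate proportionally to its density ratio. As in \citet{oord2018representation}, the optimal critic $f^\ast$ therefore satisfies
\[\exp\!\bigl(f^\ast(c_t, o_{t+k}, a_{t:t+k-1})\bigr) \;\propto\; \frac{P(o_{t+k}\mid c_t, a_{t:t+k-1})}{P(o_{t+k})},\]
up to a factor depending only on $(c_t, a_{t:t+k-1})$ (the softmax is invariant to a context-dependent shift). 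Here $P(o_{t+k}\mid c_t, a_{t:t+k-1})$ is the true law of the future observation given the \emph{value} of the representation, and since $P(o_{t+k})$ is the shared marginal of the negatives it cancels; the normalization then pins down $P(\cdot\mid c_t, a_{t:t+k-1})$ as a proper distribution recoverable from $f^\ast$.

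Next I would substitute $f^\ast$ back, turning the attained loss into a functional of the representation alone. Because $c_t$ is a deterministic function of $h_t$, a data-processing argument (equivalently, non-negativity of the per-context $\mathrm{KL}\bigl(P(\cdot\mid h_t,a)\,\|\,P(\cdot\mid c_t,a)\bigr)$) shows this loss is bounded below by its value for the full history $h_t$, with equality iff $P(o_{t+k}\mid c_t, a_{t:t+k-1}) = P(o_{t+k}\mid h_t, a_{t:t+k-1})$. Under the realizability implicit in the hypothesis class (the full history, hence any sufficient statistic, is representable), the joint minimizer $(g^\ast, g^\ast_{AR}, f^\ast)$ attains this lower bound, and Assumption~\ref{ass:discrete_action} ensures every possible history occurs in $\mathcal{D}$ with positive probability, so the equality holds for \emph{every} possible $h_t$ rather than merely almost everywhere. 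Thus $c_t$ is a sufficient statistic for the future observation, i.e.\ $o_{t+k} \perp h_t \mid c_t, a_{t:t+k-1}$. I expect this step to be the main obstacle, since the density-ratio optimum is exact only in the infinite-negative limit: I would either prove directly that the optimal-critic loss is a strictly monotone functional of the per-context divergence above, or appeal to the consistency of NCE as $K$ grows, treating $K$ as large enough that ordering representations by loss coincides with ordering them by sufficiency.

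Finally I would derive both conditions from the $k=1$ case. A future observation is $o_{t+1} = (s_{t+1}, r_t)$, so sufficiency at $k=1$ gives $P(s_{t+1}, r_t \mid c_t, a_t) = P(s_{t+1}, r_t \mid h_t, a_t)$; marginalizing over $s_{t+1}$ yields $P(r_t\mid c_t, a_t)=P(r_t\mid h_t, a_t)$ and hence $\mathbb{E}[r_t\mid c_t, a_t]=\mathbb{E}[r_t\mid h_t, a_t]$, which is exactly \textbf{P1}. For \textbf{P2}, the recurrent update makes $c_{t+1}=g^\ast_{AR}(z_{\le t+1})$ a deterministic function $F\bigl(c_t, g^\ast(s_{t+1}, r_t, a_t)\bigr)$ of $c_t$ and the new observation, so conditioned on $(c_t, a_t)$ the only randomness in $c_{t+1}$ enters through $(s_{t+1}, r_t)$. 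Since that conditional law is identical under $c_t$ and under $h_t$ by the $k=1$ sufficiency just established, the pushforward satisfies $P(Z_{t+1}\in B\mid h_t, a_t)=P(Z_{t+1}\in B\mid c_t, a_t)$ for every Borel $B$, giving \textbf{P2} and completing the proof.
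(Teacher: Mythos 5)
Your proposal is correct and follows essentially the same route as the paper's proof: the density-ratio characterization of the optimal InfoNCE critic, a substitution-plus-data-processing argument showing the joint minimizer makes $c_t$ exactly as predictive of $(s_{t+1},r_t)$ as $h_t$ (the paper phrases this via the mutual-information bound $\mathcal{L}_{M} \geq \log(M-1) - I(s_{t+1},r_t;c_t|a_t)$ and a chain-rule identity forcing $I(s_{t+1},r_t;h_t|c_t,a_t)=0$, which is equivalent to your per-context KL formulation), Assumption~\ref{ass:discrete_action} to extend equality to every possible history, and then \textbf{P1} by marginalization and \textbf{P2} via the deterministic recurrent update of $c_{t+1}$ from $(c_t,s_{t+1},r_t,a_t)$. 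The large-$K$ approximation you flag as the main obstacle is present in the paper's own derivation as well (the same $\approx$ step inherited from the CPC analysis), so your extra caution is warranted but does not constitute a different approach.
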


 The full proof is provided in \cref{appendix:thmproofs}; we next provide a sketch. The main challenge in our proof lies in proving the following equality: 
\begin{equation}\label{eq:main_result}
P(s_{t+1},r_t|h_t,a_t) = P(s_{t+1},r_t|c_t,a_t).
\end{equation} 
Given the equality in \cref{eq:main_result}, proving \textbf{P1,P2} is relatively straightforward. We prove \cref{eq:main_result} by expanding the proof in \citep{oord2018representation}, which shows that the InfoNCE loss upper bounds the negative mutual information between $o_{t+k}^{+}$ and $c_t$ (in the CPC setting). In our case, we show that 
\begin{equation}\label{eq:CPC_equals_MI_bounds}
\mathcal{L}_{M} \geq \log(M-1) - I(s_{t+1},r_t;c_t|a_t),
\end{equation}
where $I(\cdot; \cdot)$ denotes mutual information. Thus, by minimizing the loss in \cref{eq:metainfonce}, we maximize the mutual information $I(s_{t+1},r_t;c_t|a_t)$. Due to the Markov property of the process, the mutual information in \eqref{eq:CPC_equals_MI_bounds} cannot be greater than $I(s_{t+1},r_t;h_t|a_t)$, which leads to Equation \ref{eq:main_result}.  

\subsection{Learning Approximate Information States with CPC}\label{sec:approx_info_state}

We next investigate a more practical setting, where there may be errors in the CPC learning, and the data does not necessarily satisfy Assumption \ref{ass:discrete_action}. We aim to relate the CPC error to a bound on the suboptimality of the resulting policy.
In this section, we consider an iterative policy improvement algorithm with a similarity constraint on consecutive policies, similar to the PPO algorithm we use in practice~\cite{schulman2017proximal}. We shall bound the suboptimality of policy improvement, when data for training CPC is collected using the previous policy, denoted $\pi_k$. 

In light of Eq.~\ref{eq:CPC_equals_MI_bounds}, we assume the following error due to an imperfect CPC representation:
\begin{assumption}\label{assumption:eps}
    There exists an $\epsilon$ such that for every $t\leq T$,
$I(s_{t+1},r_t;c_t|a_t) \geq I(s_{t+1},r_t;h_t|a_t) - \epsilon,$ where the histories are distributed according to policy $\pi_k$.
\end{assumption}




The next theorem provides our main result.
\begin{theorem}\label{thm:approx_ais}
Let Assumption \ref{assumption:eps} hold for some representation $c_t$. Consider the distance function between two distributions $D(P_1(x),P_2(x)) = \max_x | P_1(x)/P_2(x) |$.
We let $\hat{r}(c_t,a_t)=\mathbb{E}[r_t|c_t,a_t]$ and $\hat{P}(c'|c_t,a_t)=\mathbb{E}[\mathbf{1}(c_{t+1}=c')|c_t,a_t]$ denote an approximate reward and transition kernel, respectively. Define the value functions
\begin{equation}\label{eq:DP_inside_thm}
    \begin{split}
        \hat{Q}_t(c_t, a_t) &= \hat{r}(c_t, a_t) + \sum_{c_{t+1}}\hat{P}(c_{t+1}|c_t,a_t) \hat{V}_{t+1}(c_{t+1}) \\
        \hat{V}_t(c_t) &= \max_{\pi: D(\pi(c_t), \pi_k(c_t)) \leq \beta} \sum_{a} \pi(a) \hat{Q}_t(c_t, a),
    \end{split}
    \raisetag{2.2em}
\end{equation}
for $t\leq T$, and $\hat{V}_T(c_T) = 0$, and
the approximate optimal policy 
\begin{equation}\label{eq:policy_opt_inside_thm}
\hat{\pi}(c_t)\in \argmax_{\pi: D(\pi, \pi_k(c_t)) \leq \beta} \sum_{a} \pi(a) \hat{Q}_t(c_t, a).
\end{equation}
Let the optimal policy $\pi^*(h_t)$ be defined similarly, but with $h_t$ replacing $c_t$ in \eqref{eq:DP_inside_thm} and \eqref{eq:policy_opt_inside_thm}.
Then we have that 
\begin{equation*}
\begin{split}
    &\mathbb{E}^{\pi^*} \left[ \sum_{t=0}^{T-1} r(s_t,a_t) \right] - \mathbb{E}^{\hat{\pi}} \left[ \sum_{t=0}^{T-1} r(s_t,a_t) \right] \leq \\
    & \epsilon^{1/3} R_{max} T^2 (\sqrt{2} + 4\beta^T).
\end{split}
\end{equation*}
\end{theorem}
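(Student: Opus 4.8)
The plan is to convert the mutual-information bound of Assumption \ref{assumption:eps} into a one-step prediction error, and then propagate that error through a finite-horizon value-difference argument. First I would rewrite the information gap as a conditional mutual information: since $c_t = g^*_{AR}(z_{\leq t})$ is a deterministic function of $h_t$, the chain rule gives $I(s_{t+1}, r_t; h_t \mid a_t) - I(s_{t+1}, r_t; c_t \mid a_t) = I(s_{t+1}, r_t; h_t \mid c_t, a_t)$, and because $c_t$ is $h_t$-measurable this conditional mutual information equals the expected KL divergence $\mathbb{E}_{h_t, a_t \sim \pi_k}\!\big[D_{KL}\big(P(s_{t+1}, r_t \mid h_t, a_t)\,\|\,P(s_{t+1}, r_t \mid c_t, a_t)\big)\big]$. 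Assumption \ref{assumption:eps} then bounds this expected KL by $\epsilon$ at every $t$.

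Next I would turn the averaged KL bound into a total-variation bound on the one-step model error $\eta_t(h_t) := \|P(s_{t+1}, r_t \mid h_t, a_t) - P(s_{t+1}, r_t \mid c_t, a_t)\|_{TV}$. The cube-root exponent in the target bound signals a thresholding argument: for a threshold $\delta$, call $h_t$ \emph{good} if its KL is at most $\delta$ and \emph{bad} otherwise. On good histories Pinsker's inequality gives $\eta_t \leq \sqrt{\delta/2}$ pointwise; by Markov's inequality the bad set has probability at most $\epsilon/\delta$ under $\pi_k$, where $\eta_t \leq 1$ trivially. Choosing $\delta \asymp \epsilon^{2/3}$ balances $\sqrt{\delta} \asymp \epsilon^{1/3}$ against $\epsilon/\delta \asymp \epsilon^{1/3}$, which is exactly the source of the $\epsilon^{1/3}$ factor. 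I would then transfer these to value errors: boundedness of the reward gives $|\hat r(c_t, a_t) - \mathbb{E}[r_t \mid h_t, a_t]| \leq 2 R_{max}\eta_t$, and since the next information state is a deterministic image of $(c_t, s_{t+1}, r_t, a_t)$, the data-processing/pushforward property of total variation gives $\|P(c_{t+1} \mid h_t, a_t) - \hat P(c_{t+1} \mid c_t, a_t)\|_{TV} \leq \eta_t$.

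With the per-step errors in hand, I would run a backward-induction (simulation-lemma) argument comparing $V^*$ on the history space with $\hat V$ on the information-state space, and comparing $\hat V_0$ with the true return of $\hat\pi$. Decomposing the suboptimality as $(V^*_0 - \hat V_0) + (\hat V_0 - V^{\hat\pi}_{\mathrm{true},0})$, each Bellman step contributes a reward-error term $2R_{max}\eta_t$ plus a transition-error term bounded by $\eta_t \cdot \|\hat V_{t+1}\|_\infty \leq \eta_t R_{max} T$; summing the two error sources over the $T$ stages accounts for the $R_{max} T^2$ factor. The good-set contribution is a pointwise bound that passes through any trajectory measure, producing the $\sqrt 2\,\epsilon^{1/3}$ term; the bad-set contribution is only controlled in expectation under $\pi_k$, so it must be imported to the distributions induced by $\hat\pi$ and $\pi^*$. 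This is where the ratio distance $D$ enters: the constraint $D(\pi(\cdot), \pi_k(\cdot)) \leq \beta$ bounds each per-step action-density ratio by $\beta$, so the trajectory-distribution ratio over $t \leq T$ steps is at most $\beta^T$, and importance-weighting the bad-set probability yields the $4\beta^T \epsilon^{1/3}$ term. Collecting both contributions gives $\epsilon^{1/3} R_{max} T^2(\sqrt 2 + 4\beta^T)$.

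I expect the main obstacle to be the cross-state-space value comparison combined with the change of measure. The delicate points are (i) showing that $V^*$, which may exploit the full history $h_t$, cannot exceed $\hat V$ by more than the accumulated one-step errors -- this relies on the reward and the induced next-state law being approximately $c_t$-measurable, exactly as quantified by $\eta_t$ -- and (ii) correctly compounding the density ratio to $\beta^T$ under the multiplicative distance $D$ while keeping the good-set term free of any measure-change penalty, so that the two contributions separate cleanly as $\sqrt 2 + 4\beta^T$. Verifying that the threshold $\delta$ can be chosen uniformly in $t$ and that the constants assemble to exactly $\sqrt 2$ and $4$ is the remaining bookkeeping.
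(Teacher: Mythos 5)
Your proposal is correct and follows essentially the same route as the paper's proof: the chain rule turns the information gap into an expected KL under $\pi_k$ (at most $\epsilon$), a Markov-inequality split at threshold $\epsilon^{2/3}$ combined with Pinsker yields the good/bad-set dichotomy that produces the $\epsilon^{1/3}$ rate, a backward induction accumulates per-step reward and transition errors into the $R_{max}T^2$ factor, and the multiplicative policy constraint compounds to $\beta^T$ to import the bad-set mass under $\pi^*$ and $\hat{\pi}$. The only presentational difference is that the paper implements your \emph{import the bad set} step by explicitly constructing two auxiliary policies (one behaving worst-case, one optimally, on bad histories) so that the induction bounds hold pointwise --- precisely the mechanism your sketch describes informally.
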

The dynamic programming recurrence in Equation \eqref{eq:DP_inside_thm} defines the optimal policy that is conditioned on $c_t$ (and not $h_t$), and is restricted to be $\beta$-similar to the previous policy $\pi_k$. The theorem bounds the loss in performance of such a policy compared to a policy that is conditioned on the full history (yet still restricted to be $\beta$-similar to $\pi_k$). The proof of Theorem \ref{thm:approx_ais} builds on the idea of an approximate information state~\citep{subramanian2020approximate} and is detailed in Appendix \ref{appendix:thmproofs}.

\subsection{ContraBAR Architecture}\label{section:contrabar_arch}

We now describe several design choices in our ContraBAR implementation.

\paragraph{History Embedding}
We now describe the specific architecture used to implement our algorithm, also depicted in \cref{fig:contrabar_arch}. We use a non-linear encoder to embed a history of actions, rewards and states and run it through a GRU to generate the hidden state for the current time-step $c_t$. The latent $c_t$ is then used to initialize the action-gru $g_{action}$, which is fed future actions as input -- the resulting hidden state is then concatenated with either a positive observation-reward pair, or a negative one and used as input to a projection head that outputs a score used in \cref{eq:metainfonce}.

We note that given a random sampling of negative observations, the probability of sampling a positive and negative observation that share the same state is low. Consequently, for environments where $s_{t+k}$ can be estimated via $s_t,a_t,\dots, a_{t+k}$ without $h_t$,  $c_t$ need only encode information regarding $s_t$ to allow the action-gru to learn to distinguish between positive and negative observations. This renders $c_t$ uninformative about the reward and transition functions and thus unhelpful for optimal control. An example of this is a set of deterministic environments that differ only in reward functions. The action-gru can learn to predict $s_{t+k}$ via $s_t,a_t,\dots, a_{t+k}$, only requiring $c_t$ to encode information regarding $s_t$ and \textit{not} the reward function. One way to circumvent this is hard negative mining, i.e using negative samples that are difficult to distinguish from the positive ones. Another solution, relevant for the case of varying reward functions, is to generate a negative observation by taking the state and action from the positive observation and recalculating the reward with a reward function sampled from the prior. In practice, we found that a simple alternative is to omit the action-gru. This prevents the easy estimation of $s_{t+k}$ and requires $c_t$ to encode information regarding the reward  and transition function. We found this worked well in practice for the environments we ran experiments on, including those with varying transitions. We expand on these considerations in Appendix \ref{appdx:arch_details}.

\paragraph{RL Policy}
 The history embedding portion of the algorithm described above is learned separately from the policy and can be done online or offline. The policy, which can be trained with an RL algorithm of the user's choice, is now conditioned on the current state $s_t$ as well as $c_t$ -- the learned embedding of $h_t$. We chose to use PPO \citep{schulman2017proximal} for the online experiments and SAC \citep{haarnoja2018soft} for the offline experiment -- in line with VariBAD and BOReL \citep{dorfman2021offline}.

\section{Experiments}

In our experiments, we shall demonstrate that
\begin{enumerate*}[label=(\arabic*)]
\item ContraBAR learns approximately Bayes-optimal policies
\item ContraBAR is on par with SOTA for environments with state inputs
\item ContraBAR scales to image-based environments
\item Augmentations can be naturally incorporated into ContraBAR and
\item ContraBAR can work in the offline setting
\end{enumerate*}.

We compare ContraBAR to state-of-the-art approximately Bayes-optimal meta RL methods. In the online setting, we compare against VariBAD~\citep{zintgraf2020varibad}, RL$^2$~\citep{duan2016rl}, and the recent modification of RL$^2$ by \citet{ni2022recurrent} which we refer to as RMF (recurrent model-free).
In the offline setting, we compare with BOReL~\citep{dorfman2021offline}. 
\citet{zintgraf2020varibad} and \citet{dorfman2021offline} already outperform posterior sampling based methods such as PEARL~\citep{rakelly2019efficient}, therefore we do not include such methods in our comparison.
Finally, we note that using VariBAD \citep{zintgraf2020varibad} with image-based inputs is currently computationally infeasible due to memory constraints, and as such we did not use it as a baseline -- we explain this issue further in \cref{appendix:varibad_comp}. Other variational approaches, which require a reconstruction of the future observations, are subject to similar memory constraints. Instead, we compared our algorithm against RL$^2$~\citep{duan2016rl}, which works with images. We evaluate performance similarly to \citet{zintgraf2020varibad}, by evaluating per episode return for 5 consecutive episodes with the exception of the offline setting where we adapted our evaluation to that of BOReL.

\subsection{Qualitative Near Bayes-Optimal Behavior}

We begin with a qualitative demonstration that ContraBAR can learn near Bayes-optimal policies. As calculating the exact Bayes-optimal policy is mostly intractable, we adopt the approach of \citet{dorfman2021offline}: for deterministic domains with a single sparse reward, the Bayes-optimal solution is essentially to search all possible reward locations so as to maximally reduce uncertainty, and then go directly to the goal in subsequent episodes. Thus, we can identify whether a policy is approximately Bayes-optimal by inspecting its trajectory.
Figure \ref{fig:qualitative} displays rollouts from a trained policy in the Gridworld and Semi-Circle domains, demonstrating near Bayes-optimal behavior similar to VariBAD \citep{zintgraf2020varibad}.

\begin{figure}
     \centering
     \begin{subfigure}[b]{0.45\columnwidth}
         \centering
         \includegraphics[width=\textwidth]{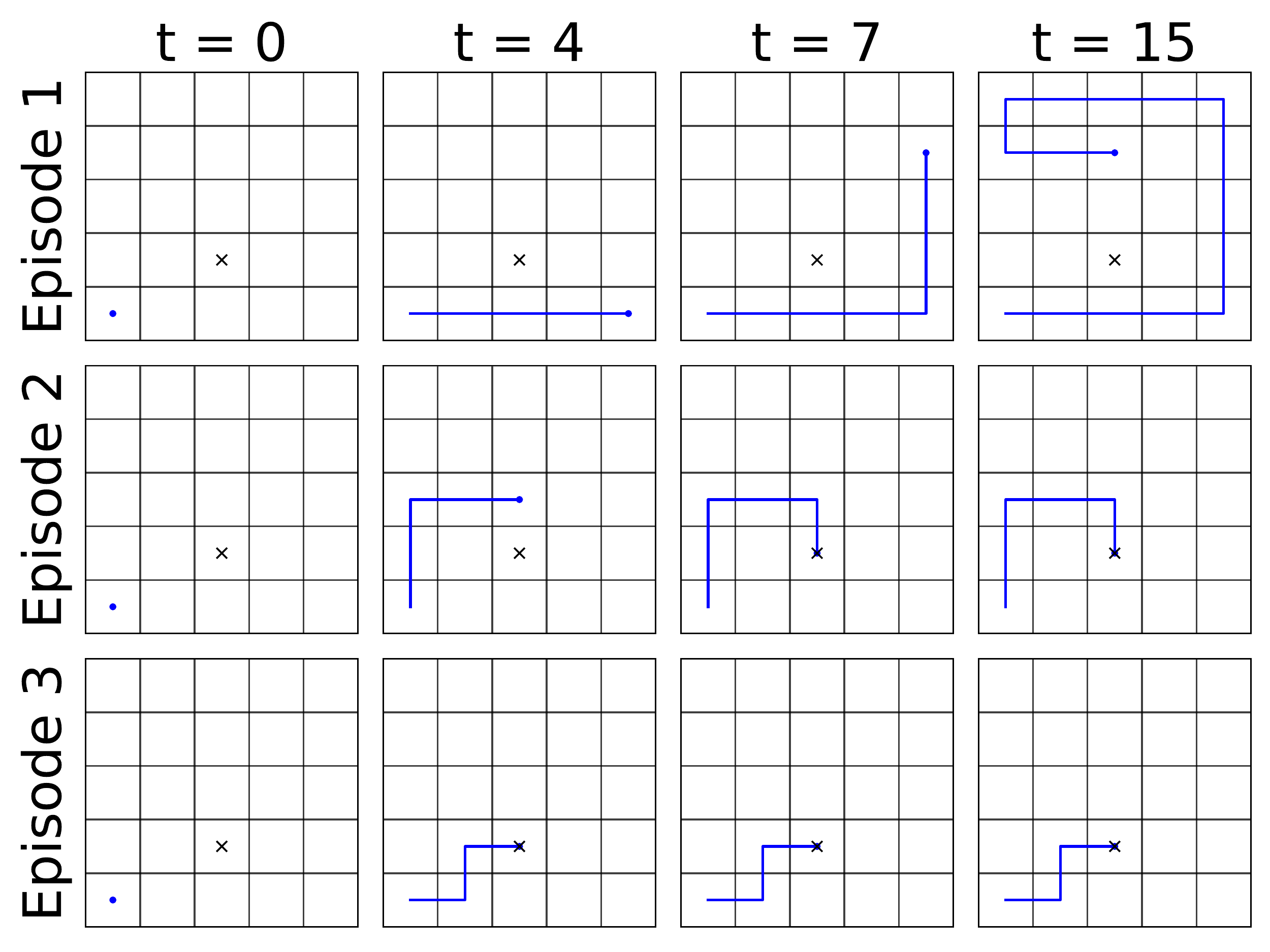}
         
     \end{subfigure}
     \begin{subfigure}[b]{0.45\columnwidth}
         \centering
         \includegraphics[width=\textwidth]{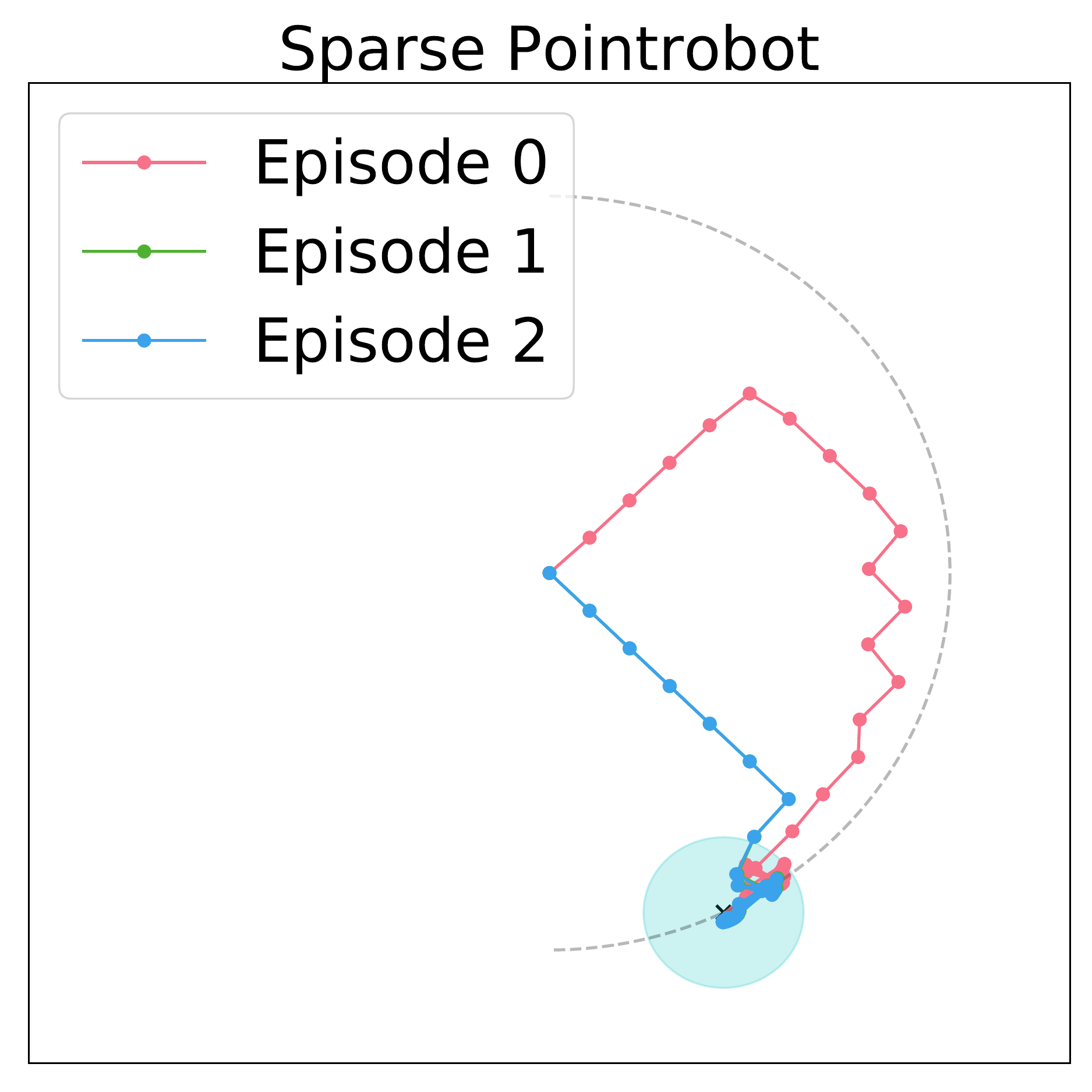}
     \end{subfigure}\caption{\textbf{Left:} Qualitatively Bayes-Optimal behavior on gridworld. The agent methodically searches the grid, reducing uncertainty about unexplored cells in the second episode and updating to a shorter path in the subsequent episode. \textbf{Right:} Qualitatively Bayes-Optimal behaviour on Sparse Semi-Circle. The agent scans the semi-circle until the goal has been found and all uncertainty reduced. In subsequent episodes the agent goes straight to the goal.} 
     \label{fig:qualitative}
     \vspace{-2em}
\end{figure}





\subsection{Results for Problems with State Observations}\label{subsection:sota_perf}

We compare ContraBAR with VariBAD and RMF, the current state-of-the-art on MuJoCo locomotion tasks~\cite{todorov2012mujoco}, commonly used in meta RL literature. We use the environments considered in \citet{zintgraf2020varibad}, namely the Ant-Dir, AntGoal, HalfCheetahDir, HalfCheetahVel, Humanoid and Walker environments.  \cref{fig:mujoco_test_perf} shows competitive performance with the current SOTA on all domains. Note that rewards in
these environments are dense, so in principle, the agent only needs a few exploratory
actions to infer the task by observing the rewards it receives. Indeed, we see that ContraBAR is able to quickly adapt within the first episode, with similar performance in subsequent episodes.

\begin{figure*}[t]
  \centering
  \vskip 0.2in
  \begin{center}
\centerline{\includegraphics[clip, trim=1.5cm 1.5cm 1cm 1.5cm,width=0.8\linewidth]
{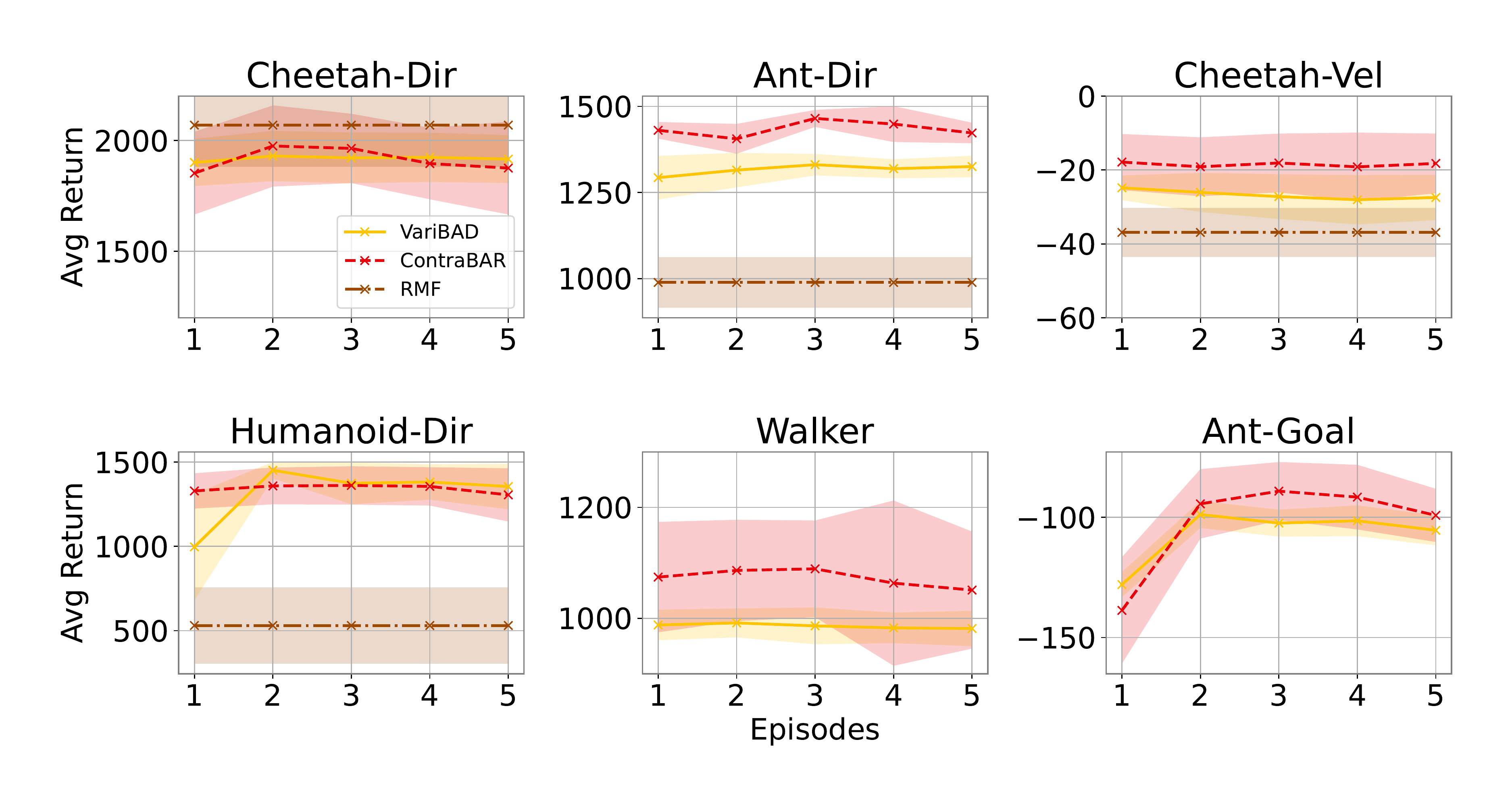}}
\vspace{-1em}
  \caption{Average test performance on six different MuJoCo environments, trained separately with 10 seeds per MuJoCo environment per method - as in \citep{zintgraf2020varibad}. The meta-trained policies are rolled out for 5 episodes to show how they adapt to the task. Values
shown are averages across tasks (95\% confidence intervals shaded). We show competitive performance, surpassing VariBad and RMF~\citep{ni2022recurrent} on certain environments. The results shown for RMF are the average performance on the first 2 episodes, as taken from their repository, which did not contain results for more than 2 episodes and did not contain results for the Walker and Ant-Goal environments. }\label{fig:mujoco_test_perf}
\end{center}
\vspace{-3em}
\end{figure*}

\begin{figure*}[!h]
   \centering
  \vskip 0.2in
  \begin{center}
  \centerline{\includegraphics[clip, trim=1.5cm 1.5cm 1cm 1.5cm, width=0.65\textwidth]{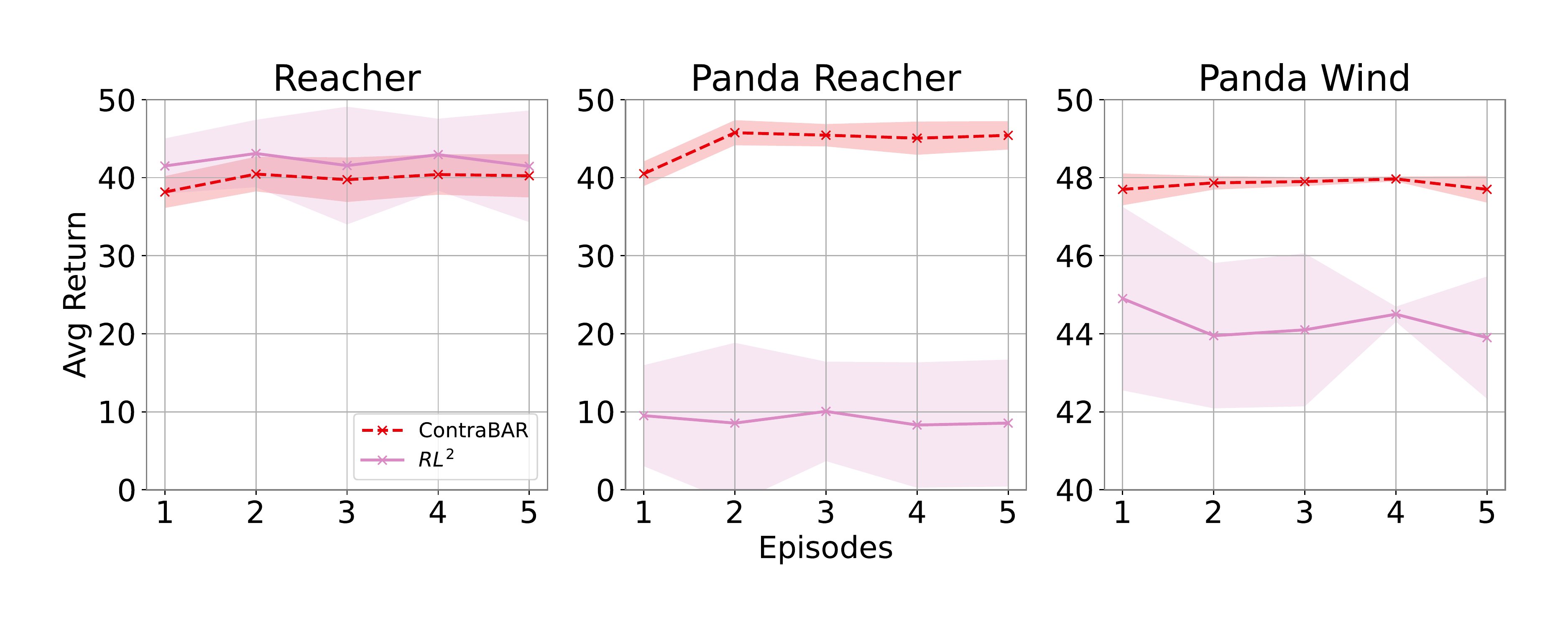}}
  \vspace{-1em}
  \caption{Average test performance on three different image-based environments, trained separately with different seeds per environment per method. The meta-trained
policies are rolled out for 5 episodes to show how they adapt to the task. Values
shown are averages across tasks (95\% confidence intervals shaded). All methods were run with 5 seeds for each environment. We show competitive performance with RL$^2$, surpassing it in the Panda environments.}\label{fig:image_perf}
  \end{center}
\vskip -0.2in
\end{figure*}

\begin{figure}[!h]
  \centering
  \vskip 0.2in
  \begin{center}
\centerline{\includegraphics[width=0.5\textwidth]  {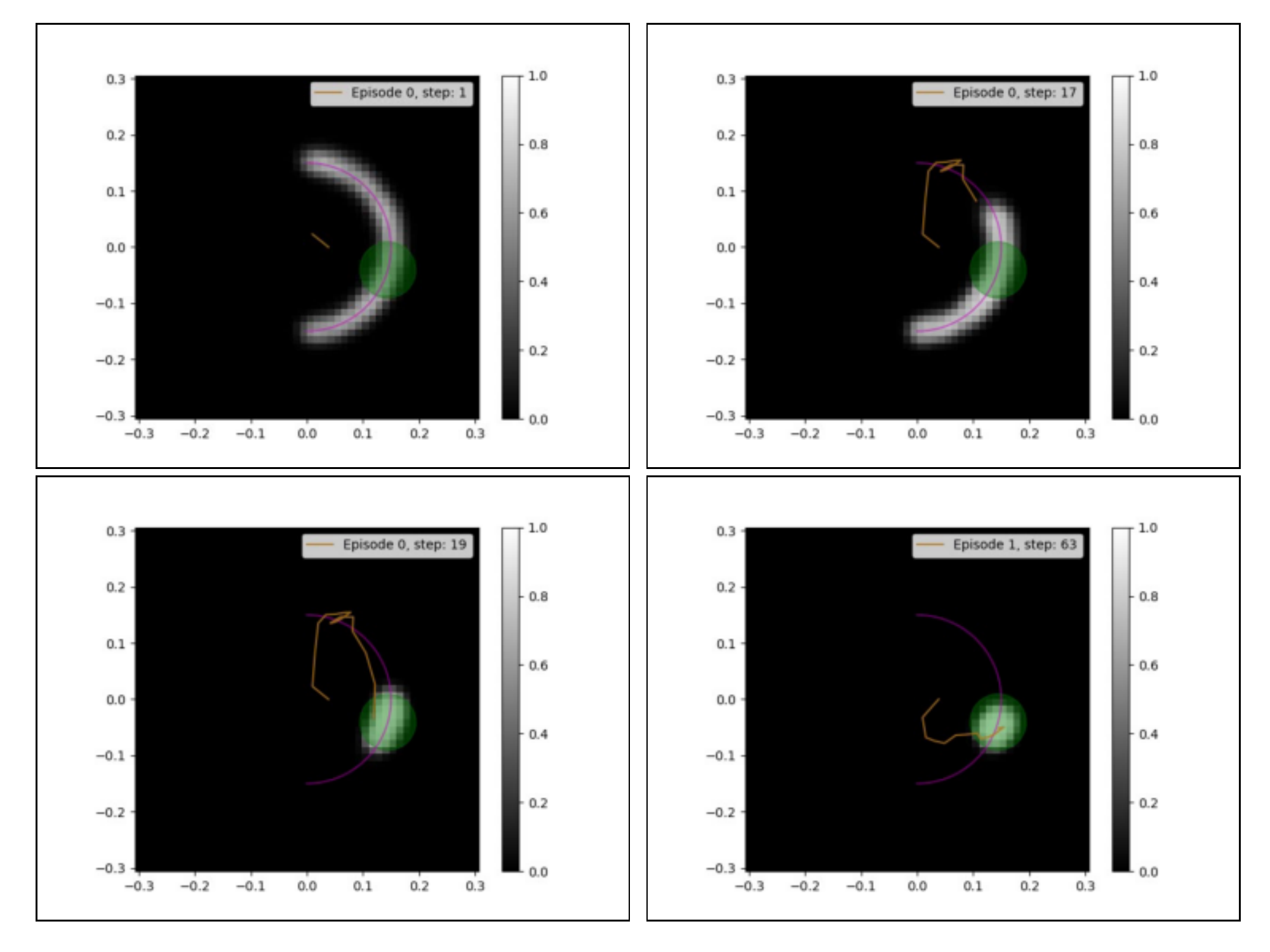}}
\vspace{-1em}
  \caption{Belief visualization on the Panda Reacher environment. At each step, we visualise the predictions of the MLP-classifier for the entire map. Each frame shows an interesting milestone along the trajectory. From left to right along the rows: (1) Initially, the agent's belief over the location of the reward is uniform over the semi-circle, as in the prior. (2) As the agent travels along the semi-circle, the uncertainty regarding the location of the goal collapses in locations where the agent has visited and not found a goal.
 (3) Once the agent reaches the goal, all uncertainty collapses, even for locations the agent has yet to visit.
(4) On the second episode, the agent follows the maintained belief and goes directly to the goal without exploration.
 }\label{fig:belief_panda}
  \end{center}
\vspace{-3em}
\end{figure}

\begin{figure}[t]
  \centering
\includegraphics[width=0.65\columnwidth]  {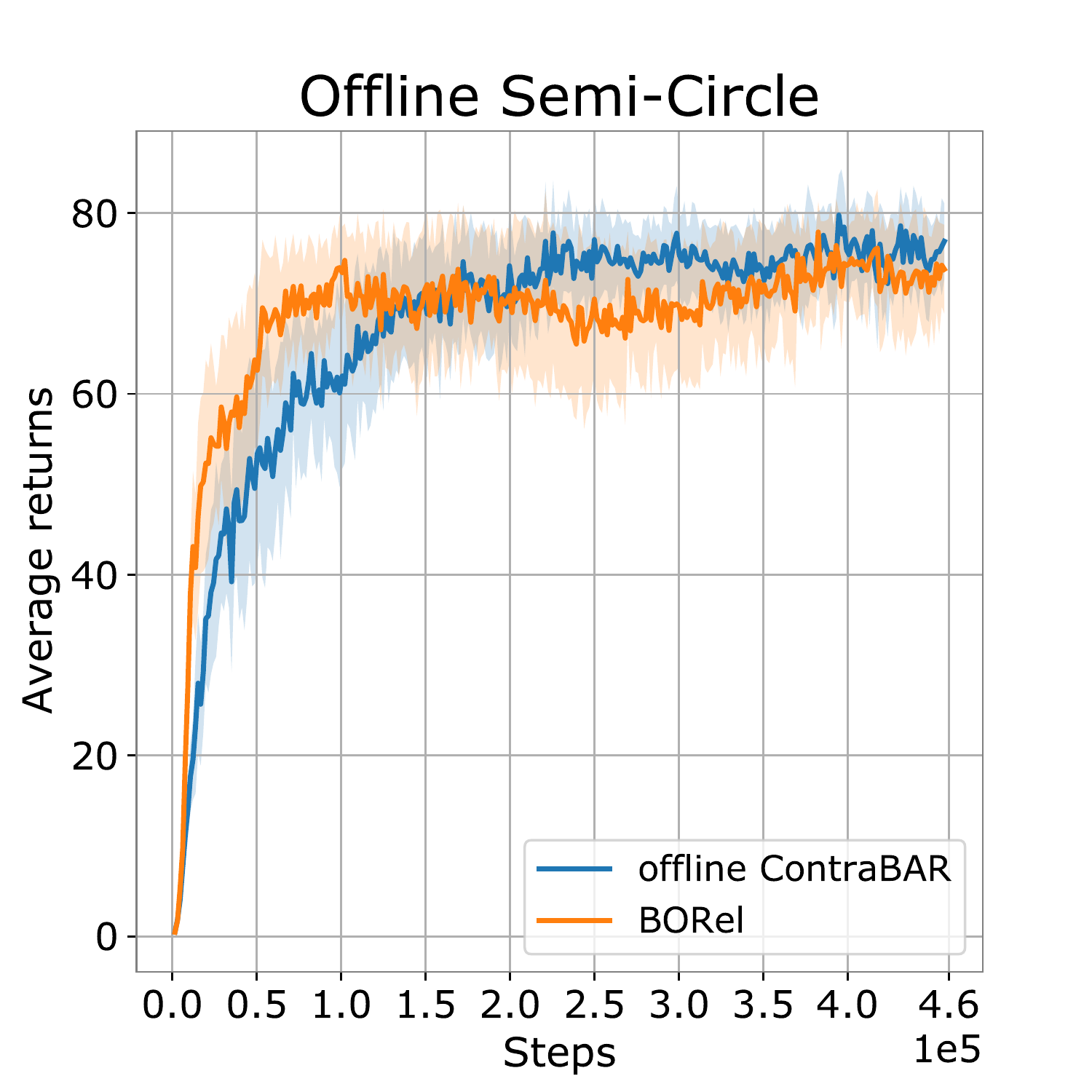}
\vspace{-1em}
  \caption{Results for the sparse Semi-Circle environment in the offline setting. Values shown are the average returns over two episodes, averaged over tasks. The shaded areas indicate 95\% confidence intervals over different seeds of the method. ContraBAR and BOReL were both run with 10 seeds with reward relabeling.
  }\label{fig:offline_contrabar}
  \vspace{-2em}
\end{figure}

\subsection{Scaling Belief to Image-Based Inputs}
We show that ContraBAR can scale to image domains, which are computationally expensive, by running our algorithm on three image-based domains with varying levels of difficulty and sources of uncertainty:  \begin{enumerate*}[label=(\arabic*)]
\item Reacher-Image -- a two-link
robot reaching an unseen target located somewhere on the diagonal of a rectangle, with sparse rewards
\item Panda Reacher -- a Franka Panda robot tasked with placing the end effector at a goal on a 2d semi-circle, where the vertical position of the goal ($z$ coordinate) is fixed; adapted from the Reacher task in Panda Gym \citep{gallouedec2021pandagym}
\item Panda Wind -- The same environment as Panda Reacher, except that the transitions are perturbed with Gaussian noise sampled separately for each task.
\end{enumerate*} 
For a more detailed description of each environment see \cref{appendix:environments}.  
 
\begin{figure}[h!]
   \centering
  \vskip 0.2in
  \begin{center}
  \centerline{\includegraphics[clip, trim=0.8cm 0.8cm 0.8cm 0.8cm,width=0.9\columnwidth]{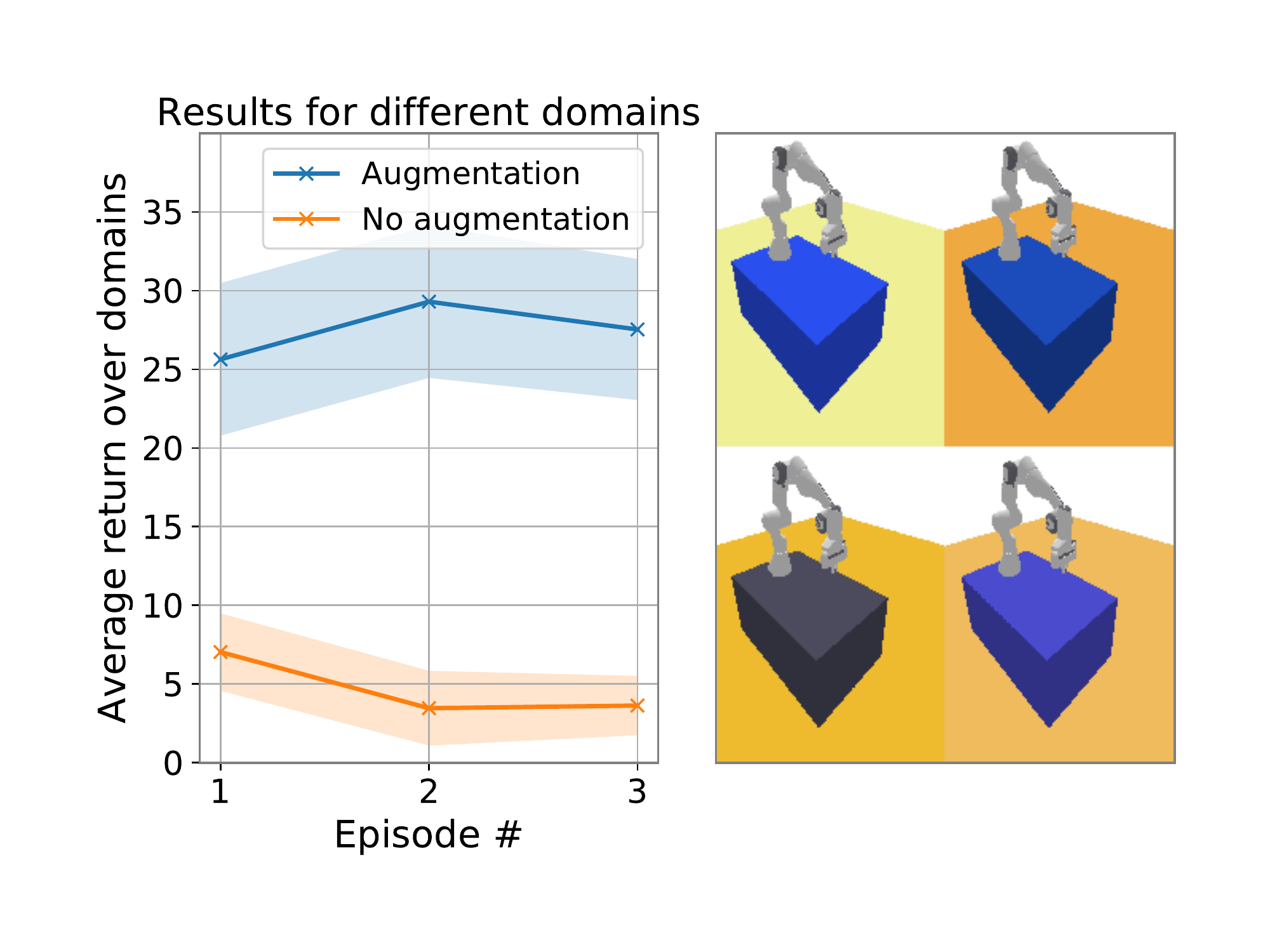}}
  \vspace{-1em}
  \caption{\textbf{Left:} Average test performance on three Panda Reacher environments with different color schemes from the training data.  We show two meta-trained
policies, one with augmentations and one without, each rolled out for 3 episodes to show how they adapt to the task. Values
shown are averages across tasks (95\% confidence intervals shaded). The agent trained with augmentations is invariant of the color schemes. \textbf{Right:} Top left image is the training environment; others are different color schemes for evaluation.}\label{fig:augment_perf}
  \end{center}
\vskip -0.2in
\end{figure}

\paragraph{Image-Based Reward:} For our image-based experiments, we found that learning in image-based domains with sparse reward was difficult when the reward was embedded separately (as in the state observation domains), and concatenated with the image embedding. We hypothesized that this might be an issue of differing scales between the scalar rewards and image inputs, but we observed that standard normalization techniques such as layer norm \citep{ba2016layer} did not help. Instead, we opted for a different approach that embeds the reward as an explicit part of the image. To implement this idea, we exploited the fact that in all our domains, the reward is sparse and binary, and we add a colored strip to a fixed place in the image when non-zero reward is received. Extending this idea to non-binary reward is possible, for example, by controlling the color of the strip.

Our results are displayed in \cref{fig:image_perf}. For the Reacher environment, ContraBAR is slightly outperformed by RL$^2$, whereas in Panda Reacher and Panda Reacher Wind ContraBAR outperform RL$^2$ by a large margin. Notice that in contrast to the dense reward domains of Section \ref{subsection:sota_perf}, in these sparse reward tasks the agent gains by exploring for the goal in the first iteration. Evidently, the plots show significantly higher reward in the second episode onward.

\paragraph{Glass-box Approach}
To further validate that our algorithm learns a sound belief representation, we follow a glass-box approach similar to that of \citet{guo2018neural}.
First, we used ContraBAR to learn an information state for the Panda Reacher environment. Second, we use the trained agent to create a dataset of trajectories, including the agent's belief at each time step of every trajectory. We then trained an MLP-based binary classifier, which takes $(x,y)$ and the information state $c_t$ as input and predicts whether the goal in the trajectory is indeed $(x,y)$. In \cref{fig:belief_panda} we see the visualization of the classifier's prediction at different points along the trajectory; We see that the predictions coincide with the belief we expect the agent to hold at each step, thus validating the soundness of our belief representation.

\subsection{ContraBAR with Domain Randomization}
Despite the high fidelity of modern simulators, when deployed in the real-world, image-based algorithms learned in simulation can only be accurate
up to the differences between simulation and reality – the
sim-to-real gap. This motivates us to learn a belief representation that is robust to such differences, and in the following we will show that our algorithm can indeed learn such an information state. Robustification to irrelevant
visual properties via random modifications is termed \textit{domain randomization} \citep{tobin2017domain}. We employ domain randomization in a similar fashion to \citet{rabinovitz2021unsupervised} wherein we modify the past and future observations (without the rewards) in the trajectories with a mapping $\mathcal{T}: \mathcal{S} \rightarrow \mathcal{S}$ that randomly shifts the RGB channels of the images. These modified trajectories are used to learn the history embedding $c_t$, with the hope that it will be invariant to different color schemes in the environment. We show the strength of such modifications by training two agents with ContraBAR on the Panda Reacher environment -- one receives images modified by $\mathcal{T}$ and the other does not. We then evaluate each agent's performance on different color schemes, which are kept static for evaluation. The results as well as the environments can be seen in \cref{fig:augment_perf}.  Note that while the belief may be robustified separately with augmentations, the policy must be robust to such changes as well. To do so, we used the data-regularized actor-critic method from \citet{raileanu2021automatic} where the policy $\pi_{\theta}$ and value function $V_{\phi}$ are regularized via two additional loss terms,
\begin{equation*}
\begin{split}
      G_\pi&=KL\left [\pi_{\theta}(a |s) \vert \pi_{\theta}(a | T(s)) \right ],\\
  G_V&=\left(V_{\phi}(s)-V_{\phi}(T(s))\right)^2, 
\end{split}
\end{equation*}
where $T: \mathcal{S} \rightarrow \mathcal{S}$  randomly modifies the image.

We emphasize that domain randomization, as applied here, is not naturally compatible with variational belief inference methods. The reason is that when the loss targets \textit{reconstruction} of the modified observation, the learned embedding cannot be trained to be \textit{invariant} to the modification $\mathcal{T}$.

\subsection{Offline ContraBAR}
We show that as in VariBAD \citep{zintgraf2020varibad}, the disentanglement of belief and control allows us to reframe the algorithm within the context of offline meta RL, as was done in \citet{dorfman2021offline}. First, we use ContraBAR to learn a history embedding $c_t$ from an offline dataset. Note that no specific change is required to our algorithm -- we simply treat the offline dataset as the replay buffer for ContraBAR. Second, we perform \textit{state relabeling} as described in \citet{dorfman2021offline}: for each trajectory $\tau_i$ of length $T$, i.e $(s_0^{i},a_o^{i},r_0^{i},\dots,s_T^{i})$, we embed each partial $t$-length history $h_t$ as $c_t$, and transform each $s_t^{i}$ to $s_t^{+,i}=(s_t^{i},c_t^{i})$ as in the BAMDP formulation. We then learn a policy with SAC \citep{haarnoja2018soft} on the transformed dataset. We show competitive results with BOReL \citep{dorfman2021offline} in \cref{fig:offline_contrabar}.  Unfortunately we were not able to find an offline adaptation of RMF to use as an additional baseline.





\section{Conclusions}
We proved that ContraBAR learns a representation that is a sufficient static of the history.  Following on this, we presented what is to the best of our knowledge the first approximately Bayes-optimal CL meta RL algorithm. We demonstrated results competitive with previous approaches on several challenging state-input domains. Furthermore, by using contrastive learning we were able to scale meta-RL to image-based domains; We displayed results on par with RL$^2$ which was also able to scale to image inputs. Finally, we showed that our method is naturally amenable to domain randomization, which may be important for applications such as robotics.

\section{Acknowledgements}
We thank Tom Jurgenson, Ev Zisselman, Orr Krupnik and Gal Avineri for useful discussions and feedback, and Luisa Zintgraf for invaluable help with reproducing the graphs from the VariBAD paper. This work received funding from the European Union (ERC, Bayes-RL, Project Number 101041250). Views and opinions expressed are however those of the author(s) only and do not necessarily reflect those of the European Union or the European Research Council Executive Agency. Neither the European Union nor the granting authority can be held responsible for them. 

\newpage
\bibliographystyle{icml2023}
\bibliography{refs}

\appendix

\onecolumn

\section{Theorem proofs}\label{appendix:thmproofs}
\begingroup
\def\thetheorem{\ref{thm_main}}
\begin{theorem}
Let Assumption \ref{ass:discrete_action} hold. Let $g^{*},g^{*}_{AR},f^{*}$ jointly minimize $ \mathcal{L}_{M}(g,g_{AR},f) $. Then the context latent representation
$c_t=g^{*}_{AR}(z_{\leq t})$ satisfies conditions \textbf{P1, P2} and is therefore an information state.
\end{theorem}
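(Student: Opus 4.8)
The plan is to reduce the entire statement to the single conditional-independence identity in \cref{eq:main_result}, namely $P(s_{t+1}, r_t \mid h_t, a_t) = P(s_{t+1}, r_t \mid c_t, a_t)$, and then to read off \textbf{P1} and \textbf{P2} as corollaries. So the first thing I would do is isolate this identity as the main lemma, since everything downstream is bookkeeping once it is in hand.

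To establish the identity I would first prove the information-theoretic bound linking the objective to mutual information, following the argument of \citet{oord2018representation} but adapted to the controlled, reward-bearing setting where the future observation is $o_{t+1} = (s_{t+1}, r_t)$ and the action-GRU feeds $a_t$ into $f$. For a fixed encoder producing $c_t = g_{AR}(z_{\leq t})$, I would minimize $\mathcal{L}_M$ over the critic $f$ and show that the optimal critic recovers the log density ratio $f^*(c_t, o_{t+1}, a_t) = \log \frac{P(o_{t+1} \mid c_t, a_t)}{P(o_{t+1})} + \text{const}$; substituting it yields the bound in \cref{eq:CPC_equals_MI_bounds}, so that minimizing $\mathcal{L}_M$ over $(g, g_{AR}, f)$ is equivalent to maximizing $I(s_{t+1}, r_t; c_t \mid a_t)$. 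Next I would invoke the data-processing inequality: because $c_t$ is a deterministic function of $h_t$, the chain $(s_{t+1}, r_t) - h_t - c_t$ (conditioned on $a_t$) gives $I(s_{t+1}, r_t; c_t \mid a_t) \leq I(s_{t+1}, r_t; h_t \mid a_t)$. The identity encoder $c_t = h_t$ lies in the function class, so the maximum over encoders equals $I(s_{t+1}, r_t; h_t \mid a_t)$ and is attained at $(g^*, g^*_{AR}, f^*)$. Equality in the data-processing inequality is exactly the condition $(s_{t+1}, r_t) \perp h_t \mid (c_t, a_t)$, which, since $c_t$ is determined by $h_t$, collapses to \cref{eq:main_result}. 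Here Assumption \ref{ass:discrete_action} enters: the objective only constrains the representation on histories with positive probability under $\mathcal{D}$, so coverage of every possible history is what upgrades an almost-everywhere equality into one holding for every possible $h_t$, as the theorem claims.

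Finally I would derive \textbf{P1} and \textbf{P2} from \cref{eq:main_result}. For \textbf{P1}, marginalizing $s_{t+1}$ gives $P(r_t \mid h_t, a_t) = P(r_t \mid c_t, a_t)$, and taking expectations yields $\mathbb{E}[r_t \mid h_t, a_t] = \mathbb{E}[r_t \mid c_t, a_t]$. For \textbf{P2}, I would exploit the recursive structure of the autoregressive encoder: a GRU update makes $c_{t+1}$ a deterministic function $\psi(c_t, s_{t+1}, r_t, a_t)$, so the law of $\infstate_{t+1} = c_{t+1}$ given $(h_t, a_t)$ is the pushforward of $P(s_{t+1}, r_t \mid h_t, a_t)$ under $\psi(c_t, \cdot, \cdot, a_t)$; by \cref{eq:main_result} this equals the pushforward computed from $P(s_{t+1}, r_t \mid c_t, a_t)$, giving $P(\infstate_{t+1} \in B \mid h_t, a_t) = P(\infstate_{t+1} \in B \mid c_t, a_t)$.

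I expect the main obstacle to be the equality-in-the-bound step. The finite-sample InfoNCE objective bounds mutual information only loosely, with tightness requiring many negatives, so I would either argue in the population / large-$K$ regime or, more robustly, bypass the mutual-information bound entirely by comparing minimal loss values directly: since the full-history encoder is admissible and no encoder can beat it by data processing, the minimizing encoder must reproduce the full-history predictive distribution, which is \cref{eq:main_result}. The remaining delicate point is verifying that the optimal critic is realizable within the chosen function class, so that the infimum is genuinely attained rather than merely approached.
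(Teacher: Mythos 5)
Your proposal is correct and follows essentially the same path as the paper's own proof in \cref{appendix:thmproofs}: characterize the optimal critic as a density ratio, substitute it to get the InfoNCE--mutual-information bound \cref{eq:CPC_equals_MI_bounds}, combine the data-processing inequality with achievability by the full-history encoder to force $I(s_{t+1},r_t;c_t\mid a_t)=I(s_{t+1},r_t;h_t\mid a_t)$, deduce the conditional independence \cref{eq:main_result} from equality in that decomposition, and then read off \textbf{P1} by marginalization and \textbf{P2} from the deterministic recurrent update $c_{t+1}=\psi(c_t,s_{t+1},r_t,a_t)$, with Assumption \ref{ass:discrete_action} entering exactly where you place it, to cover every possible history. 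The caveats you flag at the end (tightness of the finite-negative-sample bound and realizability of the optimal critic) are genuine, but the paper's proof glosses over them as well, via an unremarked approximation step in its mutual-information lemma, so they do not distinguish your argument from the published one.
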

\addtocounter{theorem}{-1}
\endgroup

We begin our proof by presenting the causal model for the variant of CPC used by ContraBAR, shown in \cref{fig:causal_model}.

\begin{figure}[!h]
   \centering
  \vskip 0.2in
  \begin{center}
  \centerline{\includegraphics[width=0.3\textwidth]{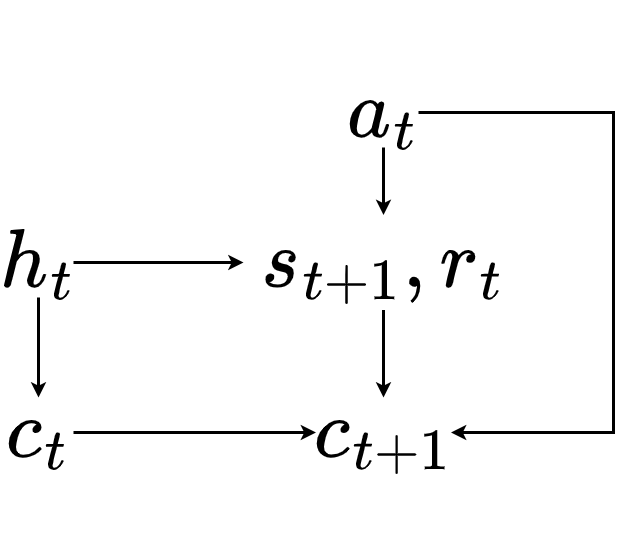}}
  \caption{The causal model for $c_t$}\label{fig:causal_model}

  \end{center}
\vskip -0.2in
\end{figure}

From the causal model, we can infer that
\[P(c_{t+1}|c_t,s_{t+1},r_t,a_t) =\] 
\[P(c_{t+1}|c_t,s_{t+1},r_t,a_t, h_t)\] and $P(s_{t+1},r_t|h_t, a_t) = P(s_{t+1},r_t|h_t, a_t, c_t)$. We shall also assume that $c_t$ is a deterministic function of $h_t$, and therefore $P(c_{t+1}|c_t,s_{t+1},r_t,a_t, h_t) = P(c_{t+1}|s_{t+1},r_t,a_t, h_t)$, and from the above, we have $P(c_{t+1}|s_{t+1},r_t, a_t, h_t) = P(c_{t+1}|s_{t+1},r_t,a_t, c_t)$.

We now prove a mutual information bound similar to that of \citet{oord2018representation}, we show that by optimizing the meta RL InfoNCE loss defined in \cref{eq:metainfonce} we maximize the mutual information between $c_t$ and $s_{t+1},r_t $ given $a_t$.

We begin with a lemma similar to that of Section 2.3 in \citet{oord2018representation}:
\begin{definition}[Possible sufficient statistic transition]
\label{def:possible_sst}
 Let $c_t$ be a function of $h_t$, i.e $c_t=\sigma_t(h_t)$. $s_{t+1},r_t,c_t,a_t$ is a possible sufficient statistic transition if $h_t, a_t, r_t, s_{t+1}$ is a possible history as in Definition \ref{def:possible_hist} and $c_t=\sigma_t(h_t)$.\end{definition}
\begin{lemma}\label
{lemma:optimal_f}
Let Assumption \ref{ass:discrete_action} and the loss in \cref{eq:metainfonce} be jointly minimized by $f, g, g_{AR}$, then for any possible sufficient statistic transition $s_{t+1},r_t,c_t,a_t$ as in Definition \ref{def:possible_sst}, where $c_t=g_{AR}(h_t)$, we have that \[f(s_{t+1},r_t,c_t,a_t) \propto \frac{P(s_{t+1},r_t|c_t,a_t)}{P(s_{t+1},r_t|a_t)}\].
\end{lemma}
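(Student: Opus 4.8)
The plan is to follow the structure of the optimality argument for InfoNCE from \citet{oord2018representation}, adapting it to the controlled, reward-augmented meta RL setting. The target is to characterize the minimizer $f^*$ of the loss $\mathcal{L}_M$ pointwise: for each fixed $c_t$ and $a_t$, the loss decomposes into a per-sample cross-entropy that is minimized when $\exp(f)$ matches the correct posterior ratio. I would first write the loss explicitly as an expectation over a batch consisting of one positive future observation drawn from the true conditional $P(s_{t+1},r_t\mid c_t,a_t)$ and $K$ negatives drawn from the marginal proposal $P(s_{t+1},r_t\mid a_t)$, and recognize that the InfoNCE objective is exactly the categorical cross-entropy of a $(K+1)$-way classifier trying to pick out which of the samples is the positive one.

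**Next I would** compute the Bayes-optimal classifier for this identification problem. Given a set $X=\{x_0,\dots,x_K\}$ of candidate future observations (one positive, $K$ negatives) and the context $(c_t,a_t)$, the probability that index $j$ is the positive sample is proportional to the density of $x_j$ under the positive conditional divided by its density under the negative proposal, i.e. proportional to $P(x_j\mid c_t,a_t)/P(x_j\mid a_t)$, with the other indices contributing their proposal densities. Carrying out this Bayes computation shows that the loss is minimized exactly when $\exp\!\big(f(s_{t+1},r_t,c_t,a_t)\big)$ is proportional to $P(s_{t+1},r_t\mid c_t,a_t)/P(s_{t+1},r_t\mid a_t)$, which is the claimed relation. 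The proportionality (rather than equality) is expected because the softmax is invariant to additive constants in $f$, equivalently multiplicative constants in $\exp(f)$, that may depend on $c_t,a_t$ but not on the candidate observation.

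**The role of Assumption~\ref{ass:discrete_action} and the ``possible sufficient statistic transition'' restriction** is what I expect to require the most care, and it is the main obstacle. The pointwise minimization argument only pins down $f^*$ on the support of the sampling distribution; the conclusion can only be asserted for tuples $(s_{t+1},r_t,c_t,a_t)$ that actually arise with positive probability under the data-collection process. Assumption~\ref{ass:discrete_action} guarantees that every possible history---and hence, via $c_t=\sigma_t(h_t)$, every possible sufficient statistic transition in the sense of Definition~\ref{def:possible_sst}---has positive probability in $\mathcal{D}$, so the optimal $f^*$ is determined on all such transitions and nowhere is the ratio statement vacuous. I would therefore be careful to restrict all the pointwise ``argmin'' reasoning to arguments lying in the support, and to invoke Assumption~\ref{ass:discrete_action} precisely at the step where I claim the characterization of $f^*$ holds for \emph{every} possible transition rather than merely almost everywhere.

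**One technical subtlety to flag** is the interchange between optimizing over the abstract function $f$ and optimizing over the parametrized triple $(f,g,g_{AR})$: the lemma statement fixes $c_t=g_{AR}(h_t)$ and treats $f$ as a free function of its arguments, so I would argue that for any fixed encoder/autoregressive pair, minimizing over $f$ yields the stated ratio, and that this is consistent with the joint minimizer because the per-$(c_t,a_t)$ cross-entropies are decoupled and each is driven to its individual optimum by the freedom in $f$. This lemma then feeds directly into the mutual-information bound of Equation~\eqref{eq:CPC_equals_MI_bounds} in the main text: substituting the optimal $f^*$ back into $\mathcal{L}_M$ is exactly the step that produces the $\log(M-1)-I(s_{t+1},r_t;c_t\mid a_t)$ expression, so the lemma should be stated and proved in a form that makes that substitution immediate.
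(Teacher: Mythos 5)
Your proposal is correct and follows essentially the same route as the paper: both identify the InfoNCE objective as the categorical cross-entropy of a $(K{+}1)$-way positive-identification problem, derive the Bayes-optimal posterior $P(d=i\mid B,c_t,a_t)$ as a softmax over the density ratios $P(s_i,r_i\mid c_t,a_t)/P(s_i,r_i\mid a_t)$, and then invoke Assumption~\ref{ass:discrete_action} to extend the characterization of the minimizer from batches occurring in the data to every possible sufficient statistic transition. If anything, you are slightly more careful than the paper on two points it glosses over -- that the proportionality constant is only pinned down per $(c_t,a_t)$ and that the relation really constrains $\exp(f)$ rather than $f$ itself -- but these are presentational refinements of the same argument, not a different proof.
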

\begin{proof}
The loss in Eq. \ref{eq:metainfonce} is the categorical cross-entropy of classifying the positive example correctly, with $\frac{f}{\sum_{B} f}$ being the prediction of the model. We denote the $j$-th example in the batch $B$ as $s_j,r_j$, where the subscript \textit{does not} refer to time here. As in \citep{oord2018representation}, the optimal probability for this loss is $P(d=i|B,c_t,a_t)$ (with $[d=i]$ indicating the i-th example in $B$ is the positive example) and can be derived as follows:

\begin{align}\label{eq:proof_cpc_1}
\begin{split}
P(d=i|B,c_t,a_t)&=\frac{P(s_{i},r_i|c_t,a_t) \Pi_{l\neq i} P(s_{l},r_l|a_t)}{\sum_{j=1}^{M}P(s_{j},r_j|c_t,a_t)\Pi_{l\neq j} P(s_{l},r_l|a_t)} \\
&=\frac{\frac{P(s_{i},r_i|c_t,a_t)}{P(s_{i},r_i|a_t)}}{\sum_{j=1}^{M}\frac{P(s_{j},r_j|c_t,a_t)}{P(s_{j},r_j|a_t)}}.
\end{split}
\end{align}
\end{proof}
Eq. \ref{eq:proof_cpc_1} means that for any $s_{t+1},r_t,c_t,a_t$ that are part of a batch $B$ in the data, we have that $f(s_{t+1},r_t,c_t,a_t) \propto \frac{P(s_{t+1},r_t|c_t,a_t)}{P(s_{t+1},r_t|a_t)}$. From Assumption \ref{ass:discrete_action}, for any sufficient statistic transition tuple $s_{t+1},r_t,c_t,a_t$ there exists a batch it is a part of. 

\begin{lemma}\label{lemma:extended_mi}
Let Assumption \ref{ass:discrete_action}, and let the loss in \cref{eq:metainfonce} be jointly minimized by $f,g,g_{AR}$. Then \[I(s_{t+1},r_t;c_t|a_t) \geq \log(M-1) - \mathcal{L}_{opt}.\]
\begin{proof}
 
Given the optimal value shown in Lemma \ref{lemma:optimal_f} for $f(s_{t+1}, r_t, c_t, a_t)$, by inserting back into the loss we get:
\begin{align*}
\begin{split}
&\mathcal{L}_{opt}=-\mathbb{E}\log\left[\frac{\frac{P(s_{t+1},r_t|c_t,a_t)}{P(s_{t+1},r_t|a_t)}}{\frac{P(s_{t+1},r_t|c_t,a_t)}{P(s_{t+1},r_t|a_t)} + \sum_{(s',r') \in \{o_{j}^{-}\}_{j=1}^{M-1}}\frac{P(s',r'|c_t,a_t)}{P(s',r'|a_t)}}\right] \\
&=\mathbb{E}\log \left [1 + 
(\frac{P(s_{t+1},r_t|a_t)}{P(s_{t+1},r_t|c_t,a_t)}
\sum_{(s',r') \in \{o_{j}^{-}\}_{j=1}^{M-1}}\frac{P(s',r'|c_t,a_t)}{P(s',r'|a_t)}\right ] \\
&\approx\mathbb{E}\log \left [1 + 
(\frac{P(s_{t+1},r_t|a_t)}{P(s_{t+1},r_t|c_t,a_t)}(M-1) \cdot \mathbb{E}_{\mathcal{D}(s',r'|a_t)}  \frac{P(s',r'|c_t,a_t)}{P(s',r'|a_t)} \right ] \\
&=\mathbb{E}\log \left [1 + 
(\frac{P(s_{t+1},r_t|a_t)}{P(s_{t+1},r_t|c_t,a_t)}(M-1) \right ] \\
& \geq\mathbb{E}\log \left [ 
\frac{P(s_{t+1},r_t|a_t)}{P(s_{t+1},r_t|c_t,a_t)}(M-1) \right ] \\
&=-I(s_{t+1},r_t;c_t|a_t)+\log(M-1).
\end{split}
\end{align*}

We therefore get that
\[I(s_{t+1},r_t;c_t|a_t)\geq \log(M-1) - \mathcal{L}_{opt}.\] 
We conclude that the objective maximizes the mutual information between $c_t$ and $s_{t+1},r_t$ given $a_t$.
\end{proof}
\end{lemma}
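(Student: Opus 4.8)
The plan is to substitute the minimizer characterized in Lemma~\ref{lemma:optimal_f} back into the loss and recognize the resulting expression as a mutual-information gap, adapting the bound of \citet{oord2018representation} to the action- and reward-conditioned setting. First I would invoke Lemma~\ref{lemma:optimal_f}: at the joint minimizer the optimal scoring function satisfies $f^{*}(s_{t+1},r_t,c_t,a_t) \propto P(s_{t+1},r_t|c_t,a_t)/P(s_{t+1},r_t|a_t)$ for every possible sufficient-statistic transition. Since the InfoNCE objective is a softmax whose value is invariant to a common positive scaling of all scores, I may substitute this density ratio directly in place of $\exp(f)$ for the positive and all negative terms, so that $\mathcal{L}_{opt}$ becomes the expected negative log of a fraction whose numerator is the positive density ratio and whose denominator sums the positive ratio together with the $M-1$ negative ratios $P(s',r'|c_t,a_t)/P(s',r'|a_t)$.

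Next I would divide numerator and denominator by the positive ratio, which turns the summand into $\log\bigl[1 + \tfrac{P(s_{t+1},r_t|a_t)}{P(s_{t+1},r_t|c_t,a_t)}\sum_{(s',r')} \tfrac{P(s',r'|c_t,a_t)}{P(s',r'|a_t)}\bigr]$. The central step is to replace the inner sum over the $M-1$ negatives, which are drawn i.i.d.\ from the marginal proposal $P(s',r'|a_t)$, by $(M-1)$ times its expectation under that proposal; this is the law-of-large-numbers approximation used in the original CPC derivation. The key simplification is that this expectation is exactly one, because $\mathbb{E}_{P(s',r'|a_t)}\bigl[P(s',r'|c_t,a_t)/P(s',r'|a_t)\bigr]=\int P(s',r'|c_t,a_t)\,d(s',r')=1$. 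Consequently the bracketed quantity collapses to $1 + \tfrac{P(s_{t+1},r_t|a_t)}{P(s_{t+1},r_t|c_t,a_t)}(M-1)$.

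Finally I would apply monotonicity of the logarithm, $\log(1+x)\ge \log x$, to drop the additive $1$, yielding $\mathcal{L}_{opt} \ge \mathbb{E}\log\bigl[\tfrac{P(s_{t+1},r_t|a_t)}{P(s_{t+1},r_t|c_t,a_t)}(M-1)\bigr]$. Splitting the logarithm, the $\log(M-1)$ factors out as a constant and the remaining expected log-ratio $\mathbb{E}\log\bigl[P(s_{t+1},r_t|a_t)/P(s_{t+1},r_t|c_t,a_t)\bigr]$ is precisely $-I(s_{t+1},r_t;c_t|a_t)$, the negative conditional mutual information. Rearranging gives $I(s_{t+1},r_t;c_t|a_t) \ge \log(M-1) - \mathcal{L}_{opt}$, as claimed. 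I expect the main obstacle to be the law-of-large-numbers replacement of the finite negative-sample sum by its expectation: this is the step that fixes both the direction and the $\log(M-1)$ scale of the bound, and making it airtight requires either taking the large-$M$ limit or substituting a Jensen-type inequality, whereas the accompanying identity $\mathbb{E}[\text{density ratio}]=1$ and the concluding $\log(1+x)\ge\log x$ step are routine.
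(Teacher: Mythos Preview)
Your proposal is correct and follows essentially the same route as the paper: substitute the optimal density-ratio score from Lemma~\ref{lemma:optimal_f}, rewrite the softmax as $\log[1+\cdots]$, replace the sum of $M-1$ negative ratios by $(M-1)$ times its expectation (which integrates to one), and finish with $\log(1+x)\ge\log x$. You have also correctly flagged the law-of-large-numbers replacement as the non-rigorous step; the paper handles it the same way, writing it as an ``$\approx$'' rather than an inequality.
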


\begin{corollary}\label{corollary:maximally_info}
Let Assumption \ref{ass:discrete_action}, and let the loss in \cref{eq:metainfonce} be jointly minimized by $f,g,g_{AR}$, then $I(c_t; s_{t+1},r_t|a_t) = I(h_t; s_{t+1},r_t|a_t)$ where $I(\cdot; \cdot)$ denotes mutual information.
\begin{proof}
Since $s_{t+1},r_t$ depend only on $h_t$ (conditioned on $a_t$), and since $c_t$ is a deterministic function of $h_t$, $I(s_{t+1},r_t;c_t|a_t)$ cannot be greater than $I(s_{t+1},r_t;h_t|a_t)$. From Lemma \ref{lemma:extended_mi} , we therefore have that \[I(c_t; s_{t+1},r_t|a_t) = I(h_t; s_{t+1},r_t|a_t)\].
\end{proof}
\end{corollary}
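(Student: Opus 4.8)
The plan is to sandwich the conditional mutual information $I(s_{t+1},r_t;c_t|a_t)$ between a lower bound coming from the contrastive objective and an upper bound coming from the data-processing inequality, and then to argue that the \emph{joint} minimization of the loss forces the two bounds to coincide. The lower bound is precisely the content of Lemma \ref{lemma:extended_mi}: at the optimal classifier $f$, the loss obeys $\mathcal{L}_{opt}\ge \log(M-1)-I(s_{t+1},r_t;c_t|a_t)$, equivalently $I(s_{t+1},r_t;c_t|a_t)\ge \log(M-1)-\mathcal{L}_{opt}$. Read the other way, this says that pushing the loss down is the same as pushing the conditional mutual information up, so minimizing $\mathcal{L}_M$ over the encoder $g$ and the autoregressor $g_{AR}$ is equivalent to maximizing $I(s_{t+1},r_t;c_t|a_t)$ over the representations $c_t$ realizable in the function class.

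For the upper bound I would use the conditional independences read off from the causal model in \cref{fig:causal_model}: conditioned on $a_t$, the pair $(s_{t+1},r_t)$ depends on the past only through $h_t$, and $c_t=g_{AR}(z_{\le t})$ is a deterministic function of $h_t$. This gives the conditional Markov chain $(s_{t+1},r_t)\to h_t\to c_t$ given $a_t$, and the data-processing inequality then yields $I(s_{t+1},r_t;c_t|a_t)\le I(s_{t+1},r_t;h_t|a_t)$, exactly the one-sided bound asserted in the corollary statement. I would also invoke Assumption \ref{ass:discrete_action} here, since it is what makes the optimal-$f$ characterization of Lemma \ref{lemma:optimal_f}, and hence the mutual-information bound, valid at \emph{every} possible transition and every $t\le T$, rather than on a strict subset of histories.

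The crux, and the step I expect to be the main obstacle, is converting the one-sided inequality into the claimed equality, since DPI alone only caps the mutual information. The equality must come from the fact that the loss is minimized over the representation and not merely over $f$. The argument I would make is a competitor/realizability argument: the trivial representation $c_t'=h_t$ (or any lossless sufficient statistic) lies in the function class and attains $I(s_{t+1},r_t;c_t'|a_t)=I(s_{t+1},r_t;h_t|a_t)$, the DPI ceiling, and therefore attains the smallest loss permitted by Lemma \ref{lemma:extended_mi}. Because $c_t=g^*_{AR}(z_{\le t})$ is a joint minimizer, its loss can be no larger than that of $c_t'$; combined with the universal lower bound $\mathcal{L}_{opt}\ge \log(M-1)-I(s_{t+1},r_t;h_t|a_t)$ obtained by feeding the DPI cap back into the bound, this pins the optimal loss to $\log(M-1)-I(s_{t+1},r_t;h_t|a_t)$ and forces $I(c_t;s_{t+1},r_t|a_t)=I(h_t;s_{t+1},r_t|a_t)$.

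I would flag explicitly the realizability hypothesis buried in the phrase ``jointly minimize $\mathcal{L}_M$'': the competitor argument needs a sufficient statistic to be representable by some $(g,g_{AR})$ and the global minimum to be attained, for otherwise the maximization in the first step need not reach the DPI ceiling and only the inequality would survive. Making this assumption precise, rather than implicit, is the part of the proof I would treat most carefully; the remaining DPI and bound-chaining steps are routine once the saturation argument is in place.
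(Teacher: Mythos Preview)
Your proposal is correct and follows essentially the same approach as the paper: combine the data-processing inequality upper bound $I(s_{t+1},r_t;c_t\mid a_t)\le I(s_{t+1},r_t;h_t\mid a_t)$ with the fact, drawn from Lemma~\ref{lemma:extended_mi}, that minimizing $\mathcal{L}_M$ over $(g,g_{AR},f)$ amounts to maximizing $I(s_{t+1},r_t;c_t\mid a_t)$, and conclude that the joint minimizer must saturate the DPI ceiling. The paper's proof is terser than yours---it simply asserts the equality ``from Lemma~\ref{lemma:extended_mi}'' without writing out the competitor argument---whereas you make the saturation step explicit (and correctly flag the implicit realizability assumption needed for a sufficient statistic to lie in the function class); but the logical route is the same.
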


Note that Corollary \ref{corollary:maximally_info} states that given the causal model above, $c_t$ is maximally informative about $s_{t+1},r_t$ (conditioned on $a_t$).
We use this result to prove a short lemma that will help us prove that that $c_t$ is an information state.
\begin{lemma}\label{lemma:c_equiv}
Let the assumptions of Corollary \ref{corollary:maximally_info} hold, then for every $a$, $P(s_{t+1},r_t|h_t,a_t) = P(s_{t+1},r_t|c_t,a_t)$.
\end{lemma}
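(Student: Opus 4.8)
The plan is to derive the claimed distributional identity from the mutual-information equality in Corollary~\ref{corollary:maximally_info} together with the fact, assumed throughout, that $c_t=\sigma_t(h_t)$ is a \emph{deterministic} function of $h_t$. The guiding principle is that $c_t$ is obtained from $h_t$ purely by data processing, so the equality $I(s_{t+1},r_t;c_t|a_t)=I(s_{t+1},r_t;h_t|a_t)$ is exactly the equality case of the data-processing inequality, which forces $c_t$ to be a \emph{sufficient statistic} of $h_t$ for $(s_{t+1},r_t)$ given $a_t$.

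First I would make this quantitative via the chain rule for conditional mutual information. Expanding $I(s_{t+1},r_t;h_t,c_t\mid a_t)$ in the two possible orders gives
\[
I(s_{t+1},r_t;c_t|a_t) + I(s_{t+1},r_t;h_t|c_t,a_t) = I(s_{t+1},r_t;h_t|a_t) + I(s_{t+1},r_t;c_t|h_t,a_t).
\]
Because $c_t$ is a deterministic function of $h_t$, conditioning on $h_t$ already fixes $c_t$, so the last term $I(s_{t+1},r_t;c_t|h_t,a_t)$ vanishes. Substituting the equality from Corollary~\ref{corollary:maximally_info} then cancels the first terms on each side, leaving $I(s_{t+1},r_t;h_t|c_t,a_t)=0$.

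Next I would translate this vanishing conditional mutual information into a statement about distributions: $I(s_{t+1},r_t;h_t|c_t,a_t)=0$ is equivalent to the conditional independence $(s_{t+1},r_t)\perp h_t \mid c_t,a_t$, i.e. $P(s_{t+1},r_t|h_t,c_t,a_t)=P(s_{t+1},r_t|c_t,a_t)$. Finally, using once more that $c_t$ is determined by $h_t$, conditioning on $c_t$ in addition to $h_t$ is redundant, so $P(s_{t+1},r_t|h_t,c_t,a_t)=P(s_{t+1},r_t|h_t,a_t)$; chaining the two equalities yields exactly $P(s_{t+1},r_t|h_t,a_t)=P(s_{t+1},r_t|c_t,a_t)$, for each value of $a_t$.

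I expect the only delicate point to be the bookkeeping in the chain-rule step --- in particular justifying that the ``extra'' term is zero because $c_t$ is a deterministic image of $h_t$, and carrying the conditioning on $a_t$ through every quantity. The passage from zero conditional mutual information to conditional independence is standard, but I would state it explicitly, since the identity must hold for every $a$ as required by the statement.
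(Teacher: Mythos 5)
Your proposal is correct and follows essentially the same route as the paper's proof: the same two chain-rule expansions of $I(s_{t+1},r_t;h_t,c_t|a_t)$, the vanishing of $I(s_{t+1},r_t;c_t|h_t,a_t)$ (which the paper attributes to the causal graph and you, equivalently, to $c_t$ being a deterministic function of $h_t$), cancellation via Corollary~\ref{corollary:maximally_info} to get $I(s_{t+1},r_t;h_t|c_t,a_t)=0$, and passage to conditional independence. Your final step, explicitly noting that conditioning on $c_t$ alongside $h_t$ is redundant so that $P(s_{t+1},r_t|h_t,c_t,a_t)=P(s_{t+1},r_t|h_t,a_t)$, is a small bookkeeping point the paper leaves implicit, but it is the same argument.
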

\begin{proof}
We start with a result similar to the data processing inequality.
Consider $I(s_{t+1},r_t;h_t, c_t|a_t)$. We have that 
\begin{equation}\label{eq:data_proc_1}
\begin{split}
    &I(s_{t+1},r_t;h_t, c_t|a_t) = \\ &I(s_{t+1},r_t;c_t| h_t,a_t) 
    + I(s_{t+1},r_t   ;h_t|a_t),
\end{split}
\end{equation}
and on the other hand,
\begin{equation}\label{eq:data_proc_2}
\begin{split}
    &I(s_{t+1},r_t;h_t, c_t|a_t) = \\ &I(s_{t+1},r_t;h_t| c_t,a_t) + I(s_{t+1},r_t;c_t|a_t).
\end{split}
\end{equation}
From the causal graph above, we have that $I(s_{t+1},r_t;c_t| h_t,a_t) = 0$. Therefore, from Eq. \eqref{eq:data_proc_1} and \eqref{eq:data_proc_2} we have

\begin{equation*}
\begin{split}
    I(s_{t+1},r_t;h_t|a_t) &= I(s_{t+1},r_t;c_t|a_t) + I(s_{t+1},r_t;h_t| c_t,a_t) \\ &\geq I(s_{t+1},r_t;c_t|a_t)
    \end{split}
\end{equation*}

with equality only if $I(s_{t+1},r_t;h_t| c_t,a_t)=0$, since the mutual information is positive. From Corollary \ref{corollary:maximally_info}, we therefore must have $I(s_{t+1},r_t;h_t| c_t,a_t)=0$. This implies that $s_{t+1},r_t$ and $h_t$ are independent conditioned on $c_t,a_t$~\cite{coverelements}, and therefore \[P(s_{t+1},r_t|h_t,a_t) = P(s_{t+1},r_t|c_t,a_t)\].

\end{proof}
\begin{proposition}
Let Assumption \ref{ass:discrete_action}, and let the loss in \cref{eq:metainfonce} be jointly minimized by $f,g,g_{AR}$, then $c_t$ satisfies \textbf{P1}, i.e., $\mathbb{E}[r_t|h_t,a_t] = \mathbb{E}[r_t|c_{t},a_t]$.
\end{proposition}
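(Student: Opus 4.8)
The plan is to derive \textbf{P1} directly from the joint-distribution identity established in Lemma~\ref{lemma:c_equiv}, namely $P(s_{t+1},r_t|h_t,a_t) = P(s_{t+1},r_t|c_t,a_t)$. Since \textbf{P1} concerns only the marginal behavior of the reward $r_t$, the natural first step is to marginalize this joint identity over the next state $s_{t+1}$. Summing (or integrating, in the continuous case) both sides over $s_{t+1}$ yields $P(r_t|h_t,a_t) = P(r_t|c_t,a_t)$, because marginalizing equal conditional joint distributions produces equal conditional marginals.

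Given this equality of conditional reward distributions, the second and final step is to take expectations. I would write $\mathbb{E}[r_t|h_t,a_t] = \int r_t\, dP(r_t|h_t,a_t)$, substitute the marginal identity from the previous step, and conclude $\mathbb{E}[r_t|h_t,a_t] = \mathbb{E}[r_t|c_t,a_t]$, which is exactly \textbf{P1}. The bounded-reward assumption $r_t \in [-R_{max},R_{max}]$ guarantees these expectations are well defined and finite, so no integrability concern arises.

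I do not anticipate a genuine obstacle in this Proposition itself: all the substantive work -- connecting the InfoNCE minimizer to the mutual information bound, and then upgrading that bound to the conditional-independence statement $I(s_{t+1},r_t;h_t|c_t,a_t)=0$ -- has already been carried out in Lemmas~\ref{lemma:optimal_f}--\ref{lemma:c_equiv}. The only point that requires a little care is ensuring the marginalization is valid for \emph{every} possible history $h_t$ and action $a_t$, rather than merely almost surely under a single data distribution. This is precisely the role of Assumption~\ref{ass:discrete_action}, which guarantees the InfoNCE optimality identity (Lemma~\ref{lemma:optimal_f}) holds on the full support of possible sufficient-statistic transitions, so that Lemma~\ref{lemma:c_equiv}, and hence the reduction above, applies pointwise.
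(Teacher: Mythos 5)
Your proposal is correct and matches the paper's own proof in substance: the paper likewise writes $\mathbb{E}[r_t|h_t,a_t]$ as a double integral against $P(s_{t+1},r_t|h_t,a_t)$, substitutes the identity from Lemma~\ref{lemma:c_equiv}, marginalizes out $s_{t+1}$, and reads off $\mathbb{E}[r_t|c_t,a_t]$. Your only additions---noting bounded rewards for integrability and that Assumption~\ref{ass:discrete_action} makes the identity hold for every possible history rather than merely almost surely---are minor refinements of the same argument, not a different route.
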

\begin{proof}
\begin{equation*}
\begin{split}
\mathbb{E}[r_t|h_t,a_t] &= \int r_t \int P(s_{t+1}, r_t|h_t,a_t) ds_{t+1} dr_t \\
&= \int r_t \int P(s_{t+1},r_t|c_t,a_t) ds_{t+1} dr_t \\
&= \int r_t P(r_t | c_t, a_t) dr_t \\
&= \mathbb{E}[r_t|c_t, a_t].
\end{split}
\end{equation*}
\end{proof}
\begin{proposition}
Let Assumption \ref{ass:discrete_action} and let the loss in \cref{eq:metainfonce} be jointly minimized by $f,g,g_{AR}$, then $c_t$ satisfies \textbf{P2}, i.e., $P(c_{t+1}|h_t) = P(c_{t+1}|c_{t})$.
\end{proposition}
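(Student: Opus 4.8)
The plan is to establish property \textbf{P2} by showing that the next-information-state distribution depends on the history only through $c_t$, i.e., $P(c_{t+1}|h_t,a_t) = P(c_{t+1}|c_t,a_t)$. The statement as written omits the conditioning on $a_t$, but the natural route passes through the action-conditioned version, so first I would reduce to proving $P(c_{t+1}|h_t,a_t) = P(c_{t+1}|c_t,a_t)$ for every $a_t$. The key ingredient already available is \cref{lemma:c_equiv}, which gives $P(s_{t+1},r_t|h_t,a_t) = P(s_{t+1},r_t|c_t,a_t)$, together with the structural facts read off the causal model at the start of the appendix: namely that $c_t$ is a deterministic function of $h_t$, and that $c_{t+1}$ is conditionally independent of $h_t$ given $(c_t, s_{t+1}, r_t, a_t)$, formalized earlier as $P(c_{t+1}|s_{t+1},r_t,a_t,h_t) = P(c_{t+1}|s_{t+1},r_t,a_t,c_t)$.

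The main step is a marginalization argument. I would write, for any Borel set $B$ in $\mathcal{Z}_{t+1}$,
\begin{equation*}
\begin{split}
P(c_{t+1}\in B|h_t,a_t) &= \int P(c_{t+1}\in B|s_{t+1},r_t,a_t,h_t)\, P(s_{t+1},r_t|h_t,a_t)\, ds_{t+1}dr_t \\
&= \int P(c_{t+1}\in B|s_{t+1},r_t,a_t,c_t)\, P(s_{t+1},r_t|c_t,a_t)\, ds_{t+1}dr_t \\
&= P(c_{t+1}\in B|c_t,a_t),
\end{split}
\end{equation*}
where the first equality is the law of total probability, the second substitutes the causal-model identity into the kernel factor and \cref{lemma:c_equiv} into the measure factor, and the third is again the law of total probability, now over the $(c_t,a_t)$-conditioned process. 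This is exactly condition \textbf{P2} of the information-state definition.

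The subtle point, and the part I expect to require the most care, is licensing the two substitutions simultaneously inside the same integral. The causal identity rewrites the conditional kernel $P(c_{t+1}\in B|\cdot)$, while \cref{lemma:c_equiv} rewrites the weighting distribution $P(s_{t+1},r_t|\cdot)$; both are needed because $c_t = \sigma_t(h_t)$ is determined by $h_t$, so conditioning on $c_t$ alone is genuinely coarser than conditioning on $h_t$, and one must verify the integrand is pointwise unchanged before folding the integral back up. Once this joint substitution is justified the argument is immediate, so I would present the display above with a sentence identifying which hypothesis licenses each equality, and then conclude that together with the earlier \textbf{P1} proposition, $c_t$ satisfies both \textbf{P1} and \textbf{P2} and is therefore an information state, completing the proof of \cref{thm_main}.
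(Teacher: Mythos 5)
Your proposal is correct and takes essentially the same route as the paper's own proof: marginalize $P(c_{t+1}|h_t,a_t)$ over $(s_{t+1},r_t)$, substitute \cref{lemma:c_equiv} into the weighting factor and the causal-model identity $P(c_{t+1}|s_{t+1},r_t,a_t,h_t)=P(c_{t+1}|s_{t+1},r_t,a_t,c_t)$ (equivalently, $c_{t+1}$ being a deterministic function of $c_t,s_{t+1},r_t,a_t$) into the kernel, then fold the integral back up via the law of total probability. Your added care about justifying the two substitutions pointwise inside the integral is a sound refinement of the same argument, not a different approach.
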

\begin{proof}

\begin{equation*}
\begin{split}
    &P(c_{t+1}|h_t,a_t) \\  &=\int \int P(s_{t+1},r_t|h_t,a_t)P(c_{t+1}|h_t, s_{t+1},r_t,a_t)ds_{t+1} dr_t \\
    &= \int \int P(s_{t+1},r_t|c_t,a_t)P(c_{t+1}|h_t, c_t, s_{t+1},r_t,a_t)ds_{t+1}dr_t \\
    &= \int \int P(s_{t+1},r_t|c_t,a_t)P(c_{t+1}|c_t, s_{t+1},r_t,a_t)ds_{t+1}dr_t \\
    &= P(c_{t+1}|c_t,a_t).
\end{split}
\end{equation*}

where the second equality is due to lemma \ref{lemma:c_equiv} and the penultimate equality is due to $c_{t+1}$ being a deterministic function of $c_t,s_{t+1},r_t$ and $a_t$
\end{proof}
We now provide the proofs for the setting described in Section \ref{sec:approx_info_state}, where there may be errors in the CPC learning, and the data does not necessarily satisfy Assumption \ref{ass:discrete_action}. 

We recapitulate that we consider an iterative policy improvement algorithm with a similarity constraint on consecutive policies, similar to the PPO algorithm we use in practice \cite{schulman2017proximal}. We shall bound the suboptimality of policy improvement, when data for training CPC is collected using the previous policy, denoted $\pi_k$.  We will show optimal policy bounds when the information state is approximate, similar in spirit to \citet{subramanian2020approximate}, but with additional technicalities. Under the setting above, we will bound the suboptimality in policy improvement in terms of an error in CPC training, which we denote $\epsilon$. 

In light of the bound from \ref{lemma:extended_mi}, we assume the following:
\begingroup
\def\thetheorem{\ref{assumption:eps}}
\begin{assumption}
    There exists an $\epsilon$ such that for every $t\leq T$,
$I(s_{t+1},r_t;c_t|a_t) \geq I(s_{t+1},r_t;h_t|a_t) - \epsilon,$ where the histories are distributed according to policy $\pi_k$.
\end{assumption}
\addtocounter{theorem}{-1}
\endgroup

We now define $P_{\pi}(h_t)$ as the probability of seeing a history under a policy $\pi$. For the sake of simplicity, for the subsequent section we will refer to $P_{\pi}(h_t)$ as $P(h_t)$. Furthermore, when the information state is approximate, we denote the information state generator $\hat{\sigma}_t$.

We begin with the following bound.
\begin{proposition}\label{proposition:dkl_eps} Let Assumption \ref{assumption:eps} hold, then
    $\mathbb{E}_{h_t \sim P(h_t)}
\left [D_{KL}(P(s_{t+1},r_t|h_t,a_t) || P(s_{t+1},r_t|\hat{\sigma}_t(h_t),a_t) \right ]\leq \epsilon$
\begin{proof}
\begin{proposition}\label{proposition:ib_epsilon}
Let Assumption \ref{assumption:eps} hold, then $I(s_{t+1},r_t;h_t|c_t,a_t) \leq \epsilon$
\begin{proof}

We start with a result similar to the data processing inequality.
We have that
\begin{align*}
    \begin{split}
        &I(s_{t+1},r_t;h_t,c_t|a_t)= \\
        &I(s_{t+1},r_t;c_t|h_t,a_t) + I(s_{t+1},r_t;h_t|a_t)
    \end{split}
\end{align*}
and,
\begin{align*}
    \begin{split}
        &I(s_{t+1},r_t;h_t,c_t|a_t)= \\
        &I(s_{t+1},r_t;h_t|c_t,a_t) + I(s_{t+1},r_t;c_t|a_t)
    \end{split}
\end{align*}
From the causal graph we have that $I(s_{t+1},r_t;c_t|h_t,a_t)=0$, yielding 
\begin{align*}
    \begin{split}
        &I(s_{t+1},r_t;h_t|a_t) = I(s_{t+1},r_t;h_t|c_t,a_t) + I(s_{t+1},r_t;c_t|a_t) \Rightarrow \\
        &I(s_{t+1},r_t;h_t|a_t) - I(s_{t+1},r_t;c_t|a_t) = I(s_{t+1},r_t;h_t|c_t,a_t)
    \end{split}
\end{align*}
Combined with \ref{assumption:eps} we get that $I(s_{t+1},r_t;h_t|c_t,a_t) \leq \epsilon$
\end{proof}
\end{proposition}
\textbf{We note that from here on out everything is conditioned on $a_t$, and omit it to avoid overly cumbersome notation.}

For ease of notation we define:
$z=s_{t+1},r_t$.
We note that given a specific $h_t$, we have:
\begin{align*}
    \begin{split}
        &D_{KL}\left (P_{z|h_t} || P_{z|\hat{\sigma}_t(h_t)}\right )=\int_{z} P(z|h_t) \cdot \log \left ( \frac{P(z|h_t)}{P(z|c_t)} \right ) 
    \end{split}
\end{align*}
\begin{proposition}\label{proposition:inf_dkl}
$I(z;h_t | c_t) = \mathbb{E}_{h_t \sim P(h_t)} \left [D_{KL}\left (P_{z|h_t} || P_{z|\hat{\sigma}_t(h_t)}\right ) \right ]$
\begin{proof}

\begin{align*}
    \begin{split}
        I(z;h_t|c_t)&=\mathbb{E}_{P_{\sigma_t(h_t)=c_t}} \left [D_{KL}\left ( P_{z,h_t|c_t} || P_{z|c_t}\cdot P_{h_t | c_t} \right ) \right]\\
        &=\mathbb{E}_{P_{\sigma_t(h_t)=c_t}} \left [\int_{h_t} \int_{z}P(z,h_t|c_t) \ \log \left ( \frac{P(z,h_t|c_t)}{P(z|c_t) \cdot P(h_t | c_t)}\right ) \right] \\
        &=\mathbb{E}_{P_{\sigma_t(h_t)=c_t}} \left [\int_{h_t} \int_{z}P(z,h_t|c_t) \ \log \left ( \frac{P(z,h_t,c_t)\cdot P(c_t)}{P(z,c_t) \cdot P(h_t, c_t)}\right ) \right] \\
        &=\mathbb{E}_{P_{\sigma_t(h_t)=c_t}} \left [\int_{h_t} \int_{z}\frac{P(z,h_t,c_t)}{P(c_t)} \ \log \left ( \frac{P(z|h_t)}{P(z|c_t) }\right ) \right] \\
        &= \int_{c_t}\int_{h_t} \int_{z}P(z,h_t,c_t)\ \log \left ( \frac{P(z|h_t)}{P(z|c_t) }\right ) \\
        &= \int_{c_t}\int_{h_t} \int_{z}P(z|h_t)P(h_t,c_t)\ \log \left ( \frac{P(z|h_t)}{P(z|c_t) }\right ) \\
        &= \int_{c_t}\int_{h_t} P(h_t,c_t)\int_{z}P(z|h_t)\ \log \left ( \frac{P(z|h_t)}{P(z|c_t) }\right ) \\
        &= \int_{h_t} P(h_t)\int_{c_t}\delta_{c_t=\sigma_t(h_t)}\int_{z}P(z|h_t)\ \log \left ( \frac{P(z|h_t)}{P(z|c_t) }\right ) \\
        &=\mathbb{E}_{h_t \sim P(h_t)}
\left [D_{KL}(P(s_{t+1},r_t|h_t) || P(s_{t+1},r_t|\hat{\sigma}_t(h_t)) \right ]
    \end{split}
\end{align*}
\end{proof}
\end{proposition}
We now complete the proof. Combining Proposition \ref{proposition:ib_epsilon} with Proposition \ref{proposition:inf_dkl} we get that     \[\mathbb{E}_{h_t \sim P(h_t)}
\left [D_{KL}\left (P_{s_{t+1},r_t|h_t} || P_{s_{t+1},r_t|\hat{\sigma}_t(h_t)}\right ) \right ]\leq \epsilon \] as required.
\end{proof}
\end{proposition}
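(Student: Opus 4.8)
The plan is to reduce the target expected Kullback--Leibler divergence to a single conditional mutual information term, and then bound that term directly using the chain rule for mutual information together with Assumption \ref{assumption:eps}. Writing $z = (s_{t+1}, r_t)$, the two ingredients I would assemble are: (i) the identity $I(z; h_t \mid c_t, a_t) = \mathbb{E}_{h_t \sim P(h_t)}\left[ D_{KL}\!\left( P(z \mid h_t, a_t) \,\|\, P(z \mid \hat{\sigma}_t(h_t), a_t) \right) \right]$, and (ii) the bound $I(z; h_t \mid c_t, a_t) \leq \epsilon$. Chaining (i) and (ii) yields the claim immediately, so the whole proof amounts to proving these two facts and combining them.

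First I would establish the conditional mutual information bound (ii). The key structural fact, inherited from the causal model of \cref{fig:causal_model} and used already in the proof of \cref{thm_main}, is that $c_t$ is a deterministic function of $h_t$, so conditioning on $h_t$ renders $c_t$ uninformative about the future, i.e. $I(z; c_t \mid h_t, a_t) = 0$. Applying the chain rule to $I(z; h_t, c_t \mid a_t)$ in two ways gives
\[
I(z; c_t \mid h_t, a_t) + I(z; h_t \mid a_t) = I(z; h_t \mid c_t, a_t) + I(z; c_t \mid a_t).
\]
Using $I(z; c_t \mid h_t, a_t) = 0$ and rearranging,
\[
I(z; h_t \mid c_t, a_t) = I(z; h_t \mid a_t) - I(z; c_t \mid a_t) \leq \epsilon,
\]
where the final inequality is exactly Assumption \ref{assumption:eps}. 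This is the information-bottleneck-style step, and it is the only place where the CPC error $\epsilon$ enters.

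Second I would prove the identity (i). Starting from the definition $I(z; h_t \mid c_t) = \mathbb{E}_{c_t}\!\left[ D_{KL}\!\left( P(z, h_t \mid c_t) \,\|\, P(z \mid c_t)\, P(h_t \mid c_t) \right) \right]$ (here I suppress $a_t$, as the paper does from this point on), I would expand the inner divergence and substitute $P(z, h_t \mid c_t) = P(z \mid h_t)\, P(h_t \mid c_t)$, which holds because $z$ depends on the past only through $h_t$ and $c_t = \hat{\sigma}_t(h_t)$. The $P(h_t \mid c_t)$ factors cancel inside the logarithm, leaving an integrand proportional to $\log\!\left( P(z \mid h_t)/P(z \mid c_t) \right)$. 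I would then reduce the outer integration over $c_t$ to an expectation over $h_t$ alone: since $c_t = \hat{\sigma}_t(h_t)$ is deterministic, the joint over $(h_t, c_t)$ concentrates on the graph of $\hat{\sigma}_t$, which is formally handled by inserting a Dirac factor $\delta_{c_t = \hat{\sigma}_t(h_t)}$ that collapses the $c_t$-integral. The result is precisely the expected KL in (i).

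I expect the main obstacle to be the bookkeeping in step (i): because $c_t$ is a deterministic function of $h_t$, the joint $P(z, h_t, c_t)$ is degenerate, and one must verify that the conditional-entropy and KL manipulations remain well defined and that the collapse of the $c_t$-integration is genuinely justified rather than merely formal. Once this identity is secured, combining it with the $\epsilon$-bound from step (ii) finishes the proof with no further estimates required.
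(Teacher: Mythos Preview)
Your proposal is correct and follows essentially the same route as the paper: the paper also proves the two ingredients you isolate as separate sub-propositions, first obtaining $I(z;h_t\mid c_t,a_t)\leq\epsilon$ via the same two-way chain-rule expansion of $I(z;h_t,c_t\mid a_t)$ together with $I(z;c_t\mid h_t,a_t)=0$, and then establishing the identity $I(z;h_t\mid c_t)=\mathbb{E}_{h_t}\big[D_{KL}(P_{z\mid h_t}\,\|\,P_{z\mid \hat{\sigma}_t(h_t)})\big]$ by exactly the expansion-and-Dirac-collapse argument you outline. Your anticipated obstacle about the degenerate joint is the right place to be careful, and the paper handles it in the same formal way.
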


Let $\pi_k(\hat{\sigma}_t(h_t))$ denote the policy at iteration $k$, and note that it is defined on the information state. At iteration $k+1$, we first collect data using $\pi_k$. We denote $P_{\pi_k}(h_t)$ the probability of observing a history in this data collection process. We then use CPC to learn an approximate information state. Let $D(h_t) = D_{KL}\left (P_{s_{t+1},r_t|h_t,a_t} || P_{s_{t+1},r_t|\hat{\sigma}_t(h_t),a_t} \right )$.

\begin{proposition}\label{proposition:markov_dkl}
Let Assumption \ref{assumption:eps} hold, then \begin{equation}\label{eq:approx_info_state_bound}
    \sum_{h_t} P_{\pi_k}(h_t) D(h_t) \leq \epsilon.
\end{equation}
\end{proposition}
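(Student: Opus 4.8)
The plan is to recognize that this proposition is, up to notation, identical to Proposition~\ref{proposition:dkl_eps}: the quantity $D(h_t)$ is exactly the per-history divergence $D_{KL}(P_{s_{t+1},r_t|h_t,a_t} \| P_{s_{t+1},r_t|\hat{\sigma}_t(h_t),a_t})$ appearing inside the expectation there, and the sum $\sum_{h_t} P_{\pi_k}(h_t) D(h_t)$ is simply the discrete-space form of the expectation $\mathbb{E}_{h_t \sim P(h_t)}[D(h_t)]$. The only substantive thing to verify is that the generic history distribution $P(h_t)$ used throughout the preceding propositions coincides with $P_{\pi_k}(h_t)$; this holds because Assumption~\ref{assumption:eps}, on which all of these results rest, stipulates that the histories are distributed according to $\pi_k$.

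Concretely, I would assemble the bound directly from the two auxiliary results already in hand rather than reprove it from scratch. First, apply Proposition~\ref{proposition:inf_dkl} to rewrite the expected divergence as a conditional mutual information, obtaining $\mathbb{E}_{h_t \sim P_{\pi_k}(h_t)}[D(h_t)] = I(s_{t+1},r_t; h_t \mid c_t, a_t)$. Second, invoke Proposition~\ref{proposition:ib_epsilon}, which---via the chain-rule decomposition of $I(s_{t+1},r_t; h_t, c_t \mid a_t)$ together with $I(s_{t+1},r_t; c_t \mid h_t, a_t)=0$ from the causal graph and Assumption~\ref{assumption:eps}---gives $I(s_{t+1},r_t; h_t \mid c_t, a_t) \le \epsilon$. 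Chaining these two facts yields $\sum_{h_t} P_{\pi_k}(h_t) D(h_t) \le \epsilon$, which is the claim.

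There is essentially no analytic obstacle here; the work is notational bookkeeping. The one point requiring care is reconciling the integral form used in Proposition~\ref{proposition:inf_dkl}---which integrates over $c_t$ against the delta $\delta_{c_t = \sigma_t(h_t)}$, exploiting that $c_t$ is a deterministic function of $h_t$---with the discrete sum over histories stated here; if the history space is continuous the sum should be read as the corresponding integral, but the argument is unchanged. I would also confirm that the conditioning on $a_t$ suppressed for convenience in Propositions~\ref{proposition:ib_epsilon} and~\ref{proposition:inf_dkl} is restored consistently inside $D(h_t)$, which it is by the definition given just above the statement.
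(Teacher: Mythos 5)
Your proposal is correct and matches the paper's argument exactly: the paper proves Proposition~\ref{proposition:markov_dkl} as an immediate corollary of Proposition~\ref{proposition:dkl_eps}, which is itself established via precisely the two steps you cite (Proposition~\ref{proposition:inf_dkl} identifying the expected KL with $I(s_{t+1},r_t;h_t\mid c_t,a_t)$, and Proposition~\ref{proposition:ib_epsilon} bounding that mutual information by $\epsilon$ via the chain rule and the causal-graph fact $I(s_{t+1},r_t;c_t\mid h_t,a_t)=0$). Your added bookkeeping---that $P(h_t)$ abbreviates $P_{\pi_k}(h_t)$ per Assumption~\ref{assumption:eps}, and that the sum is the discrete form of the expectation---is consistent with the paper's conventions and requires no new ideas.
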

\begin{proof}
    Assumption \ref{assumption:eps} holds, therefore the result is an immediate corollary from \ref{proposition:dkl_eps} for every $t\in 0,1,\dots,T-1$.
\end{proof}

For some distance measure $D$, let $\Pi_\beta = \left\{ \pi : D(\pi(h_t), \pi_k(\hat{\sigma}_t(h_t))) \leq \beta \quad \forall h_t \right\}$ denote the set of policies that are $\beta$-similar to $\pi_k$.

We next define the optimal next policy $\pi^*$
\begin{equation}
    \pi^* \in \argmax_{\pi \in \Pi_\beta} \mathbb{E}^\pi \left[ \sum_{t=0}^{T-1} r(s_t,a_t) \right].
\end{equation}

Note that the value of this policy satisfies the following Bellman optimality equations:
\begin{equation}
    \begin{split}
        Q_t(h_t, a_t) &= r(h_t, a_t) + \mathbb{E}\left[ V_{t+1}(h_{t+1})\right] \\
        V_t(h_t) &= \max_{\pi: D(\pi(h_t), \pi_k(h_t)) \leq \beta} \sum_{a} \pi(a) Q_t(h_t, a),
    \end{split}
\end{equation}
for $t\leq T$, and $V_T(h_T) = 0$.

\textbf{We now present our main result, where we consider an iterative policy improvement scheme based on the approximate information state of ContraBAR and provide policy improvement bounds.}
\begingroup
\def\thetheorem{\ref{thm:approx_ais}}
\begin{theorem}
Let Assumption \ref{assumption:eps} hold for some representation $c_t$. Consider the distance function between two distributions $D(P_1(x),P_2(x)) = \max_x | P_1(x)/P_2(x) |$.
We let $\hat{r}(c_t,a_t)=\mathbb{E}[r_t|c_t,a_t]$ and $\hat{P}(c'|c_t,a_t)=\mathbb{E}[\mathbf{1}(c_{t+1}=c')|c_t,a_t]$ denote an approximate reward and transition kernel, respectively. Define the value functions
\begin{equation}
    \begin{split}
        \hat{Q}_t(c_t, a_t) &= \hat{r}(c_t, a_t) + \sum_{c_{t+1}}\hat{P}(c_{t+1}|c_t,a_t) \hat{V}_{t+1}(c_{t+1}) \\
        \hat{V}_t(c_t) &= \max_{\pi: D(\pi(c_t), \pi_k(c_t)) \leq \beta} \sum_{a} \pi(a) \hat{Q}_t(c_t, a),
    \end{split}
    \tag{\ref{eq:DP_inside_thm}}
    \raisetag{2.2em}
\end{equation}
for $t\leq T$, and $\hat{V}_T(c_T) = 0$, and
the approximate optimal policy 
\begin{equation}
\hat{\pi}(c_t)\in \argmax_{\pi: D(\pi, \pi_k(c_t)) \leq \beta} \sum_{a} \pi(a) \hat{Q}_t(c_t, a).
\tag{\ref{eq:policy_opt_inside_thm}}
\end{equation}
Let the optimal policy $\pi^*(h_t)$ be defined similarly, but with $h_t$ replacing $c_t$ in \eqref{eq:DP_inside_thm} and \eqref{eq:policy_opt_inside_thm}.
Then we have that 
\begin{equation*}
\begin{split}
    &\mathbb{E}^{\pi^*} \left[ \sum_{t=0}^{T-1} r(s_t,a_t) \right] - \mathbb{E}^{\hat{\pi}} \left[ \sum_{t=0}^{T-1} r(s_t,a_t) \right] \leq \epsilon^{1/3} R_{max} T^2 (\sqrt{2} + 4\beta^T).
\end{split}
\end{equation*}
\end{theorem}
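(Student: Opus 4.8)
The plan is to follow the approximate-information-state (AIS) programme of \citet{subramanian2020approximate}: convert the information-theoretic hypothesis of Assumption~\ref{assumption:eps} into a statistical-distance bound between the true and approximate one-step models, and then telescope this per-step error through the finite-horizon dynamic program in \eqref{eq:DP_inside_thm}. Two features distinguish our setting from the standard AIS analysis: (i) the hypothesis controls only the \emph{expected} KL divergence under the data-collection policy $\pi_k$, whereas the value gap must be evaluated under the trajectory distributions induced by $\pi^*$ and $\hat\pi$; and (ii) the policies are tied to $\pi_k$ through the max-ratio constraint defining $\Pi_\beta$. Handling (i) forces a good-set/bad-set split that produces the $\epsilon^{1/3}$ rate, and handling (ii) is precisely what introduces the $\beta^T$ factor.

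First I would turn the established bound $\sum_{h_t}P_{\pi_k}(h_t)D(h_t)\le\epsilon$ into a pointwise statistical estimate on a large set. Fix a threshold $\delta$ and let the bad set be $B_t=\{h_t: D(h_t)>\delta\}$; Markov's inequality gives $P_{\pi_k}(B_t)\le \epsilon/\delta$, while on its complement Pinsker's inequality yields $\|P(s_{t+1},r_t\mid h_t,a_t)-P(s_{t+1},r_t\mid c_t,a_t)\|_1\le\sqrt{2\delta}$. From this single $L_1$ bound I would read off both model errors: the reward error $|r(h_t,a_t)-\hat r(c_t,a_t)|\le R_{max}\sqrt{2\delta}$, since $\hat r(c_t,a_t)=\mathbb{E}[r_t\mid c_t,a_t]$ and rewards lie in $[-R_{max},R_{max}]$; and, because $c_{t+1}$ is a deterministic function of $(c_t,s_{t+1},r_t,a_t)$, the data-processing inequality pushes the same bound forward to $\|P(c_{t+1}\mid h_t,a_t)-\hat P(c_{t+1}\mid c_t,a_t)\|_1\le\sqrt{2\delta}$ on the good set.

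Next I would run the simulation-lemma telescoping for the DP \eqref{eq:DP_inside_thm}. Writing the value gap as a sum of per-step mismatches between the true model and $(\hat r,\hat P)$, each stage contributes a reward error plus a transition error multiplied by the range of the next-step value, which is at most $R_{max}T$; summing the $R_{max}T\cdot(\text{transition error})$ terms over the $T$ stages produces the overall $T^2$ factor. Since $\pi^*$ and $\hat\pi$ both lie in $\Pi_\beta$ and $D$ is the max-ratio metric, every one-step action ratio against $\pi_k$ is at most $\beta$, so $P_{\pi^*}(h_t),P_{\hat\pi}(h_t)\le\beta^T P_{\pi_k}(h_t)$; transferring the bad-set mass $\epsilon/\delta$ to these evaluation measures therefore costs a factor $\beta^T$, whereas the good-set $\sqrt{2\delta}$ estimate is pointwise and survives the change of measure unchanged. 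Collecting the good-set contribution ($\propto\sqrt{2\delta}$) and the bad-set contribution ($\propto\beta^T\epsilon/\delta$, bounded crudely by the maximal value $R_{max}T$) and choosing $\delta=\epsilon^{2/3}$ balances the two terms at the common rate $\epsilon^{1/3}$, giving $\epsilon^{1/3}R_{max}T^2(\sqrt2+4\beta^T)$.

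I expect the main obstacle to be the bookkeeping in the telescoping step: the error hypothesis lives under $\pi_k$, yet $\hat\pi$ is the optimizer of the \emph{approximate} DP but is evaluated in the \emph{true} environment, so one must carefully interleave (a) the comparison of the approximate value of $\hat\pi$ with its true value, (b) the optimality of $\hat\pi$ in the approximate DP together with the feasibility of $\pi^*$ there, and (c) the $\beta^T$ reweighting applied only to the bad-set terms while the good-set terms stay pointwise. Making the two $\epsilon^{1/3}$ contributions align under the single optimization $\delta=\epsilon^{2/3}$, and ensuring the constants collapse to exactly $\sqrt2+4\beta^T$, is the delicate part; the remaining reward and transition estimates are routine once the good-set/bad-set decomposition and the pushforward to $c_{t+1}$ are in place.
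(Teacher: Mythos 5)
Your proposal follows essentially the same route as the paper's proof: a Markov-inequality good/bad-set split at threshold $\delta = n\epsilon = \epsilon^{2/3}$ (the paper's $n=\epsilon^{-1/3}$), Pinsker's inequality giving the $\sqrt{2\delta}$ model error on the good set, a backward-induction telescoping through the constrained dynamic program that accumulates $\alpha_0 \propto T^2\sqrt{2n\epsilon}\,R_{max}$, and a $\beta^T$ change of measure from $\pi_k$ to the evaluation policies for the bad-set mass, balanced to yield exactly $\epsilon^{1/3}R_{max}T^2(\sqrt{2}+4\beta^T)$. The paper implements your interleaving steps (a)--(c) concretely via two auxiliary policies (one behaving worst-case, one optimally, on the bad set), but this is bookkeeping within the same argument rather than a different approach.
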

\addtocounter{theorem}{-1}
\endgroup

\begin{proof}
Since Assumption \ref{assumption:eps} holds, Proposition \ref{proposition:markov_dkl} does as well. 

From the Markov inequality, we have $P_{\pi_k}(D(h_t) \geq n \epsilon) \leq \frac{\epsilon}{n \epsilon} = \frac{1}{n}$.

We now the define the ``Good Set'' $H_G = \{ h_t : D(h_t) < n \epsilon\}$ and the ``Bad Set'' $H_B = \{ h_t : D(h_t) \geq n \epsilon\}$.

Next, we define an auxiliary policy $\tilde{\pi}(h_t) = \begin{cases}
			\pi^*(h_t), & \text{if } h_t \in H_G\\
            \text{worst behavior}, & \text{if } h_t \in H_B
		 \end{cases}$.

We will assume that after observing $h_t \in H_B$, the policy performs as bad as possible for the rest of the episode.

Next, we bound the performance of $\tilde{\pi}$.

\begin{proposition}\label{prop:high_prob_bound_1}
We have that $\mathbb{E}^{\pi^*} \left[ \sum_{t=0}^{T-1} r(s_t,a_t) \right] - \mathbb{E}^{\tilde{\pi}} \left[ \sum_{t=0}^{T-1} r(s_t,a_t) \right] \leq 2T^2 R_{max}\beta^T/n .$
\end{proposition}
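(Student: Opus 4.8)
The plan is to bound the performance gap between $\pi^*$ and the auxiliary policy $\tilde{\pi}$ by exploiting the fact that $\tilde{\pi}$ agrees with $\pi^*$ on the ``Good Set'' $H_G$ and only differs on the ``Bad Set'' $H_B$, whose probability we have already controlled via the Markov inequality, namely $P_{\pi_k}(h_t \in H_B) \leq 1/n$. The key observation is that the two policies produce identical behavior until the first time a history in $H_B$ is encountered; from that point $\tilde{\pi}$ incurs the worst possible reward. Since per-step rewards are bounded in $[-R_{max}, R_{max}]$, the maximal loss incurred after hitting the bad set at time $t$ is at most $2 R_{max}(T - t) \leq 2 R_{max} T$ (the factor $2$ accounting for the worst case swing from $+R_{max}$ to $-R_{max}$).

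First I would decompose the expected return difference by conditioning on the first hitting time of $H_B$. The contribution to the gap from any trajectory that never enters $H_B$ is zero, since $\tilde{\pi}$ and $\pi^*$ coincide there. For trajectories that do enter $H_B$ at some time $t$, the loss is at most $2 R_{max} T$ per trajectory, so the total gap is bounded by $2 R_{max} T$ times the probability (under $\pi^*$) of ever reaching a history in $H_B$. The subtlety is that the bound $P_{\pi_k}(h_t \in H_B) \leq 1/n$ holds under the \emph{data-collection} policy $\pi_k$, whereas the expectation in the statement is taken under $\pi^*$. This is precisely where the $\beta$-similarity constraint enters: because $\pi^* \in \Pi_\beta$ is constrained so that $D(\pi^*(h_t), \pi_k(\hat\sigma_t(h_t))) \leq \beta$ with the chosen distance $D(P_1, P_2) = \max_x |P_1(x)/P_2(x)|$, the likelihood ratio between trajectory distributions under $\pi^*$ and $\pi_k$ accumulates multiplicatively over the horizon, giving $P_{\pi^*}(h_t) / P_{\pi_k}(h_t) \leq \beta^t \leq \beta^T$. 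Hence the probability of reaching the bad set under $\pi^*$ is inflated by at most a factor $\beta^T$ relative to $\pi_k$.

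Putting these together, I would write the gap as at most $2 R_{max} T \cdot P_{\pi^*}(\exists t: h_t \in H_B) \leq 2 R_{max} T \cdot \beta^T \cdot \sum_t P_{\pi_k}(h_t \in H_B)$, and then bound each term using $P_{\pi_k}(h_t \in H_B) \leq 1/n$. Summing over the $T$ time steps contributes another factor of $T$, yielding $2 R_{max} T^2 \beta^T / n$, which matches the claimed bound in Proposition~\ref{prop:high_prob_bound_1}. The main obstacle I anticipate is making the change-of-measure argument from $\pi_k$ to $\pi^*$ fully rigorous: one must verify that the per-step ratio bound $\beta$ on the policies translates correctly into a bound $\beta^T$ on the ratio of full trajectory distributions (the transition and reward kernels are shared across both policies and cancel, so only the policy factors contribute), and one must carefully account for the union over hitting times rather than double-counting trajectories. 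A clean way to handle this is to bound the probability of being in $H_B$ at each fixed time $t$ separately and sum, rather than tracking a single first-hitting time, which trades a tighter constant for a simpler argument and still delivers the stated $T^2$ dependence.
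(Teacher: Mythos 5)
Your proposal is correct and follows essentially the same route as the paper: both arguments observe that $\tilde{\pi}$ and $\pi^*$ coincide until the first visit to $H_B$, bound the probability of ever visiting $H_B$ by $T\beta^T/n$ via the per-step Markov bound $P_{\pi_k}(h_t \in H_B)\leq 1/n$, the likelihood-ratio bound $P_{\pi}(h_t)/P_{\pi_k}(h_t)\leq \beta^t$ for $\pi\in\Pi_\beta$, and a union bound over the $T$ time steps, and then charge at most a $2TR_{max}$ return swing to the bad trajectories. Your packaging of the factor $2$ as the worst-case swing on bad trajectories is just a condensed version of the paper's add-and-subtract of $\sum_{h_T\in\bar{H}_B} P_{\pi^*}(h_T)R(h_T)$, so no substantive difference remains.
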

\begin{proof}
We will denote by $r_t(h_t)$ the reward at the last state-action pair. That is, for $h_t = s_0,a_0,r_0,\dots,s_{t-1},a_{t-1},r_{t-1},s_t$ we set $r_t(h_t) = r_{t-1}$. We will denote $R(h_t)$ the sum of rewards, that is, $R(h_t) = \sum_{t'=0}^{t-1} r_{t'}$.

We also denote by $P_{\pi}(h_t)$ the probability of observing history $h_t$ under policy $\pi$. 
Note that by definition $\sum_{t=0}^{T-1} \sum_{h_t} P_{\pi}(h_t) = 1$. Also, note that by the definition of the set $\Pi_\beta$, for any two policies $\pi_1,\pi_2 \in \Pi_\beta$ we have $P_{\pi_1}(h_t)/P_{\pi_2}(h_t) \leq \beta^t$.

We now claim that 
$$\mathbb{E}^{\tilde{\pi}} \left[ \sum_{t=0}^{T-1} r_t \right] \geq \mathbb{E}^{\pi^*} \left[ \sum_{t=0}^{T-1} r_t \right] - 2T^2 R_{max}\beta^T/n.$$

We first estimate the probability that policy $\tilde{\pi}$ encounters a history in $H_B$. Consider some $t\in 0,\dots,T-1$. We have that under $P_{\pi_k}$, with probability at most $1/n$, $h_t \in H_B$. Under $P_{\tilde{\pi}}$, with probability at most $\beta^t/n$, $h_t \in H_B$. From the union bound, with probability at most $T\beta^T/n$ the policy visits at least one history in $H_B$.

Let $\bar{H}_B$ denote the set of $T$-length histories that visit a history in $H_B$, and let  $\bar{H}_G$ be its complement set.

Now, note that 
\begin{equation}
\begin{split}
\mathbb{E}^{\tilde{\pi}} \left[ \sum_{t=0}^{T-1} r_t \right] &= \sum_{h_T} P_{\tilde{\pi}}(h_T) R(h_T) \\
&= \sum_{h_T\in \bar{H}_G} P_{\tilde{\pi}}(h_T) R(h_T) + \sum_{h_T\in \bar{H}_B} P_{\tilde{\pi}}(h_T) R(h_T) \\
&= \sum_{h_T\in \bar{H}_G} P_{\pi^*}(h_T) R(h_T) + \sum_{h_T\in \bar{H}_B} P_{\tilde{\pi}}(h_T) R(h_T) \\
&\geq \sum_{h_T\in \bar{H}_G} P_{\pi^*}(h_T) R(h_T) + (T\beta^T/n) T (-R_{max}) \\
&= \sum_{h_T} P_{\pi^*}(h_T) R(h_T) - \sum_{h_T\in \bar{H}_B} P_{\pi^*}(h_T) R(h_T) + (T\beta^T/n) T (-R_{max})\\
&\geq \mathbb{E}^{\pi^*} \left[ \sum_{t=0}^{T-1} r_t \right] -2T^2 R_{max}\beta^T/n 
\end{split}
\end{equation}
The third equality is from the definition of $\pi^*$. The fourth inequality relies on the reward function being bounded, i.e $R(h_T) \geq T(-R_{max})$. This alongside the fact that $\sum_{h_T \in \bar{H}_B}P_{\tilde{\pi}}(h_T) \geq (TB^T/n)$ gives us the inequality. Note that the last inequality follows from the definition of $\pi_*$, wherein the probability of visiting at least one history in $H_B$  is the same for $\pi_*$ and $\tilde{\pi}$.
\end{proof}

Next, we note that using Pinsker's inequality, we have
$d_{TV}(P_{s_{t+1},r_t|h_t,a_t},P_{s_{t+1},r_t|c_t,a_t}) \leq \sqrt{2 d_{KL}(P_{s_{t+1},r_t|h_t,a_t},P_{s_{t+1},r_t|c_t,a_t})}$, and that
\begin{equation*}
    \begin{split}
    &|\mathbb{E}[r_t|h_t, a_t] - \mathbb{E}[r_t|c_t, a_t]|
     \leq R_{max} d_{TV}(P_{s_{t+1},r_t|h_t,a_t},P_{s_{t+1},r_t|c_t,a_t}) \\
     &|\mathbb{E}[V_{t+1}|h_t, a_t] - \mathbb{E}[V_{t+1}|c_t, a_t]|
     \leq R_{max}(T-t) d_{TV}(P_{s_{t+1},r_t|h_t,a_t},P_{s_{t+1},r_t|c_t,a_t})
    \end{split}
\end{equation*}

We next prove the following result.
\begin{proposition}\label{prop:induction_bound_1}
We have that
\begin{equation}\label{eq:value_bound_1}
\begin{split}
    \hat{Q}_t(\hat{\sigma}_t(h_t), a) &\geq Q^{\tilde{\pi}}(h_t,a) -\alpha_t, \\
    \hat{V}_t(\hat{\sigma}_t(h_t)) &\geq V^{\tilde{\pi}}(h_t) -\alpha_t,
\end{split}
\end{equation}
where $\alpha_t$ satisfies the following recursion: $\alpha_{T} = 0$, and $\alpha_t = \sqrt{2 n \epsilon}R_{max} (T-t+1) + \alpha_{t+1}$.
\end{proposition}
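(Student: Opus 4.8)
The plan is to prove Proposition~\ref{prop:induction_bound_1} by backward induction on $t$, running from the terminal time $T$ down to $0$, establishing both inequalities in \eqref{eq:value_bound_1} simultaneously. The base case $t=T$ is immediate: by definition $\hat{V}_T(c_T)=0=V^{\tilde{\pi}}_T(h_T)$ and $\alpha_T=0$, so the value inequality holds with equality. The inductive step is where the real work lies, and it must chain together two distinct sources of error: (i) the mismatch between the true history-conditioned one-step quantities and their $c_t$-conditioned approximations $\hat{r}$, $\hat{P}$, which is controlled by the total-variation bound derived above via Pinsker; and (ii) the propagation of the previous-level error $\alpha_{t+1}$ forward through the Bellman recursion.

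The concrete steps I would carry out inside the inductive step are as follows. First I would bound $\hat{Q}_t(\hat{\sigma}_t(h_t),a)$ from below by comparing it to $Q^{\tilde{\pi}}_t(h_t,a)$. Writing $\hat{Q}_t = \hat{r}(c_t,a) + \mathbb{E}[\hat{V}_{t+1}(c_{t+1})\mid c_t,a]$ and $Q^{\tilde{\pi}}_t = r(h_t,a)+\mathbb{E}[V^{\tilde{\pi}}_{t+1}(h_{t+1})\mid h_t,a]$, I would insert and subtract the intermediate term $\mathbb{E}[V^{\tilde{\pi}}_{t+1}\mid c_t,a]$ so that the difference splits into a reward error, a value-function error due to the distribution mismatch between conditioning on $h_t$ versus $c_t$, and the induction term $\mathbb{E}[\hat{V}_{t+1}(c_{t+1}) - V^{\tilde{\pi}}_{t+1}(h_{t+1})]$. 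The first two pieces are each bounded using the total-variation inequalities stated just before the proposition, noting that on the good set $H_G$ we have $D(h_t)<n\epsilon$ and hence $d_{TV}\le\sqrt{2n\epsilon}$; crucially $V^{\tilde{\pi}}_{t+1}$ is bounded by $R_{max}(T-t)$ in magnitude so the value error scales with $(T-t)$. Collecting these gives a per-step error of roughly $\sqrt{2n\epsilon}\,R_{max}(T-t+1)$, which is exactly the increment in the recursion $\alpha_t=\sqrt{2n\epsilon}\,R_{max}(T-t+1)+\alpha_{t+1}$, while the induction term contributes $-\alpha_{t+1}$ by the inductive hypothesis on $\hat{V}_{t+1}$.

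Second, to pass from the $\hat{Q}$ inequality to the $\hat{V}$ inequality I would use that both $\hat{V}_t$ and $V^{\tilde{\pi}}_t$ are defined as maxima over the \emph{same} constraint set $\{\pi: D(\pi,\pi_k)\le\beta\}$ — one over $\hat{Q}_t$ conditioned on $c_t$, the other over $Q^{\tilde{\pi}}_t$ conditioned on $h_t$. Since the $\hat{Q}$ bound holds uniformly in $a$, taking the constrained maximum preserves the inequality up to the same additive $\alpha_t$, because the policy that is optimal for $V^{\tilde{\pi}}_t$ is feasible for $\hat{V}_t$ and yields objective value at least $V^{\tilde{\pi}}_t(h_t)-\alpha_t$. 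I expect the main obstacle to be handling the $H_B$ (bad set) histories cleanly: the total-variation bound $d_{TV}\le\sqrt{2n\epsilon}$ is only valid where $D(h_t)<n\epsilon$, so the induction as stated really controls behavior along good-set trajectories, and the worst-case contribution of bad histories must be absorbed separately — which is precisely the role of the auxiliary policy $\tilde{\pi}$ and Proposition~\ref{prop:high_prob_bound_1}. Reconciling these two lemmas and finally optimizing the free parameter $n$ (the Markov-inequality threshold) against $\beta$ to produce the clean $\epsilon^{1/3}$ rate in Theorem~\ref{thm:approx_ais} will be the delicate bookkeeping step.
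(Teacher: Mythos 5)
Your proposal is correct and follows essentially the same route as the paper: backward induction from $t=T$ with the same three-term decomposition (reward error and one-step distribution mismatch each bounded by $\sqrt{2n\epsilon}\,R_{max}$ and $\sqrt{2n\epsilon}\,R_{max}(T-t)$ via Pinsker on the good set, plus the $\alpha_{t+1}$ induction term), bad histories $h_t\in H_B$ handled trivially inside the induction because $\tilde{\pi}$'s worst-case continuation makes $Q^{\tilde{\pi}}(h_t,a)\leq \hat{Q}_t(\hat{\sigma}_t(h_t),a)$ by construction, and the passage from $\hat{Q}$ to $\hat{V}$ by maximizing over the common $\beta$-constrained policy set. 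One small repair to your splitting: the mismatch term should be applied to $\hat{V}_{t+1}(\hat{\sigma}_{t+1}(h_{t+1}))$ — which, given $(c_t,a)$, is a deterministic function of $(s_{t+1},r_t)$ and hence directly controlled by the total-variation bound on the one-step kernels — rather than to $\mathbb{E}[V^{\tilde{\pi}}_{t+1}\mid c_t,a]$ as you wrote, since $V^{\tilde{\pi}}_{t+1}(h_{t+1})$ conditioned on $c_t$ alone depends on the posterior over full histories and not only on $(s_{t+1},r_t)$; this is precisely the ordering the paper uses, and with it your argument goes through unchanged.
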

\begin{proof}
    We prove by backward induction. The argument holds for $T$ by definition. Assume that Equation \eqref{eq:value_bound_1} holds at time $t+1$, and consider time $t$. If $h_t \in H_B$, then by definition $\hat{Q}_t(\hat{\sigma}_t(h_t), a) \geq Q^{\tilde{\pi}}(h_t,a)$, since $\tilde{\pi}$ will take the worst possible actions after observing $h_t$. Otherwise, $h_t \in H_G$ and we have
\begin{equation*}
    \begin{split}
        &Q^{\tilde{\pi}}(h_t,a) - \hat{Q}_t(\hat{\sigma}_t(h_t), a) \\
        =& \mathbb{E}[r_t|h_t, a] + \mathbb{E}\left[V^{\tilde{\pi}}_{t+1}(h_{t+1})|h_t, a\right] - \hat{r}(\hat{\sigma}_t(h_t), a) - \sum_{c_{t+1}} \hat{P}(c_{t+1}|\hat{\sigma}_t(h_t),a)\hat{V}_{t+1}(c_{t+1})\\
        =& \mathbb{E}[r_t|h_t, a] - \mathbb{E}[r_t|c_t,a_t] \\
        &+ \mathbb{E}\left[V^{\tilde{\pi}}_{t+1}(h_{t+1})|h_t, a\right] - \mathbb{E}\left[\hat{V}_{t+1}(\hat{\sigma}_{t+1}(h_{t+1}))|h_t, a\right] \\
        &+ \mathbb{E}\left[\hat{V}_{t+1}(\hat{\sigma}_{t+1}(h_{t+1}))|h_t, a\right] - \sum_{c_{t+1}} \hat{P}(c_{t+1}|\hat{\sigma}_t(h_t),a)\hat{V}_{t+1}(c_{t+1})\\
        \leq& \sqrt{2 n \epsilon}R_{max} + \alpha_{t+1} + \sqrt{2 n \epsilon}R_{max}(T-t).
    \end{split}
\end{equation*}
We note that for $h_t \in H_G$, $D(h_t) \leq n\epsilon$, yielding the $d_{TV}$ bounds.

For the second part, If $h_t \in H_B$, then by definition $\hat{V}_t(\hat{\sigma}_t(h_t)) \geq V^{\tilde{\pi}}(h_t)$. Otherwise, $h_t \in H_G$ and we have
\begin{equation*}
\begin{split}
    V^{\tilde{\pi}}_t(h_t) - \hat{V}_t(\hat{\sigma}_t(h_t)) &= \max_{\pi: D(\pi(h_t), \pi_k(h_t)) \leq \beta} \sum_{a} \pi(a) Q^{\tilde{\pi}}_t(h_t, a) - \max_{\pi: D(\pi(h_t), \pi_k(h_t)) \leq \beta} \sum_{a} \pi(a) \hat{Q}_t(\hat{\sigma}_t(h_t), a) \\
    &\leq \max_{\pi: D(\pi(h_t), \pi_k(h_t)) \leq \beta} \sum_{a} \pi(a) (\hat{Q}_t(\hat{\sigma}_t(h_t), a)+ \alpha_t) - \max_{\pi: D(\pi(h_t), \pi_k(h_t)) \leq \beta} \sum_{a} \pi(a) \hat{Q}_t(\hat{\sigma}_t(h_t), a)  \\
    &=\alpha_t.
\end{split}
\end{equation*}
\end{proof}


We next define another auxiliary policy $\tilde{\hat{\pi}}(h_t) = \begin{cases}
			\hat{\pi}(h_t), & \text{if } h_t \in H_G\\
            \text{optimal behavior}, & \text{if } h_t \in H_B
		 \end{cases}$.
We will assume that after observing $h_t \in H_B$, the policy perform optimally for the rest of the episode.
Therefore, $V^{\tilde{\hat{\pi}}}_t(h_t \in H_B) = V_t(h_t)$.
We have the following results, analogous to Propositions \ref{prop:high_prob_bound_1} and \ref{prop:induction_bound_1}.

\begin{proposition}\label{prop:high_prob_bound_2}
We have that $\mathbb{E}^{\tilde{\hat{\pi}}} \left[ \sum_{t=0}^{T-1} r(s_t,a_t) \right] - \mathbb{E}^{\hat{\pi}} \left[ \sum_{t=0}^{T-1} r(s_t,a_t) \right] \leq 2T^2 R_{max}\beta^T/n .$
\end{proposition}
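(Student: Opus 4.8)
The plan is to mirror the argument used for Proposition \ref{prop:high_prob_bound_1}, exploiting the fact that $\tilde{\hat{\pi}}$ and $\hat{\pi}$ are identical on the good set $H_G$ and differ only on the bad set $H_B$. First I would expand both expectations over full-length histories, writing $\mathbb{E}^{\tilde{\hat{\pi}}}[\sum_t r_t] - \mathbb{E}^{\hat{\pi}}[\sum_t r_t] = \sum_{h_T} (P_{\tilde{\hat{\pi}}}(h_T) - P_{\hat{\pi}}(h_T))R(h_T)$, and then split the sum according to whether the $T$-length trajectory $h_T$ ever visits a history in $H_B$, i.e.\ into $\bar{H}_G$ (never visits $H_B$) and its complement $\bar{H}_B$.

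The key simplification is that on $\bar{H}_G$ the two policies take exactly the same actions at every step (both follow $\hat{\pi}(\hat{\sigma}_t(h_t))$ until the moment an $H_B$ history is first encountered, which never happens for these trajectories), so $P_{\tilde{\hat{\pi}}}(h_T) = P_{\hat{\pi}}(h_T)$ for all $h_T \in \bar{H}_G$, and this term cancels identically. It then remains only to bound the contribution of $\bar{H}_B$.

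For the bad trajectories I would reuse the probability estimate established for Proposition \ref{prop:high_prob_bound_1}: by the Markov inequality $P_{\pi_k}(D(h_t) \geq n\epsilon) \leq 1/n$, and since both $\tilde{\hat{\pi}}$ and $\hat{\pi}$ lie in $\Pi_\beta$, the importance-ratio bound $P_\pi(h_t)/P_{\pi_k}(h_t) \leq \beta^t \leq \beta^T$ together with a union bound over $t=0,\dots,T-1$ gives $\sum_{h_T \in \bar{H}_B} P_\pi(h_T) \leq T\beta^T/n$ under \emph{either} policy. Bounding $R(h_T) \in [-TR_{max}, TR_{max}]$, the $\tilde{\hat{\pi}}$ term contributes at most $(T\beta^T/n)(TR_{max})$ and the $-\hat{\pi}$ term at most another $(T\beta^T/n)(TR_{max})$, yielding the claimed $2T^2 R_{max}\beta^T/n$.

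The main obstacle I anticipate is the bookkeeping needed to justify that the two policies genuinely coincide on $\bar{H}_G$ as distributions over full trajectories, rather than merely on truncated histories -- one must argue that any deviation can occur only after $H_B$ is first hit, so trajectories confined to $H_G$ are generated with identical probability. The uniform probability-mass bound $T\beta^T/n$ for both $\tilde{\hat{\pi}}$ and $\hat{\pi}$ is immediate from their $\beta$-similarity to $\pi_k$ but should be stated explicitly. Everything else is the same bounded-reward accounting as in Proposition \ref{prop:high_prob_bound_1}.
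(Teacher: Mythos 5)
Your proposal is correct and follows essentially the same route as the paper: the paper proves this proposition simply by declaring it analogous to Proposition~\ref{prop:high_prob_bound_1}, and your argument carries out exactly that analogy --- the same decomposition over $\bar{H}_G$ and $\bar{H}_B$, the cancellation on $\bar{H}_G$ because $\tilde{\hat{\pi}}$ and $\hat{\pi}$ coincide until $H_B$ is first hit, and the same Markov-inequality, $\beta^t$ importance-ratio, and union-bound estimate giving bad-set mass at most $T\beta^T/n$ under either policy, with $|R(h_T)| \leq T R_{max}$ closing the bound. Your explicit attention to justifying the trajectory-level coincidence on $\bar{H}_G$ is exactly the bookkeeping the paper's ``analogous'' proof leaves implicit.
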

\begin{proof}
    Analogous to the proof of Proposition \ref{prop:high_prob_bound_1}.
\end{proof}

\begin{proposition}\label{prop:induction_bound_2}
We have that
\begin{equation}\label{eq:value_bound_2}
\begin{split}
    \hat{Q}_t(\hat{\sigma}_t(h_t), a) &\leq Q^{\tilde{\hat{\pi}}}(h_t,a) + \alpha_t, \\
    \hat{V}_t(\hat{\sigma}_t(h_t)) &\leq V^{\tilde{\hat{\pi}}}(h_t) +\alpha_t,
\end{split}
\end{equation}
where $\alpha_t$ satisfies the following recursion: $\alpha_{T} = 0$, and $\alpha_t = \sqrt{2 n \epsilon}R_{max} (T-t+1) + \alpha_{t+1}$.
\end{proposition}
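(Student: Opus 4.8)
The plan is to prove Proposition~\ref{prop:induction_bound_2} by backward induction on $t$, mirroring the argument of Proposition~\ref{prop:induction_bound_1} with every inequality reversed and with the auxiliary policy $\tilde{\hat{\pi}}$ (which plays $\hat{\pi}$ on $H_G$ and the optimal history-dependent policy on $H_B$) replacing $\tilde{\pi}$. The base case $t=T$ is immediate, since $\hat{V}_T(\hat{\sigma}_T(h_T)) = 0 = V^{\tilde{\hat{\pi}}}_T(h_T)$ and $\alpha_T = 0$, so both inequalities hold with equality.

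For the inductive step I would assume \eqref{eq:value_bound_2} at time $t+1$ and split on whether $h_t \in H_B$ or $h_t \in H_G$. On the bad set, $\tilde{\hat{\pi}}$ is defined to act optimally after $h_t$, so $Q^{\tilde{\hat{\pi}}}(h_t,a)$ attains the largest value any continuation can achieve; since the surrogate $\hat{Q}_t(\hat{\sigma}_t(h_t),a)$ is a bounded value it cannot exceed this optimum, and because $\alpha_t \geq 0$ the claimed bound $\hat{Q}_t(\hat{\sigma}_t(h_t),a) \leq Q^{\tilde{\hat{\pi}}}(h_t,a) + \alpha_t$ follows trivially. On the good set I would expand the difference $\hat{Q}_t(\hat{\sigma}_t(h_t),a) - Q^{\tilde{\hat{\pi}}}(h_t,a)$ into three telescoping pieces exactly as in Proposition~\ref{prop:induction_bound_1}: the reward gap $\mathbb{E}[r_t|c_t,a] - \mathbb{E}[r_t|h_t,a]$, the continuation gap $\mathbb{E}[\hat{V}_{t+1}(\hat{\sigma}_{t+1}(h_{t+1}))|h_t,a] - \mathbb{E}[V^{\tilde{\hat{\pi}}}_{t+1}(h_{t+1})|h_t,a]$, and the kernel-mismatch term $\sum_{c_{t+1}} \hat{P}(c_{t+1}|\hat{\sigma}_t(h_t),a)\hat{V}_{t+1}(c_{t+1}) - \mathbb{E}[\hat{V}_{t+1}(\hat{\sigma}_{t+1}(h_{t+1}))|h_t,a]$. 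The reward gap is bounded by $R_{max}\, d_{TV}(P_{s_{t+1},r_t|h_t,a}, P_{s_{t+1},r_t|c_t,a})$ and the kernel term by $R_{max}(T-t)$ times the same total variation; since $h_t \in H_G$ gives $D(h_t) < n\epsilon$, Pinsker's inequality yields $d_{TV} \leq \sqrt{2 n \epsilon}$, while the continuation gap is at most $\alpha_{t+1}$ by the induction hypothesis. Summing gives $\hat{Q}_t - Q^{\tilde{\hat{\pi}}} \leq \sqrt{2 n \epsilon}\,R_{max}(T-t+1) + \alpha_{t+1} = \alpha_t$.

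The value-function bound then follows by taking the constrained maximum of both sides: $\hat{V}_t(\hat{\sigma}_t(h_t))$ and $V^{\tilde{\hat{\pi}}}_t(h_t)$ are maxima of $\sum_a \pi(a)(\cdot)$ over the identical $\beta$-constrained set $\{\pi : D(\pi(h_t),\pi_k(h_t)) \leq \beta\}$, and since $\hat{Q}_t(\hat{\sigma}_t(h_t),a) \leq Q^{\tilde{\hat{\pi}}}(h_t,a) + \alpha_t$ holds pointwise in $a$, monotonicity of the maximum over the shared feasible set gives $\hat{V}_t(\hat{\sigma}_t(h_t)) \leq V^{\tilde{\hat{\pi}}}_t(h_t) + \alpha_t$. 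I expect the only delicate point to be the bad-set case: one must argue carefully that playing optimally on $H_B$ makes $Q^{\tilde{\hat{\pi}}}$ an upper bound for the surrogate $\hat{Q}_t$ --- the mirror image of the ``worst behavior'' lower bound used in Proposition~\ref{prop:induction_bound_1} --- and that $V^{\tilde{\hat{\pi}}}_{t+1}$ is consistently set to the optimal value $V_{t+1}$ on any history that has already entered $H_B$, so that no error leaks through the bad set into the induction.
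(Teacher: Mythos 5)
Your proposal matches the paper's proof of Proposition~\ref{prop:induction_bound_2} essentially step for step: backward induction with the same split into $H_G$ and $H_B$, the same three-term decomposition on the good set (reward gap, continuation gap bounded by $\alpha_{t+1}$ via the induction hypothesis, and kernel mismatch), the same Pinsker-based bounds $\sqrt{2n\epsilon}R_{max}$ and $\sqrt{2n\epsilon}R_{max}(T-t)$ yielding the recursion for $\alpha_t$, and the same constrained-maximum monotonicity argument for the value inequality. The bad-set step you flag as delicate is handled in the paper exactly as you propose --- asserting $\hat{Q}_t(\hat{\sigma}_t(h_t),a) \leq Q^{\tilde{\hat{\pi}}}(h_t,a)$ by definition of $\tilde{\hat{\pi}}$ taking the best possible actions after entering $H_B$ (with $V^{\tilde{\hat{\pi}}}_t(h_t) = V_t(h_t)$ there) --- so your proof is correct at the same level of rigor as the paper's.
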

\begin{proof}
    Similarly to the proof of Proposition \ref{prop:induction_bound_1}. The argument hold for $T$ by definition. Assume that Equation \eqref{eq:value_bound_2} holds at time $t+1$, and consider time $t$. If $h_t \in H_B$, then by definition $\hat{Q}_t(\hat{\sigma}_t(h_t), a) \leq Q^{\tilde{\hat{\pi}}}(h_t,a)$, since $\tilde{\hat{\pi}}$ will take the best possible actions after observing $h_t$. Otherwise, $h_t \in H_G$ and we have
\begin{equation*}
    \begin{split}
        &\hat{Q}_t(\hat{\sigma}_t(h_t), a) - Q^{\tilde{\hat{\pi}}}(h_t,a) \\
        =& \hat{r}(\hat{\sigma}_t(h_t), a) + \sum_{c_{t+1}} \hat{P}(c_{t+1}|\hat{\sigma}_t(h_t),a)\hat{V}_{t+1}(c_{t+1}) - \mathbb{E}[r_t|h_t, a] - \mathbb{E}\left[V^{\tilde{\hat{\pi}}}_{t+1}(h_{t+1})|h_t, a\right] \\
        =& \mathbb{E}[r_t|\hat{\sigma}_t(h_t),a_t] - \mathbb{E}[r_t|h_t, a]  \\
        &+ \mathbb{E}\left[\hat{V}_{t+1}(\hat{\sigma}_{t+1}(h_{t+1}))|h_t, a\right] - \mathbb{E}\left[V^{\tilde{\hat{\pi}}}_{t+1}(h_{t+1})|h_t, a\right] \\
        &+ \sum_{c_{t+1}} \hat{P}(c_{t+1}|\hat{\sigma}_t(h_t),a)\hat{V}_{t+1}(c_{t+1}) - \mathbb{E}\left[\hat{V}_{t+1}(\hat{\sigma}_{t+1}(h_{t+1}))|h_t, a\right] \\
        \leq& \sqrt{2 n \epsilon}R_{max} + \alpha_{t+1} + \sqrt{2 n \epsilon}R_{max}(T-t).
    \end{split}
\end{equation*}
For the second part, If $h_t \in H_B$, then by definition $\hat{V}_t(\hat{\sigma}_t(h_t)) \leq V^{\tilde{\hat{\pi}}}(h_t)$. Otherwise, $h_t \in H_G$ and we have
\begin{equation*}
\begin{split}
    \hat{V}_t(\hat{\sigma}_t(h_t)) - V^{\tilde{\hat{\pi}}}_t(h_t)  &=  \max_{\pi: D(\pi(h_t), \pi_k(h_t)) \leq \beta} \sum_{a} \pi(a) \hat{Q}_t(\hat{\sigma}_t(h_t), a) - \max_{\pi: D(\pi(h_t), \pi_k(h_t)) \leq \beta} \sum_{a} \pi(a) Q^{\tilde{\hat{\pi}}}_t(h_t, a)\\
    &\leq \max_{\pi: D(\pi(h_t), \pi_k(h_t)) \leq \beta} \sum_{a} \pi(a) (Q^{\tilde{\hat{\pi}}}_t(h_t, a) + \alpha_t) - \max_{\pi: D(\pi(h_t), \pi_k(h_t)) \leq \beta} \sum_{a} \pi(a) Q^{\tilde{\hat{\pi}}}_t(h_t, a) \\
    &=\alpha_t.
\end{split}
\end{equation*}

\end{proof}

\begin{corollary}
    We have
\begin{equation*}
\begin{split}
    &\mathbb{E}^{\pi^*} \left[ \sum_{t=0}^{T-1} r(s_t,a_t) \right] - \mathbb{E}^{\hat{\pi}} \left[ \sum_{t=0}^{T-1} r(s_t,a_t) \right] 
    \leq \mathbb{E}^{\tilde{\pi}} \left[ \sum_{t=0}^{T-1} r(s_t,a_t) \right] - \mathbb{E}^{\hat{\pi}} \left[ \sum_{t=0}^{T-1} r(s_t,a_t) \right] + 2T^2 R_{max}\beta^T/n\\
    &= \sum_{h_0} P(h_0) \left( V_0^{\tilde{\pi}}(h_0) - V_0^{\hat{\pi}}(h_0)\right) + 2T^2 R_{max}\beta^T/n\\
    &\leq \sum_{h_0} P(h_0) \left( V_0^{\tilde{\pi}}(h_0) - \hat{V}_0(\hat{\sigma}_0(h_0)) + \hat{V}_0(\hat{\sigma}_0(h_0)) - V_0^{\hat{\pi}}(h_0)\right) + 2T^2 R_{max}\beta^T/n \\
    &\leq \sum_{h_0} P(h_0) \left( V_0^{\tilde{\pi}}(h_0) - \hat{V}_0(\hat{\sigma}_0(h_0)) + \hat{V}_0(\hat{\sigma}_0(h_0)) - V_0^{\tilde{\hat{\pi}}}(h_0)\right) + 4T^2 R_{max}\beta^T/n \\
    &\leq 2\alpha_0 + 4T^2 R_{max}\beta^T/n 
\end{split}
\end{equation*}
\end{corollary}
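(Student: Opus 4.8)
The plan is to treat $c_t$ as an \emph{approximate} information state in the sense of \citet{subramanian2020approximate} and to run a simulation-lemma-style backward induction that converts the per-step prediction error into a horizon-accumulated value error. The first step is to convert the mutual-information guarantee of Assumption~\ref{assumption:eps} into a bound on the one-step predictive distributions. Using the chain rule for conditional mutual information together with the fact that $I(s_{t+1},r_t;c_t\mid h_t,a_t)=0$ (since $c_t$ is a deterministic function of $h_t$), I would show $I(s_{t+1},r_t;h_t\mid c_t,a_t)=I(s_{t+1},r_t;h_t\mid a_t)-I(s_{t+1},r_t;c_t\mid a_t)\le\epsilon$, and then rewrite this conditional mutual information as $\mathbb{E}_{h_t}\!\left[D_{KL}\!\left(P(s_{t+1},r_t\mid h_t,a_t)\,\|\,P(s_{t+1},r_t\mid c_t,a_t)\right)\right]\le\epsilon$. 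This is exactly the statement that $c_t$ fails to be a true information state only by $\epsilon$ in expected KL.

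The key difficulty is that this guarantee holds only \emph{in expectation} and only under the \emph{data-collection} distribution $P_{\pi_k}$, whereas the quantity to bound compares $\pi^*$ and $\hat\pi$, which are merely constrained to be $\beta$-similar to $\pi_k$. I would address this in two moves. First, apply Markov's inequality to split histories into a good set $H_G=\{h_t:D(h_t)<n\epsilon\}$ and a bad set $H_B=\{h_t:D(h_t)\ge n\epsilon\}$ for a free parameter $n$; then $P_{\pi_k}(H_B)\le 1/n$, and on $H_G$ a single application of Pinsker's inequality gives a total-variation bound of order $\sqrt{n\epsilon}$. Second, transfer these probabilities from $\pi_k$ to the evaluated policies: the $\beta$-similarity constraint yields the likelihood-ratio bound $P_{\pi}(h_t)/P_{\pi_k}(h_t)\le\beta^{t}$ for any admissible $\pi$, so the bad set is visited with probability at most $\beta^{T}/n$ under $\pi^*$ and $\hat\pi$.

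With this split in hand I would introduce two auxiliary policies: $\tilde\pi$ that agrees with $\pi^*$ on $H_G$ and plays the worst possible continuation on $H_B$, and $\tilde{\hat\pi}$ that agrees with $\hat\pi$ on $H_G$ and plays optimally on $H_B$. The bad set then contributes at most $2T^2R_{max}\beta^T/n$ to each comparison (horizon $T$ times reward bound $R_{max}$ times bad-set probability), while on the good set a backward induction propagates the per-step total-variation error into a cumulative value error $\alpha_0=O(\sqrt{n\epsilon}\,R_{max}T^2)$, where the $\sqrt2$ originates from Pinsker and the $T^2$ from summing the per-step horizon factors $(T-t)$. Sandwiching $\pi^*$ from above and $\hat\pi$ from below by the common dynamic-programming value $\hat V_0$ defined on $c_t$ (i.e.\ $V_0^{\tilde\pi}\le\hat V_0+\alpha_0$ and $\hat V_0\le V_0^{\tilde{\hat\pi}}+\alpha_0$) then collapses the performance gap to $2\alpha_0+4T^2R_{max}\beta^T/n$.

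The final step is to optimize the free parameter $n$. The bound has one term growing like $\sqrt{n\epsilon}$ (through $\alpha_0$) and one decaying like $1/n$; choosing $n=\epsilon^{-1/3}$ balances them, making each of order $\epsilon^{1/3}$ and yielding the stated bound $\epsilon^{1/3}R_{max}T^2(\sqrt2+4\beta^T)$. I expect the main obstacle to be precisely this reconciliation of an \emph{expected}, off-policy (under $\pi_k$) error with an on-policy, per-history requirement: the good/bad decomposition, the $\beta^{t}$ likelihood-ratio transfer, and the worst-/best-case auxiliary policies must be arranged so that the bad-set contribution and the good-set induction error can be traded off against each other cleanly through the single parameter $n$, while keeping the horizon dependence from blowing up.
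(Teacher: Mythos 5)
Your proposal is correct and follows essentially the same route as the paper's proof: the reduction of Assumption~\ref{assumption:eps} to an expected KL bound, the Markov-inequality split into $H_G$ and $H_B$ with free parameter $n$, the $\beta^t$ likelihood-ratio transfer from $\pi_k$ to the evaluated policies, the two auxiliary policies $\tilde{\pi}$ (worst-case on $H_B$) and $\tilde{\hat{\pi}}$ (best-case on $H_B$), the Pinsker-based backward induction giving $\alpha_0 = \frac{T^2}{2}\sqrt{2n\epsilon}\,R_{max}$, the sandwich through $\hat{V}_0$ yielding $2\alpha_0 + 4T^2 R_{max}\beta^T/n$, and the balancing choice $n=\epsilon^{-1/3}$. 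The only minor gloss is that the bad-set visitation probability requires a union bound over the $T$ time steps (giving $T\beta^T/n$ rather than $\beta^T/n$), but this factor is already correctly absorbed in your stated per-comparison contribution of $2T^2 R_{max}\beta^T/n$.
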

We note that the first inequality follows from Proposition \ref{prop:high_prob_bound_1}. The second equality stems from the fact that $P(h_0)=P(s_0)$, which is not affected by the choice of policy. The fourth transition follows from the addition and subtraction of $V_0^{\tilde{\hat{\pi}}}(h_0)$ and the use of Proposition \ref{prop:high_prob_bound_2}. The final inequality follows from Propositions \ref{prop:induction_bound_1} and \ref{prop:induction_bound_2}.

Let us bound $\alpha_0$. By the recursion $\alpha_t = \sqrt{2 n \epsilon}R_{max} (T-t+1) + \alpha_{t+1}$ we have that $\alpha_0 = \frac{T^2}{2}\sqrt{2 n \epsilon}R_{max}$. Setting $n = \epsilon^{-1/3}$ we obtain the desired result:
\begin{equation*}
\begin{split}
\mathbb{E}^{\pi^*} \left[ \sum_{t=0}^{T-1} r(s_t,a_t) \right] - \mathbb{E}^{\hat{\pi}} \left[ \sum_{t=0}^{T-1} r(s_t,a_t) \right]  &\leq T^2\sqrt{2} \epsilon^{1/3} R_{max} + 4T^2 R_{max}\beta^T \epsilon^{1/3} \\
&= \epsilon^{1/3} R_{max} T^2 (\sqrt{2} + 4\beta^T).
\end{split}
\end{equation*}



\end{proof}
\section{Environments}\label{appendix:environments}
\paragraph{Reacher Image:}
In this environment, a two-link planar robot needs to reach an unknown goal as in \citep{dorfman2021offline}, except that the goal is randomly chosen along a horizontal section of $0.48$. For each task, the agent receives a reward of +1 if it is within a small radius $r=0.05$ of the goal, and 0 otherwise.
\[r_t=\begin{cases}
			1, & \text{if $\lVert x_t 
 - x_{goal} \rVert_2\leq 0.05$}\\
            0, & \text{otherwise}
		 \end{cases}\]
where $x_t$ is the location of the robot’s end effector. The agent observes single-channel images of size 64 × 64 of the
environment]. The horizon is set to $50$ and we aggregate $k = 2$ consecutive episodes to form a trajectory of length $100$.
\paragraph{Panda Reacher:}
A Franka Panda robot tasked with placing the end effector at a goal on a 2d semi-circle of radius $0.15$ with fixed $z=0.15/2$ in 3d-space. The task is  adapted from the Reacher task in Panda Gym \citep{gallouedec2021pandagym}, with the goal occluded. For each task, the agent receives a reward of +1 if it is within a
small radius $r = 0.05$ of the goal, and 0 otherwise.
\[r_t=\begin{cases}
			1, & \text{if $\lVert x_t 
 - x_{goal} \rVert_2\leq 0.05$}\\
            0, & \text{otherwise}
		 \end{cases}\]
where $x_t$ is the current location of the end effector. The action space is 3-dimensional and bounded $[-1, 1]^{3}$. The agent observes a $3$-channel image of size $84 \times 84$ of the environment. We set the horizon to $50$ and aggregate $k=3$ consecutive episodes to form a trajectory of length $150$. 
\paragraph{Panda Wind:}
This environment is identical to Panda Reacher, except that the goal is fixed and for each task the agent experiences different wind with shifts the transition function, such that for an MDP $\mathcal{M}$ the transition function becomes
\[s_{t+1} = s_t + a_t + w_\mathcal{M}\]
where $w_{\mathcal{M}}$ is task specific and drawn randomly from a circle of radius $0.1$.
To get to the goal and stay there, the agent must learn to quickly adapt in a way that cancels the effect of the wind.

\section{Implementation Details}\label{appdx:impl_details}
In this section we outline our training process and implementation details, exact hyperparameters can be found in our code at \url{https://github.com/ec2604/ContraBAR}

The CPC component termed $g_{AR}$ consists of a recurrent encoder, which at time step t takes as input the tuple $(a_{t}, r_{t+1}, s_{t+1})$.
The state, reward and action are passed each through a different fc layer (or a
cnn feature-extractor for the states in image-based inputs). Our CPC projection head takes in $(c_t^a, z_{t+k})$ and passes it through one hidden layer of half the input size, with an ELU activation function.

\section{Architecture Details}\label{appdx:arch_details}

In this section we detail practical considerations regarding the CPC architecture.

In Section \ref{section:contrabar_arch} we described situations where $c_t$ does not need to encode belief regarding the task in order to distinguish between positive and negative observations. This is detrimental to learning a sound sufficient statistic as we would like $c_t$ to encode information regarding the reward and transition functions, as they are what set apart each task. In order to prevent this ``shortcut'' from being used, we can perform hard negative mining. We do this by using negative observations that cannot be distinguished from the positive observation without belief regarding the transition and reward functions. In the case where only the reward functions vary,  we can do this by taking the state and action of the positive observation and sampling a new reward function. We then calculate the respective reward and embed it as a negative observation alongside the original state and action. By having the positive and negative observations share the same state and action, we ensure that $c_t$ must be informative regarding the reward function in order to distinguish between positive and negative observations. We note that in this modified setup we use $s_t$ as the initial hidden state for the action-gru and include the original $c_t$ as input to the CPC projection head. This ensures that the gradient of the loss with respect to the action-gru does not affect $c_t$, which should encode information regarding the reward function. For the case where the environments only vary in reward functions, we propose a simpler solution which is to omit the action-gru, as the future actions except for $a_{t+k-1}$ do not affect $r_{t+k}$. We can simply use $(c_t,z_{t+k})$ as input to the CPC projection head -- we note that in this case $z_{t+k}$ is an embedding of the reward, state \textit{and action}. We found in practice that this simplification also worked well for the environments we used where the transitions varied.

The modified architecture where the action-gru is omitted can be seen in Figure \ref{fig:contrabar_nogru}. In Figure \ref{fig:antgoal_relabel} we demonstrate on the Ant-Goal environment that omitting the action-gru and reward-relabeling \textit{with} the action-gru yield similar results. Finally, we note that hard-negative mining can be done for varying transitions by sampling a random transition from the prior and simulating the transition to some $s_{t+k}$ given $s_t, a_t,\dots, a_{t+k}$.

\begin{figure}[ht]
  \centering
  \vskip 0.2in
  \begin{center}
  \centerline{\includegraphics[width=0.7\columnwidth]{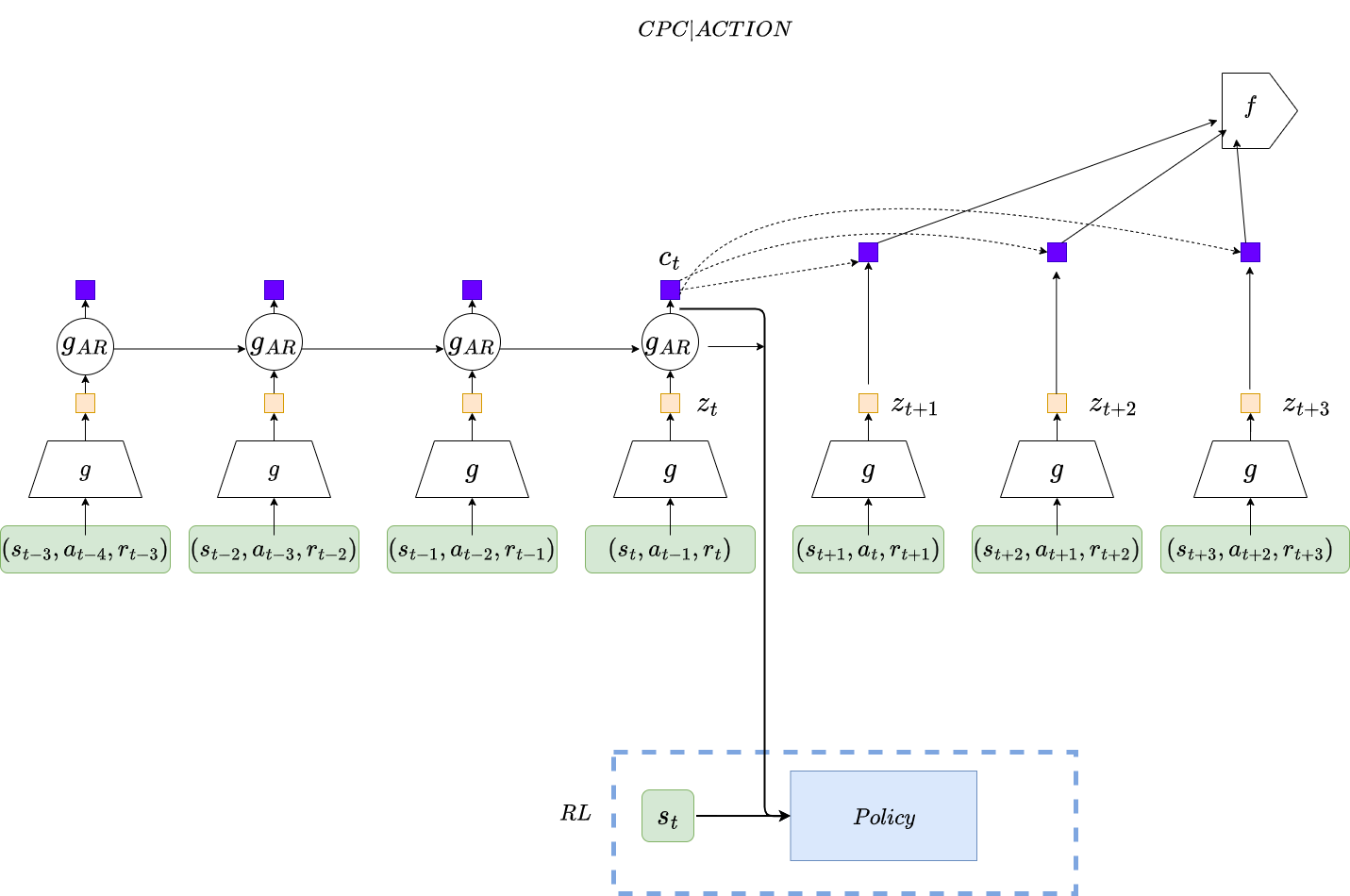}}
  \caption{ContraBAR architecture where the action-gru is omitted. The history encoding is the same, future observations however, do not include the action.}\label{fig:contrabar_nogru}
\end{center}
\vskip -0.2in
\end{figure}

\begin{figure}[!ht]
  \centering
  \begin{center}
  \centerline{\includegraphics[width=0.7\textwidth]{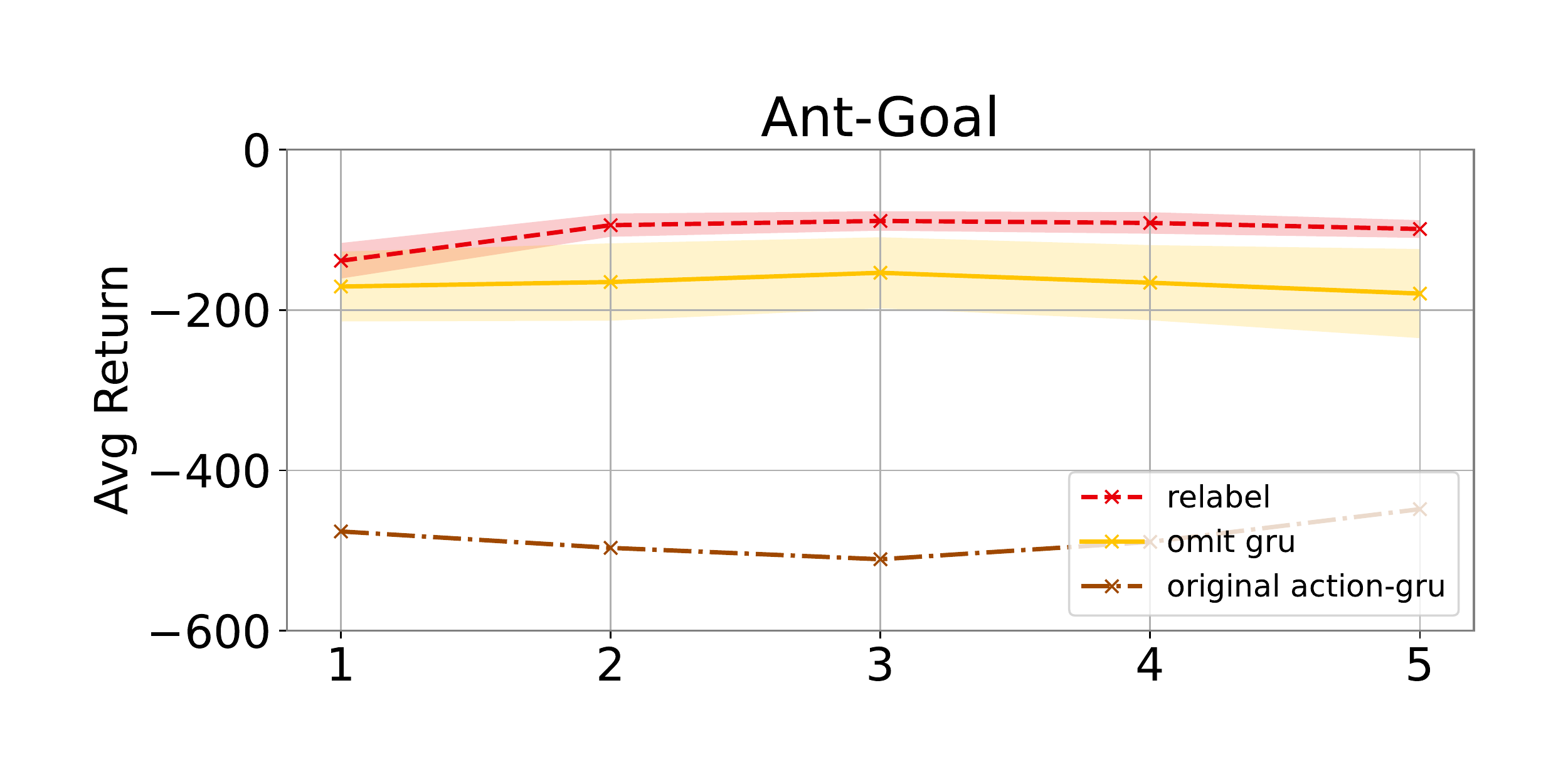}}
  \caption{The relabel and omit-gru architectures are run on the Ant-goal environment, with 10 seeds (evaluated over 10 environment samples each). The meta-trained policies are rolled out for 5 episodes to show how
they adapt to the task. Values shown are averages across tasks (95\% confidence intervals shaded). The results for both methods are similar, with slightly better results when the action-gru is omitted. We additionally show a single seed of the original action-gru architecture, evaluated on 10 environment samples -- we see that the sample mean is out of the confidence interval for both the relabel and omit-gru architectures.}\label{fig:antgoal_relabel}
\end{center}
\end{figure}

\section{Image-based inputs are computationally restrictive for VariBAD}\label{appendix:varibad_comp}
To understand the computational restriction in VariBAD \citep{zintgraf2020varibad}, we look to the formulation of the VAE objective. For every timestep $t$, the past trajectory $\tau_{:t}$ is encoded to infer the posterior $q(m|\tau_{:t})$, and used by the decoder to reconstruct \textit{the entire} trajectory including the future. In our analysis we restrict ourselves to the memory required for reconstruction of the reward trajectory, in an image-based domain, under the following assumptions: \begin{enumerate*}[label=(\arabic*)]
\item{images of dimension $d\times d \times 3$ are embedded to a representation of size $32$ via 3 convolutions with 32 channels each and kernels of size, with strides $2,2,1$ respectively.}
\item{actions are of size $2$ and embedded with a linear layer of size $16$ }
\item{The trajectory is of length 120, which is average for the domains in meta RL}
\end{enumerate*}. We draw attention to the fact that reward decoder in VariBAD receives $s_t, s_{t+1}$ as input, requiring us to take into consideration the memory required for embedding the image trajectory. On top of this, we also consider three times the size of the parameters of the image encoder (parameters, gradients and gradient moments).  We present the memory consumption as a function of the image dimensions in \cref{fig:varibad_mem}.
We note that in practice we often wish to decode multiple trajectories at once, and we also need to take into account encoder portion of the  model as well as its gradients.
\begin{figure}[!h]
   \centering
  \vskip 0.2in
  \begin{center}
  \centerline{\includegraphics[width=0.5\columnwidth]{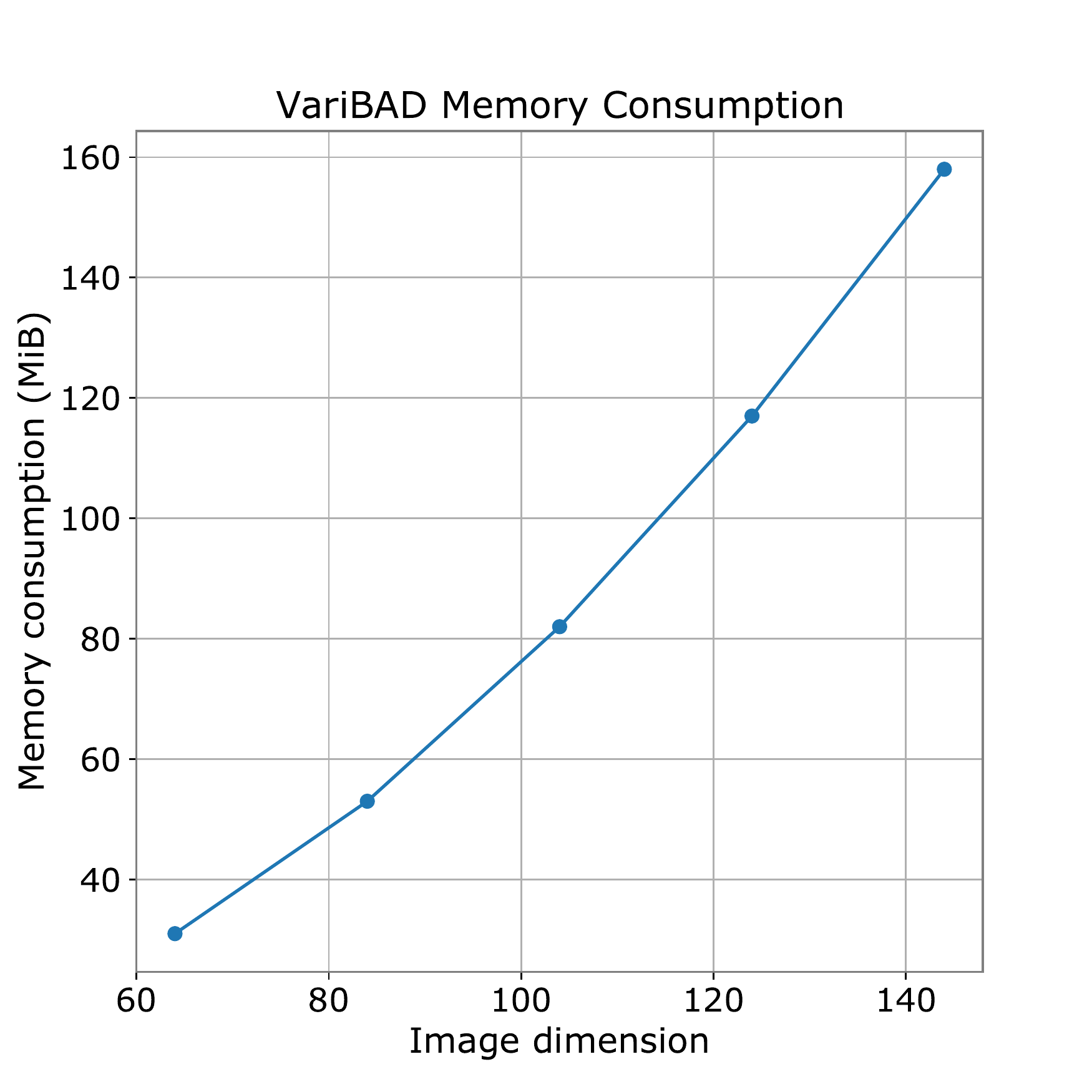}}
  \caption{VariBAD memory consumption for decoding the rewards for image-based input, as a function of the image dimension given a trajectory length of $200$}\label{fig:varibad_mem}
  \end{center}
\vskip -0.2in
\end{figure}

\end{document}